\definecolor{red2}{RGB}{240,191,211}
\definecolor{deepred}{RGB}{219,95,146}
\definecolor{blue2}{RGB}{205,226,247}
\definecolor{deepblue}{RGB}{88,161,230}
\theoremstyle{plain}
\newtheorem{theorem}{Theorem} 
\newtheorem{definition}{Definition} 
\newtheorem{remark}{Remark}
\newcommand{\redtext}[1]{\textcolor{black}{#1}} 
\definecolor{mygray}{gray}{.9}
\def\hlinew#1{%
\noalign{\ifnum0=`}\fi\hrule \@height #1 \futurelet
\reserved@a\@xhline}
\begin{document}
\title{PI-H2T: Enhancing Long-Tailed Visual Recognition with Permutation-Invariant and Head-to-Tail Feature Fusion}

\author{Mengke Li,
        Zhikai Hu,
        Yang Lu,~\IEEEmembership{Member,~IEEE,}
        Weichao Lan,
        Yiu-ming~Cheung,~\IEEEmembership{Fellow,~IEEE,}\\
        Hui Huang*,~\IEEEmembership{Senior Member,~IEEE}
\IEEEcompsocitemizethanks{
\IEEEcompsocthanksitem This work was supported in part by Guangdong S\&T Program (2024B0101050004), NSFC (62306181), Guangdong Basic and Applied Basic Research Foundation (2024A1515010163), National Key Laboratory of Radar Signal Processing (JKW202403), Shenzhen S\&T
Program (RCBS20231211090659101, KJZD20240903100022028), RGC GRF (12201323, 12202924), and Guangdong Provincial Key Laboratory of Visual Media and Multidimensional Intelligence.
\IEEEcompsocthanksitem Mengke Li is with the Guangdong Provincial Key Laboratory of Visual Media and Multidimensional Intelligence, CSSE, Shenzhen University, China (E-mail: csmengkeli@gmail.com). 

\IEEEcompsocthanksitem Yang Lu is with Fujian Key Laboratory of Sensing and Computing for Smart City, School of Informatics, Xiamen University, China (E-mail: luyang@xmu.edu.cn).

\IEEEcompsocthanksitem Zhikai Hu and Yiu-ming Cheung are with the Department of Computer Science, Hong Kong Baptist University, China (E-mail: cszkhu@comp.hkbu.edu.hk; ymc@comp.hkbu.edu.hk). 

\IEEEcompsocthanksitem Weichao Lan is with Inspur Smart City Technology Co., Ltd., China (E-mail: lweichao@aliyun.com).

\IEEEcompsocthanksitem Hui Huang is the corresponding author with the Guangdong Provincial Key Laboratory of Visual Media and Multidimensional Intelligence, CSSE, Shenzhen University, China (E-mail: hhzhiyan@gmail.com).}}

\markboth{Submitted to IEEE Transactions on Pattern Analysis and Machine Intelligence}{M. Li \MakeLowercase{\textit{et al.}}: Long-Tailed Visual Recognition via Permutation-Invariant Head-to-Tail Feature Fusion}

\IEEEtitleabstractindextext{%
\begin{abstract}
The imbalanced distribution of long-tailed data presents a significant challenge for deep learning models, causing them to prioritize head classes while neglecting tail classes. 
Two key factors contributing to low recognition accuracy are the deformed representation space and a biased classifier, stemming from insufficient semantic information in tail classes.
To address these issues, we propose permutation-invariant and head-to-tail feature fusion (PI-H2T), a highly adaptable method. 
PI-H2T enhances the representation space through permutation-invariant representation fusion (PIF), yielding more clustered features and automatic class margins.
Additionally, it adjusts the biased classifier by transferring semantic information from head to tail classes via head-to-tail fusion (H2TF), improving tail class diversity.
Theoretical analysis and experiments show that PI-H2T optimizes both the representation space and decision boundaries. 
Its plug-and-play design ensures seamless integration into existing methods, providing a straightforward path to further performance improvements. 
Extensive experiments on long-tailed benchmarks confirm the effectiveness of PI-H2T.
The source code is available at \url{https://github.com/Keke921/H2T}.
\end{abstract}
\begin{IEEEkeywords}
Long-tailed recognition; Representation and classifier optimization; Representation learning; Classifier calibration.
\end{IEEEkeywords}}

\maketitle

\section{Introduction} \label{sec:intro}
Deep learning confronts the dilemma of training models on data with substantial class imbalance, named long-tailed data, which is characterized by an abundance of samples from a few classes (head classes) and a scarcity of samples from others (tail classes)~\cite{shi2024LIFT}.
This imbalance impedes model performance, particularly affecting the recognition accuracy of tail classes. 
Consequently, this issue has emerged as a critical bottleneck constraining the progress of deep learning models.
As long-tailed data are more applicable to the empirical data distributions encountered in the real world~\cite{reed2001pareto, LiuZW19LTOW, zhang2021survey}, long-tailed visual recognition has attracted considerable attention recently.

To alleviate the issue of severe class imbalance within long-tailed data, numerous methods have been proposed in recent years. 
These approaches can primarily be classified into three categories~\cite{shi2024LIFT, li2022advances}: 1) data manipulation~\cite{Nitesh2002SMOTE, Huang2016CVPR,kim2020m2m, Wang2021RSG, Li2021MetaSAug, ParkS2022Majority}, 2) model modification~\cite{bbn20,decouple20,WangXD21RIDE, Wang2020the, LiBL2022Trustworthy, Jin2023shike} and 3) output adjustment~\cite{Tsung2020Focal,cui2019class,RenJW2020Balanced,adjustment21,Hong2021CVPR, LiMK2022GCL}. 
Besides training deep neural networks (DNNs) from scratch, recent studies, including BALLAD~\cite{ma2021BALLAD}, VL-LTR~\cite{tian2022vlltr}, LPT~\cite{Dong23LPT} and LIFT~\cite{shi2024LIFT}, to name a few, introduce prior knowledge from foundation models trained on large scale datasets, such as CLIP~\cite{radford2021clip} and ImageNet-21K~\cite{imagenet21k}, to aid the poor performance of models on tail classes.
These methods have yielded remarkable advancements in addressing the challenges of long-tailed data.

\begin{figure}
    \centering
    \includegraphics[width=0.9\linewidth]{./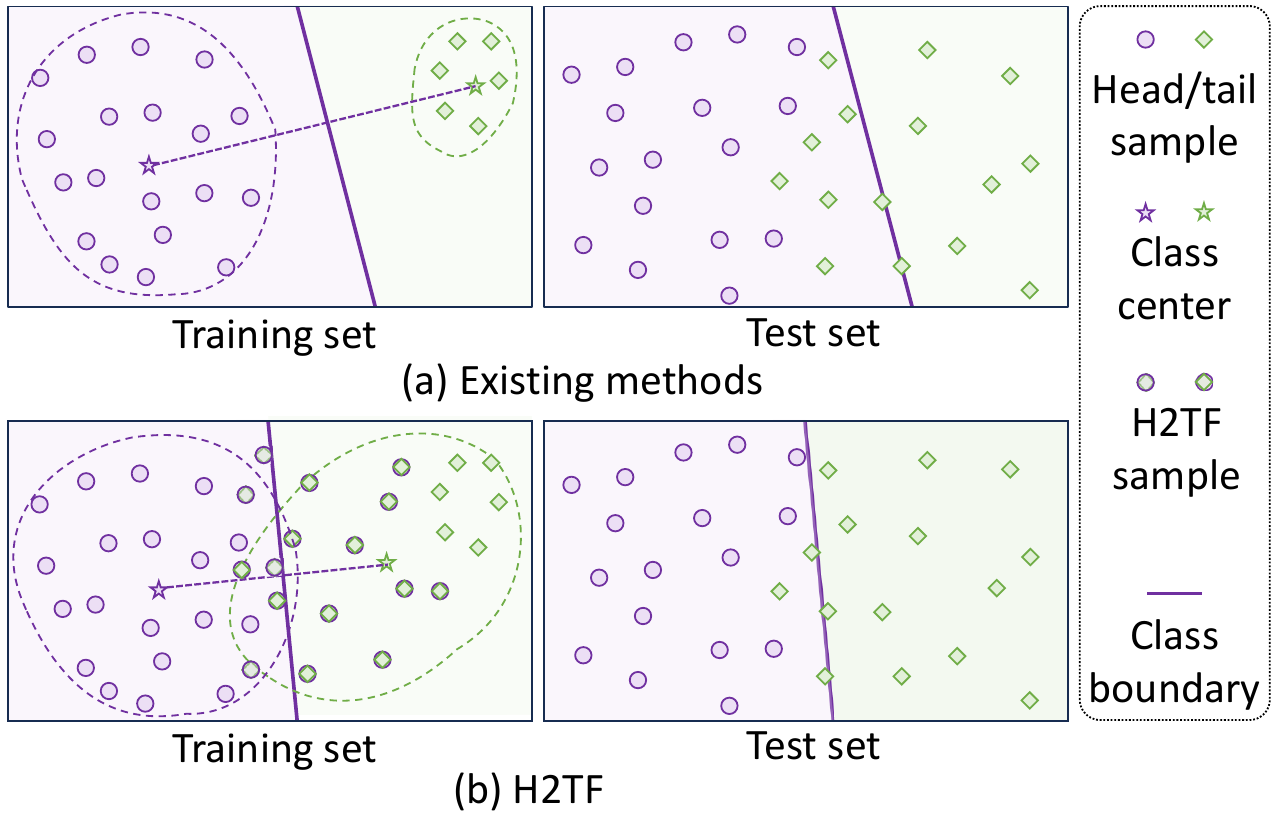}  
    \caption{Comparison between decision boundaries produced by (a) existing methods, and (b) proposed H2TF (stage 2 of PI-H2T).}
\label{fig:dec_bnd}
\vspace{-1em}
\end{figure}

Despite the success of existing work in achieving robust predictions, these approaches generally optimize representation learning and classifier calibration separately, lacking a unified, continuous optimization framework.
Consequently, one aspect of either feature learning or classifier calibration is always left suboptimal.
In the representation space, increasing inter-class margins~\cite{Kaidi2019, Deng2019ArcFace} and/or refining the compactness of the intra-class feature distribution~\cite{wang2018cosface, Wang2017NormFace} have been shown to improve model generalization capabilities~\cite{Zhikai2021MTFH, HuZK23Joint}.
While a clear margin is effective for balanced data, it does not address the scarcity of tail class samples. 
Consequently, the deformation of the representation space and the biased decision boundary in the context of long-tailed data may still persist.
Parameter-efficient fine-tuning (PEFT) techniques leverage extensive pre-trained data to aid the sparse representations of tail classes. 
Nonetheless, many of these large datasets, such as CLIP~\cite{radford2021clip} and JFT-300M~\cite{sun2017revisiting}, are often not publicly accessible~\cite{imagenet21k}.
There remains uncertainty regarding model biases and preferences after the post-fine-tuning, and the decision boundary, primarily influenced by head classes, remains uncalibrated.
Taking the binary case in the embedding space of the obtained backbone as an example, the decision boundary is usually the midline connecting the two class centroids. 
The head classes have been sufficiently sampled, and their embedding space is fully occupied~\cite{xiao2021does}. 
In contrast, the tail class suffers from a scarcity of samples, leading to sparsely distributed semantic regions. 
The bias in the tail centroid persists due to head squeeze, even if the existence of clear margins. 
As a result, during the inference stage, a considerable number of samples that differ from the training set emerge, causing the erosion of well-defined margins.
Consequently, numerous tail class samples are misclassified as head classes, as illustrated in Fig.~\ref{fig:dec_bnd}(a).

In this paper, we propose two progressive and modular enhancements that compact feature distributions and automatically expand class margins, while alleviating the over-compression of the feature space of tail classes by the classifier. 
We term the overall approach Permutation-Invariant and Head-to-Tail Feature Fusion (PI-H2T).
The first component, Permutation-Invariant Feature Fusion (PIF), implicitly enlarges the inter-class margins by refining the feature representation in stage 1, which in turn lays a solid foundation for effective classifier calibration in stage 2.
We incorporate a permutation-invariant (PI) operation that promotes inner-set interactions and reduces feature redundancy. 
The PI features are then fused with the original features to preserve essential order-specific cues (e.g., distinguishing ocean spray from clouds), while enhancing the overall representation quality. 
This margin-aware design ensures that the classifier has sufficient flexibility for effective boundary calibration in stage 2.
At this stage, the second component, Head-to-Tail Fusion (H2TF), is applied to mitigate classifier bias by explicitly augmenting tail class features with semantically rich representations from head classes.
This is achieved by partially substituting tail features with those from head samples, under the assumption that rare tail instances are prone to misclassification due to their sparse and noisy representation. 
Transferring the head semantics can effectively fill the tail semantic area and the category overlap, compelling the decision boundary to shift toward a more balanced and optimal position, as illustrated in Fig.~\ref{fig:dec_bnd}(b). 
As a result, the generalization on tail classes can be improved.

PI-H2T is designed to be lightweight and architecture-agnostic, introducing only negligible additional parameters and computational overhead, thus allowing seamless integration into existing frameworks.
We implement both PIF and H2TF using simple and modular designs.
Specifically, PIF employs channel-wise mean pooling as a permutation-invariant operation, which is parameter-free, and only introduces a minimal number of learnable weights when fusing the PI features with the original backbone features. 
This ensures enhanced representation while maintaining compatibility with diverse architectures.
For H2TF, we propose a straightforward strategy that reuses extracted features without retraining the backbone. 
A fraction of features from class-balanced data is selectively fused with those from head-biased data based on their distances to the class center, allowing tail class features to benefit from enriched head semantics. 
Importantly, H2TF is only applied during training for classifier calibration and is excluded during inference, ensuring that the feature extraction process remains unchanged and free from semantic confusion.
Overall, PI-H2T serves as a practical and flexible module for enhancing long-tailed recognition across various existing methods.

The main contributions of this paper are as follows:
\begin{itemize}[leftmargin=*, labelsep=5pt]
    \item We propose permutation-invariant feature fusion (PIF), which adds merely two parameters to the model, significantly enhancing feature representation and enabling more optimal decision boundaries with minimal computational overhead.
    \item We propose head-to-tail fusion (H2TF), which transfers semantic information from head classes to augment tail class representations without requiring additional data or network parameters. This mechanism effectively leverages the capacity of a well-trained backbone and facilitates the learning of more discriminative decision boundaries.
    \item We devise simple fusion strategies. 
    PIF extracts and integrates PI features with the original ones, while H2TF selectively combines a fraction of the features from an instance-wise sampling branch with those of a class-balanced sampling branch. 
    These strategies enable PI-H2T to be implemented with ease to the original backbone.     
    \item The integration of the proposed PI-H2T module into existing architectures, including single models, multi-expert frameworks, and PEFT methods, leads to substantial performance improvements. 
    Extensive experiments on standard long-tailed benchmarks further demonstrate the versatility and broad applicability of the proposed method across various model paradigms.
\end{itemize}

A preliminary conference version of this work has been published in AAAI 2024~\cite{LiMK2024H2T}. 
In this extended version, we present three major improvements:
\begin{itemize}[leftmargin=*, labelsep=5pt]
\item First, to address the limitation of H2T, which only adjusts the decision boundary without altering the feature distribution, we propose PIF to enhance the representation space, enabling better calibration of decision boundaries.
An insightful analysis of the rationale behind PIF is presented, which offers theoretical support for its utility.
\item Second, in the H2TF stage of PI-H2T, we refine the randomly selected feature maps by fusing them with features after pooling to eliminate the need for explicit feature map selection. 
Additionally, unlike the manual hyperparameter selection of the fusion ratio in the conference version of H2T, the fusion ratio in H2TF is automatically determined by the distance between the feature and its corresponding class center, eliminating the need for manual tuning.
\item Third, compared to H2T, we generalize PI-H2T from deep neural networks (DNNs) to the multi-head self-attention (MHSA) mechanism, demonstrating its versatility.
\end{itemize}
\redtext{Section~\ref{sec:resluts} presents extensive experiments on multiple long-tailed benchmarks under CE and GCL settings, providing a comprehensive evaluation of the proposed framework and demonstrating overall advantages over H2T.
}


\section{Related Work}
\label{sec:related_work}
\subsection{DNN-based Long-tailed Learning}
Deep neural networks (DNNs) have achieved substantial progress in long-tailed visual recognition over the past decade. 
From the perspective of data processing within deep models, existing methods can be broadly categorized into three levels: input-level, representation-level, and output-level approaches.
At the \textbf{input level}, data manipulation is implemented through re-weighting/sampling~\cite{cui2019class} and/or data augmentation techniques~\cite{CubukED19AutoAugment, CubukED20Randaugment, chu2020feature,Perrett2023CVPR}. 
These methods have been proven to be effective in increasing the classification accuracy for DNNs and also have demonstrated their efficacy on long-tailed data~\cite{RenJW2020Balanced, CuiJQ2021parametric, liJ2022nested}.
At the \textbf{representation level}, modifications to the model structure are undertaken to yield a superior representation of features.
For example, decoupling representation based methods~\cite{decouple20, mislas21} and BBN based methods~\cite{bbn20,DisAli21} decouple representation learning and classifier training. 
These methods initially acquire representations from the original long-tailed data. 
Subsequently, they retrain the classifier utilizing either class-balanced sampling data~\cite{decouple20} or reverse sampling data~\cite{bbn20}.
Ensembling learning encompasses redundant ensembling~\cite{WangXD21RIDE, LiBL2022Trustworthy,liJ2022nested, Cai2021ACE}, which aggregates separate classifiers or networks in a framework of multiple experts, and complementary ensembling~\cite{bbn20, Cui2022reslt, XiangLY2020LFME}, which involves statistical selection of different data divisions. 
Studies have demonstrated that ensembling methods, particularly redundant ensembling, can achieve state-of-the-art performance and generate more robust predictions by reducing model variance~\cite{LiBL2022Trustworthy, WangXD21RIDE} and/or increasing data diversity~\cite{liJ2022nested, LiY2020Overcoming, XiangLY2020LFME}. 
At the \textbf{output level}, existing methods enhance model representation and adjust the classifier by calibrating the logit of the model according to certain criteria.
For example, Re-margining methods~\cite{Kaidi2019, LiMK2022KPS, LiMK2022GCL, Wang2021Seesaw} introduce class-size-based constants to create larger margins for tail classes compared to head classes, improving the separability of tail classes. 
This helps alleviate overfitting in minority classes and enhances model generalization.

\subsection{MHSA-based Long-tailed Learning}
Multi-head self-attention (MHSA) mechanisms have recently gained significant traction, leading to substantial performance improvements in various computer vision applications.
Visual Transformers (ViT)~\cite{Dosovitskiy21vit} is the first to introduce MHSA-based transformer architectures~\cite{vaswani2017attention} into the field of computer vision.
Building on these advancements, LiVT~\cite{Xu2023Learning} further investigates the training of ViTs from scratch on long-tailed datasets by incorporating masked generative pertaining in the first stage and a balanced binary cross-entropy (Bal-BCE) loss in the second stage.
Recent advances in CV have leveraged the power of \textbf{pre-trained MHSA-based models}, exemplified by ViT pre-trained on the visual-only dataset ImageNet-21K~\cite{imagenet21k}, and the visual-linguistic model CLIP~\cite{radford2021learning}.
Different from the traditional paradigm of training models from scratch, the recently proposed PEFT techniques~\cite{jia2022visual, chen2022adaptformer,hu2021lora} have been adopted in long-tailed learning, such as RAC~\cite{Long2022RAC}, VL-LTR~\cite{tian2022vl}, LPT~\cite{Dong23LPT}, and LIFT~\cite{shi2024LIFT}, to name a few.
These methods demonstrate that fine-tuning pre-trained models with informative priors can significantly improve the performance on long-tailed visual recognition tasks. 
For instance, 
LPT fine-tunes the ViT model pre-trained on ImageNet-21K using visual prompt tuning~\cite{jia2022visual}, implementing a two-stage training strategy for class-specific and class-shared prompts.
LIFT reveals that heavy fine-tuning can actually degrade performance on tail classes.
It proposes initializing the classifier with linguistic representation from the text encoder in CLIP and fine-tuning ViT with a few epochs. 
Nevertheless, it is worth noting that their performance in tail classes still exhibits inferior results compared to that in head classes.

\redtext{
\subsection{General Long-Tailed Techniques}
General long-tailed techniques can be applied to various backbone architectures and generally include two types.
\textbf{Data augmentation} increases data diversity, thereby improving model representation for tail classes. 
Classical techniques includes flipping, rotating, cropping, and padding~\cite{he2016deep}. 
Recent attempts adapt augmentation strategies based on class distributions in long-tailed settings.
For example, CUDA~\cite{ahn2023cuda} learns class-specific augmentation strengths.
DODA~\cite{wang2024kill} maintains an augmentation distribution to select strategies per class. 
MixUp~\cite{Hongyi2018} and CutMix~\cite{YunSD2019cutmix} combine image–label pairs and are widely used. 
Bag of tricks~\cite{zhang2021bag} and MiSLAS~\cite{mislas21} applies MixUp during representation learning to enhance feature quality. 
GLMC~\cite{du2023global} synthesizes mixed head–tail samples utilizing both MixUp and CutMix. 
Learning from Neighbors~\cite{zhao2025learning} enhances dataset granularity by leveraging LLMs to find auxiliary categories and retrieve relevant images online.
}
\redtext{
\noindent\textbf{Logit adjustment} methods~\cite{LiMK2022GCL, adjustment21, xu2021towards,Kaidi2019, Aimar2023BalPoE} modify class logit to balance feature space distribution and reduce classifier bias. 
For example, LDAM loss~\cite{Kaidi2019} subtracts a class-specific margin from the ground-truth logit, assigning larger margins to tail classes. 
BalPoE~\cite{Aimar2023BalPoE} combines multiple adjusted logits with different strengths to collaboratively optimize the model.
Such methods are widely adopted in both DNN- and MHSA-based long-tailed learning, demonstrating strong performance. 
For example, LIFT~\cite{shi2024LIFT} uses Logit Adjustment~\cite{adjustment21} as its loss function, while LPT~\cite{Dong23LPT} and GNM-PT~\cite{Li2024GNM} employ GCL~\cite{LiMK2022GCL}. 
DBM~\cite{son2025difficulty} extends this approach by assigning larger margins to more difficult samples, functioning as an additive method compatible with various backbones.
}

\section{Methodology}
\label{sec:method}
\subsection{Preliminaries}
We formally define the basic notation used in this paper before going into detail about our proposed method.
Let $\{x,y\}$ represent one input image and its corresponding label. 
The total number of classes is denoted by $C$, therefore, we have $y \in \{0, 1, \cdots, C-1\}$. 
The training set includes $N$ samples. 
Suppose that class $i$ has $n_i$ training samples. 
Then $N=\sum_i n_i$. 
For simplicity, we suppose $n_0 \geq n_1 \geq \cdots \geq n_{C-1}$. 
Feeding $x$ into the backbone, we can obtain its feature maps before the last pooling layer, denoted $\mathcal{F} = \left[F_{0}, F_{1}, \cdots, F_{d-1} \right] \in \mathbb{R}^{w_{F} \times h_{F} \times d}$, where $w_{F}$ and $ h_{F}$ represent the width and height of the feature map, respectively.
$d$ is the dimension of features in the embedding space. 
The representation after the last pooling layer is ${\mathbf{f}} \in \mathbb{R}^{d} $.
The weight of the linear classifier is represented as $\mathbf{W}=[\mathbf{w}_0, \mathbf{w}_1, \cdots, \mathbf{w}_{C-1}] \in \mathbb{R}^{d \times C} $, where $\mathbf{w}_i$ represents the classifier weight for class $i$. 
We use the subscripts $h, m$, and $t$ to indicate head, medium, and tail classes, respectively. 
$z_i=\mathbf{w}_i^T\mathbf{f}$ represents the predicted logit of class $i$, where the subscript $i=y$ denotes the target logit and $i\neq y$ denotes the non-target logit.
An overview of the notational conventions is presented in Table~\ref{tab:symbols}.
\begin{table}[!t]
\centering
\caption{Summary of Notation and Symbols}
\label{tab:symbols}
 \resizebox{1.\linewidth}{!}{
\renewcommand{\arraystretch}{1.3}
\begin{tabular}{c | l  }
\hline
\textbf{Symbol} & \textbf{Description} \\
\hline
\redtext{$x, y$} & \redtext{Input image and its ground-truth label, $y \in \{0, \dots, C-1\}$.} \\
\redtext{$C, N, n_i$} & \redtext{Total number of classes, samples, and samples per class $i$.} \\
$\mathcal{F}$ & Feature maps before final pooling layer, $\mathcal{F} \in \mathbb{R}^{w_F \times h_F \times d}$. \\
$\mathbf{f}$ & Pooled feature representation, $\mathbf{f} \in \mathbb{R}^{d}$. \\
$d$ & Feature dimension (embedding size). \\
$\mathbf{W}$ & Classifier weights matrix, $\mathbf{W} \in \mathbb{R}^{d \times C}$. \\
$\mathbf{w}_i$ & Weight vector of the $i$-th class, $\mathbf{w}_i \in \mathbb{R}^{d}$. \\
$z_i$ & Predicted logit of class $i$, $z_i = \mathbf{w}_i^\top \mathbf{f}$. \\
$h, m, t$ & Indices for head/medium/tail classes. \\
\hline
\end{tabular}
}
\end{table}

\subsection{Problem Analysis and Method Overview} \label{sec:motivation}
We progressively solve two pivotal challenges in long-tailed learning: 1) the \textbf{deformed representation space}, and 2) the \textbf{biased classifier}.
In the representation learning stage, we learn representative features while automatically reserving an adequate and appropriate margin to facilitate the consequent correction of classifier bias.

\redtext{
First, sparse tail-class samples lead to poor representations that are prone to overfit spurious patterns, including order-dependent artifacts. 
To address this, we introduce Permutation-Invariant Feature fusion (PIF), an architecture-agnostic module that removes input-order dependence and fuses permutation-invariant features with the original embeddings. 
PIF improves feature robustness and provides a lightweight (two-parameter) mechanism that aids downstream classifier calibration.
}

\redtext{
Second, classifier bias arises from the imbalance between head and tail classes, which drives predictions toward head classes. 
Simply up-weighting tail classes can improve tail accuracy but risks overfitting. 
Instead, we increase effective tail diversity by filling inter-class margins with semantically meaningful samples borrowed from head classes. 
Concretely, Head-to-Tail Fusion (H2TF) augments tail-class support during the stage-2 classifier adjustment, thereby producing better-calibrated decision boundaries without changing the backbone.
}

\redtext{
A detailed analysis is provided in Appendix A.} 

\subsection{Permutation-Invariant Feature Fusion}
\label{sec:PIF_method}
We incorporate the concept of permutation invariance, as introduced in prior works~\cite{NIPS2017DeepSets,NIPS2019DeepSetNet}, to enhance the robustness and generality of feature representations.
\begin{definition} A function $g$: $\mathbb{R}^ {d_1 \times d} \rightarrow \mathbb{R}^{d_2}$ is \textbf{permutation-invariant} iff it satisfies: 
\begin{equation}
    g(\mathbf{X}) = g(\mathbf{PX}),
\end{equation}
for any permutation matrices $\mathbf{P}  \in \mathbb{R}^{d_1 \times d_1}$.
\end{definition}

In the proposed framework, we begin by reshaping the input feature map $\mathcal{F} \in \mathbb{R}^{w_F \times h_F \times d}$ into a 2D matrix $\mathcal{F}'\in \mathbb{R}^ {w_F \cdot h_F \times d}$.
We then apply a permutation-invariant function $g_{PI}: \mathbb{R}^ {w_F \cdot h_F \times d} \rightarrow \mathbb{R}^{w_F \cdot h_F} $ to obtain a compact set of permutation-invariant (PI) features $F_{PI} $:
\begin{equation}
    F'_{PI} = g_{PI} (\mathcal{F'}).
\end{equation}
Subsequently, we reshape $F'_{PI}$ to make $F_{PI} \in \mathbb{R}^{w_F \times h_F }$ and then fuse the obtained PI features with the original feature maps $\mathcal{F}$. 
To achieve more stable and rapid convergence, we model the fusion of PI features by mimicking the residual structure of ResNet~\cite{he2016deep}. 
The fusion process is as follows:
\begin{equation}\label{eq:ini_fuse}
    \mathcal{F}_{fuse} = a \cdot (\mathcal{F}-F_{PI})+b\cdot \mathcal{F},
\end{equation}
where $a$ and $b$ are learnable parameters. 
\redtext{The fused feature $\mathcal{F}_{fuse}$ is subsequently passed through a pooling layer for classification.}
It is important to note that the pooling operation is also permutation-invariant.
However, this permutation-invariant applies to the spatial elements within each feature map rather than across channels.

For the selection of $g_{PI}$, we utilize the mean operation guided by the following theorem. 
\begin{theorem} 
The mean operation across channels on feature maps is permutation-invariant. 
\end{theorem}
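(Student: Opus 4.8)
The plan is to verify the permutation-invariance of the mean-across-channels operation directly from the definition. First I would make the claim precise: given the reshaped feature tensor $\mathcal{F}' \in \mathbb{R}^{w_F \cdot h_F \times d}$, write the mean operation as the map $g_{PI}(\mathcal{F}') = \frac{1}{d}\mathcal{F}' \mathbf{1}_d$, where $\mathbf{1}_d \in \mathbb{R}^{d}$ is the all-ones vector; equivalently, the $k$-th entry of the output is $\frac{1}{d}\sum_{j=0}^{d-1} \mathcal{F}'_{k,j}$. Here the permutation we care about is the one acting on the channel index $j$, i.e. on the columns of $\mathcal{F}'$, so the relevant group action is $\mathcal{F}' \mapsto \mathcal{F}'\mathbf{P}$ for a $d \times d$ permutation matrix $\mathbf{P}$.

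Next I would carry out the one-line computation: for any $d \times d$ permutation matrix $\mathbf{P}$, we have $\mathbf{P}\mathbf{1}_d = \mathbf{1}_d$, since permuting the entries of the all-ones vector leaves it unchanged. Hence $g_{PI}(\mathcal{F}'\mathbf{P}) = \frac{1}{d}(\mathcal{F}'\mathbf{P})\mathbf{1}_d = \frac{1}{d}\mathcal{F}'(\mathbf{P}\mathbf{1}_d) = \frac{1}{d}\mathcal{F}'\mathbf{1}_d = g_{PI}(\mathcal{F}')$, which is exactly the defining identity of Definition~1 with $d_1 = w_F \cdot h_F$ and $d_2 = w_F \cdot h_F$. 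Writing it entrywise as an alternative: the $k$-th output coordinate is $\frac{1}{d}\sum_j \mathcal{F}'_{k,\pi(j)} = \frac{1}{d}\sum_j \mathcal{F}'_{k,j}$ because summation is invariant under reindexing by the permutation $\pi$ associated to $\mathbf{P}$.

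The only real subtlety — and the step I would be most careful about — is not a computational obstacle but a bookkeeping one: making sure the permutation in the theorem statement is understood as acting on the channel dimension (the columns, indexed by $j \in \{0,\dots,d-1\}$), not on the spatial dimension. The paper already flags this distinction in the remark immediately following the theorem (pooling is permutation-invariant with respect to spatial positions within a feature map, whereas $g_{PI}$ here is permutation-invariant with respect to channels), so I would state explicitly that the mean operation treats each spatial location independently and averages over channels, which is why it fits the signature $g: \mathbb{R}^{d_1 \times d} \to \mathbb{R}^{d_2}$ with the permutation acting on the second (size-$d$) factor. With that clarification in place the proof is immediate, so I would keep it short.
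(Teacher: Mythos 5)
Your proof is correct and takes essentially the same route as the paper's: the paper likewise reduces the claim to the fact that $\frac{1}{d}\sum_{i} F'_{\pi(i)} = \frac{1}{d}\sum_{i} F'_{i}$ by commutativity of addition, which is exactly your entrywise argument (your matrix identity $\mathbf{P}\mathbf{1}_d = \mathbf{1}_d$ is just a repackaging of the same observation). Your explicit note that the permutation acts on the channel index rather than the spatial index is a helpful clarification consistent with the paper's remark following the theorem.
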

\begin{proof}
The mean operation across the channel on $\mathcal{F'} = \left[F'_{0}, F'_{1}, \cdots, F'_{d-1} \right] \in \mathbb{R}^{w_{F} \cdot h_{F} \times d} $ is defined as:
\begin{equation}
\mathcal{F}_{mean} = \frac{1}{d} \sum_{i=0}^{d-1} F'_{i}.
\end{equation}
Given any permutation $\pi$ of the indices of $F'_{i}$, due to the commutative law of addition, the mean remains unchanged, which can be expressed as:
\begin{equation}
\frac{1}{d} \sum_{i=0}^{d-1} F'_{\pi(i)} = \frac{1}{d} \sum_{i=0}^{d-1} F'_{i} .
\end{equation}
\end{proof}

The learnable parameters in Eq.~(\ref{eq:ini_fuse}) can be implemented using a 1D convolution kernel, allowing the module to be seamlessly integrated into existing models as a standalone layer.
Accordingly, Eq.~(\ref{eq:ini_fuse}) can be reformulated as:
\begin{equation}\label{eq:PIF}
    \mathcal{F}_{fuse} = \texttt{Conv1D}\left( \texttt{Concat}\{\mathcal{F}- F_{PI}, \mathcal{F}\} \right),
\end{equation}
where $\texttt{Conv1D}$ denotes 1D convolution operation, and $\texttt{Concat}$ represents the concatenation of the inputs. 
\redtext{In practice, $\texttt{Conv1D}$ is implemented with a kernel size of 1, an input channel size of 2 (corresponding to the concatenated features), and an output channel size of 1, resulting in only two learnable parameters without bias (or three if the bias is included), thereby explicitly realizing the parameters in Eq.~(\ref{eq:ini_fuse}).
}
The only trainable parameters are those in $\texttt{Conv1D}$.
Since the convolution kernel is lightweight, it introduces negligible additional parameters and incurs minimal computational overhead, making the module efficient and scalable.
Finally, $\mathcal{F}_{fuse}$ is fed into the final pooling layer to obtain the feature representation $\mathbf{f}$.

\subsection{Head-to-Tail Fusion}\label{sec:H2TF_method}
During the stage-2 classifier calibration phase, we fuse features from head classes with those of tail classes to leverage the abundant and closely related semantic information.
This fusion enriches the tail class representations and effectively expands their embedding space, facilitating more balanced classification.
The fusion process is formulated as:
\begin{equation}\label{eq:fh2t}
    \tilde{\mathbf{f}} = r \cdot \mathbf{f}_t  + (1-r)  \cdot \mathbf{f}_h ,
\end{equation}
where $r$ is the fusion ratio. 
$\tilde{\mathbf{f}}$ is then passed through the pooling layer and the classifier to predict the corresponding logits $\mathbf{z} = [z_0, z_1, \cdots, z_{C-1}]$. 
The ground truth label corresponds to the label of the fused sample\footnotemark $f_t$.  
Different loss functions, such as CE loss, MisLAS~\cite{mislas21}, or GCL~\cite{LiMK2022GCL}, to name a few, can be employed. 
The backbone $\phi$ can be single DNN-based models~\cite{he2016deep}, the multi-expert models~\cite{WangXD21RIDE, XiangLY2020LFME} as well as ViT~\cite{Dosovitskiy21vit} based models. 
We leverage the two-stage training paradigm~\cite{decouple20}, with H2TF applied in stage II. 
The execution of H2TF involves two key factors: determining the fusion ratio $r$, and employing a simple yet effective fusion strategy to avoid the additional complexity and computational overhead introduced by explicit sample selection during training.
\footnotetext{We refer to samples with the same label as the fused samples, while samples with different labels that participate in the fusion are referred to as fusing samples.
As shown in Eq.~(\ref{eq:fh2t}), $f_t$ is the fused sample and 
$f_h$ denotes the fusing sample.}

\begin{figure}[t]
    \centering
    \includegraphics[width=1.\linewidth]{./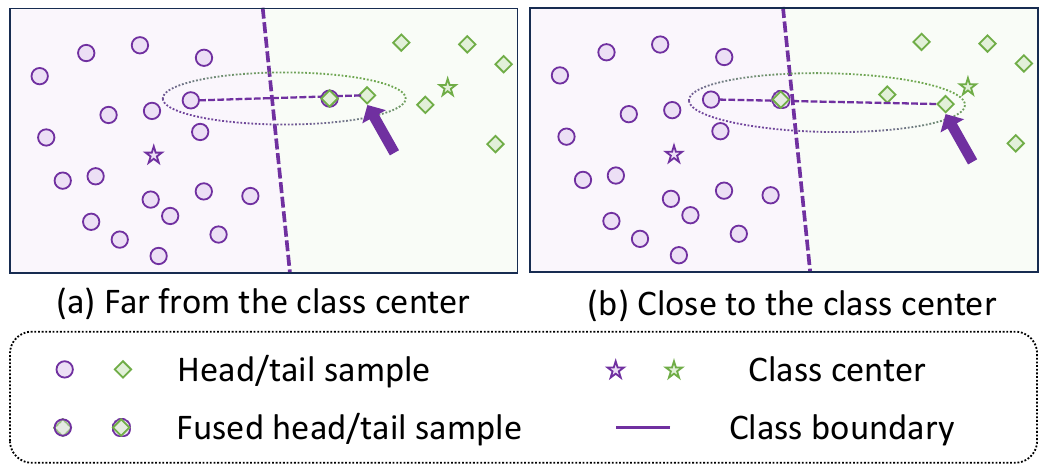}  
    \caption{Illustration of the relationship between the fusion ratio and the distance of a sample from its class center. 
The arrows point to fused samples.
The legend is identical to that in Fig.~\ref{fig:dec_bnd}.
\redtext{H2TF moves H2T-fused samples closer to samples (a) that are originally farther from the class center, while pushing them relatively away from samples (b) that are originally closer to the class center.}}
\label{fig:FusionRatro}
\end{figure}

\vspace{0.5em}
\noindent\textbf{Fusion Ratio in H2TF}. 
\cite{LiMK2024H2T} sets the fusion ratio $r$ as a hyperparameter, which lacks theoretical guidance and relies on empirical experience.
We obtain $r$ based on the distance ($d_x$) of a sample to its class center to determine the fusion ratio.
Since Eq.~(\ref{eq:fh2t}) is a linear combination of two feature vectors, the H2TF fused feature will be located between the two samples, as shown in Fig.~\ref{fig:FusionRatro}.  
A larger $r$ causes the H2TF fused sample to be closer to the sample being fused and farther away from the fusing sample \footnotemark[1], as shown in Fig.~\ref{fig:FusionRatro}~(a).
Samples situated far from the class center are located near the decision boundary. 
Being excessively far from such samples can lead to an overly broadened distribution of the feature distribution, which may adversely impact the model’s performance.
Accordingly, we set $r$ to be proportional to $d_x$:
\begin{equation}\label{eq:r_proto_d}
    r = Norm\left[ d_x \right]_0^1,  
\end{equation}
where $d_x = dist(\mathbf{f}_x, \mathbf{w}_y)$. 
$\mathbf{f}_x$ represents the feature of input $x$, and $\mathbf{w}_y$ denotes the corresponding classifier weights of the ground truth class.
$Norm\left[ \cdot \right]_0^1$ represents a normalization function used to ensure that the fusion ratio $r$ is bounded within $[0,1]$. 
Common normalization techniques include min-max normalization, max normalization, and \texttt{tanh} normalization, all of which introduce no additional parameters\footnote{We adopt \redtext{linear min-max normalization based on theoretical bounds rather than batch statistics} in our experiments.}.
$dist$ is chosen as cosine distance because it has a defined range, which facilitates the normalization of $r$.


\vspace{0.5em}
\noindent\textbf{Fusion Strategy in H2TF}. 
The selection of features to be fused poses a tedious task during training since visual recognition tasks often involve a wide range of classes. 
It is not always practical to guarantee that each mini-batch encompasses all the requisite categories for fusion. 
We thereby devise a simple strategy to facilitate an easy-to-execute fusion process.
This strategy involves sampling two versions of data:
1) class-balanced data $\mathcal{T}^B$ utilized to balance the empirical/structural risk minimization (ERM/SRM) of each class are fed into the fused branch, and
2) instance-wise data $\mathcal{T}^I$ has a high probability of obtaining head class samples, which are fed into the fusing branch. 
The sampling rates $p^B_i$ and $p^I_i$ for class $i$ within the sets $\mathcal{T}^B$ and $\mathcal{T}^I$, respectively, are calculated by:
\begin{equation}\label{eq:sam_rate}
     p^B_i = \frac{1}{C}, 
     p^I_i = \frac{n_i}{N}.
\end{equation}
Balanced sampling data ensures that each class is sampled with equal probability $\frac{1}{C}$. 
Consequently, classes with fewer samples have a higher probability of being resampled multiple times.
In contrast, the instance-wise sampling branch follows the original data distribution, resulting in head-biased sampling where frequent classes are more frequently selected. 
Using the feature maps $\mathcal{F}^B$ and $\mathcal{F}^I$ obtained from $\mathcal{T}^B$ and $\mathcal{T}^I$, respectively, we substitute $\mathcal{F}_t$ and $\mathcal{F}_h$ in Eq.~(\ref{eq:fh2t}). 
By doing so, features from the repeatedly sampled tail classes will be fused with the head class features with a higher probability. 
Unlike the common practice of linearly combining a pair of inputs and their labels for augmentation, we use only the labels of the fused branch, namely the balance-sampled data, as the ground truth.
This strategy leverages the semantic richness of head classes to enhance the representation of tail classes.
H2TF is exclusively employed during the Stage 2 training phase to achieve a balanced feature distribution and more effectively facilitate classifier calibration.
This process does not affect the feature extraction. 
During inference, H2TF is omitted; instead, the feature extractor calibrated in Stage 1 via PIF, together with the subsequently calibrated classifier, is utilized.

\begin{figure}[t]
    \centering
    \includegraphics[width=1.\linewidth]{./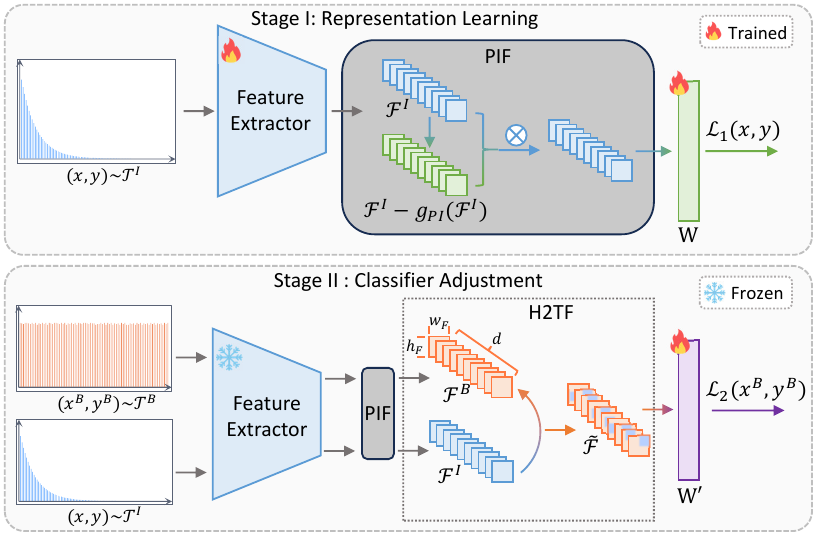}  
    \caption{Overall structure of permutation-invariant and head-to-tail feature fusion (PI-H2T).}
\label{fig:framework}
\vspace{-0.5em}
\end{figure}

The proposed training framework is illustrated in Fig.~\ref{fig:framework}
A summary of the algorithm is provided in Appendix~B . 

\subsection{Rationale Analysis}\label{sec:ana}

\begin{remark}\label{re:PIF}
    PIF is equivalent to automatically assigning classification margins. 
\end{remark}

\begin{proof}
According to Eq.~(\ref{eq:ini_fuse}), the output features following the pooling layer can be represented as:
\begin{equation}\label{eq:ineq1}
    \mathbf{f}_{fuse} = a \cdot \left(  \mathbf{f}-\mathbf{f}_{PI} \right).
\end{equation}
Our optimization objective can ensure that:
\begin{equation}\label{eq:ineq2}
   \mathbf{w}_y^T \mathbf{f}_{fuse} > \mathbf{w}_i^T \mathbf{f}_{fuse}, \text{ for } i\neq y.
\end{equation}
Take Eq.~(\ref{eq:ineq1}) into Eq.~(\ref{eq:ineq2}), we can have:
\begin{gather}
    a\cdot \mathbf{w}_y^T \mathbf{f}- a\cdot \mathbf{w}_y^T \mathbf{f}_{PI}+b \cdot \mathbf{w}_y^T \mathbf{f} > \\ 
    a\cdot \mathbf{w}_i^T \mathbf{f}- a\cdot \mathbf{w}_i^T \mathbf{f}_{PI}+b \cdot \mathbf{w}_i^T \mathbf{f} \Rightarrow \\
    \mathbf{w}_y^T \mathbf{f} > \mathbf{w}_i^T \mathbf{f} + \frac{a}{a+b} \cdot \left( \mathbf{w}_y^T \mathbf{f}_{PI} - \mathbf{w}_i^T \mathbf{f}_{PI}\right).\label{eq:PIF_margin}
\end{gather}
\redtext{
We define the class margin as
$\dfrac{a}{a+b} \cdot \left( \mathbf{w}_y^T \mathbf{f}_{PI} - \mathbf{w}_i^T \mathbf{f}_{PI}\right)$.
where $\mathbf{w}_y$ is the center of the $y$-th class, learned from the data distribution of class $y$.
$\mathbf{f}_{PI}$ is the PI feature corresponding to the target class.
The feature $\mathbf{f}_{PI}$, obtained from the backbone with learnable parameters, should be aligned more closely with its true class center $\mathbf{w}_y$ than with any other class center $\mathbf{w}_i$.
This implies that $\left( \mathbf{w}_y^T \mathbf{f}_{PI} - \mathbf{w}_i^T \mathbf{f}_{PI}\right) > 0$.
Accordingly, the margin is guaranteed to be positive and can be adaptively estimated for each class, depending on both its PI feature and the overall dataset distribution.
}
\end{proof}
\redtext{
Further empirical validation is presented in Section~\ref{sec:vis_feat}.
}

\begin{remark}\label{re:H2TF}
    H2TF generates two mutually restraining forces. 
\end{remark}
To simplify the analysis, we select two classes, namely the head and tail classes, without loss of generality.

\begin{proof}
We first analyze the inequality relationships for correctly classified tail samples and H2TF head samples.
For correctly classified tail samples, the inequality $z_t > z_h$ holds. 
For H2TF head samples, we have $\tilde{z}_h > \tilde{z}_t$, where $\tilde{z}_i = w_i\tilde{f}_i$ represents the H2TF logit of class $i$.
Consequently, the following inequality can be established:
\begin{equation}\label{eq:correct_tail}
\begin{split}
    r \cdot \mathbf{w}_{t}^T \mathbf{f}_{t}+ \uwave{(1-r) \cdot \mathbf{w}_{t}^T \mathbf{f}_{t}}  > r \cdot \mathbf{w}_{h}^T \mathbf{f}_{t}+\dotuline{(1-r) \cdot \mathbf{w}_{h}^T \mathbf{f}_{t}} & \\ 
     \triangleright \text{ for correct tail}, &\\ 
\end{split}  
\end{equation}
\begin{equation}\label{eq:H2TF_head}
\begin{split}
    r \cdot \mathbf{w}_{h}^T\mathbf{f}_{h}+\dotuline{(1-r) \cdot \mathbf{w}_{h}^T \mathbf{f}_{t}}  > r \cdot  \mathbf{w}_{t}^T \mathbf{f}_{h}+ \uwave{(1-r) \cdot \mathbf{w}_{t}^T \mathbf{f}_{t}}& \\ 
     \triangleright \text{ for H2TF head}.& 
\end{split}  
\end{equation}
By adding Eq.~(\ref{eq:correct_tail}) to Eq.~(\ref{eq:H2TF_head}), we derive
\begin{equation}\label{eq:fuse1}
    \mathbf{w}_t^T (\mathbf{f}_t-\mathbf{f}_h) > \mathbf{w}_h^T (\mathbf{f}_t-\mathbf{f}_h).
\end{equation}
Let $\theta_*$ (where $*$ is $h$ or $t$) denote the angle between $\mathbf{w}_*$ and $\mathbf{f}_t-\mathbf{f}_h$. 
Eq.~(\ref{eq:fuse1}) can then be simplified as
\begin{equation}\label{eq:theta1}
    |\mathbf{w}_t| \cos \theta_t > |\mathbf{w}_h| \cos \theta_h.
\end{equation}

In H2TF, $\mathbf{f}_t$ and $\mathbf{f}_h$ are fixed and only $\mathbf{w}_t$ and $\mathbf{w}_h$ are adjustable because we only finetune the classifier by freezing the feature extractor. 
Therefore, Eq.~(\ref{eq:theta1}) encourages $\mathbf{w}_t$ to be in closer proximity to tail samples while pushing $\mathbf{w}_t$ further away from them, thus increasing the distribution span of head classes, as illustrated by force {\footnotesize\textcircled{\scriptsize{1}}} in Fig.~\ref{fig:rationale}.

\redtext{Next, we consider wrongly classified tail samples and H2TF tail samples.}
For wrongly classified tail samples, the inequality $z_h > z_t$ holds for a given tail sample.
For H2TF tail samples, $\tilde{z}_t > \tilde{z}_h$ is satisfied.
Therefore, the following inequalities are established:
\begin{equation}\label{eq:wrong_tail}
\begin{split}
    \uwave{r \cdot \mathbf{w}_{h}^T \mathbf{f}_{t} } + (1-r) \cdot \mathbf{w}_{h}^T \mathbf{f}_{t} > 
    \dotuline{r \cdot \mathbf{w}_{t}^T \mathbf{f}_{t} } + (1-r) \cdot \mathbf{w}_{t}^T \mathbf{f}_{t}& \\ 
     \triangleright \text{ for wrong tail}. &\\ 
\end{split}  
\end{equation}
\begin{equation}\label{eq:H2TF_tail}
\begin{split}
    \dotuline{r \cdot \mathbf{w}_{t}^T \mathbf{f}_{t}} + (1-r) \cdot \mathbf{w}_{t}^T \mathbf{f}_{h}  > \uwave{r \cdot  \mathbf{w}_{h}^T \mathbf{f}_{t}} + (1-r) \cdot \mathbf{w}_{h}^T \mathbf{f}_{h} & \\ 
     \triangleright \text{ for H2TF tail}.& 
\end{split}  
\end{equation}
Analogous to the derivation from Eq.~(\ref{eq:theta1}), we can have
\begin{equation}\label{eq:theta2}
     |\mathbf{w}_h| \cos \theta_h > |\mathbf{w}_t| \cos \theta_t. 
\end{equation}
\redtext{Eq.~(\ref{eq:theta2}) applies a force opposing the effect of Eq.~(\ref{eq:theta1}), as depicted by force {\footnotesize\textcircled{\scriptsize{2}}} in Fig.~\ref{fig:rationale}}.
\redtext{However, because the number of correctly classified samples typically exceeds that of misclassified samples in the training set, the effect of Eq.~(\ref{eq:theta1}) dominates, leading to a well-calibrated classifier.}
\end{proof}

\begin{figure}[tb]
    \centering
    \includegraphics[width=\linewidth]{./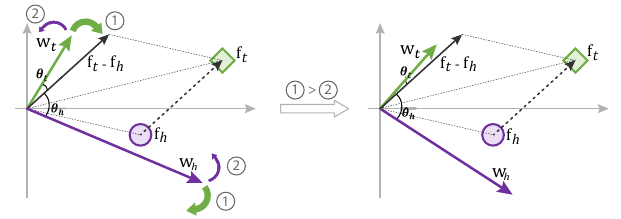}
    \caption{\redtext{Rationale analysis of H2TF}. Forces {\footnotesize\textcircled{\scriptsize{1}}} and {\footnotesize\textcircled{\scriptsize{2}}} are generated by Eqs.~(\ref{eq:theta1}) and (\ref{eq:theta2}), respectively. {\footnotesize\textcircled{\scriptsize{1}}} $>$ {\footnotesize\textcircled{\scriptsize{2}}} makes the tail sample to ``pull" closer to $w_t$ and ``push" further away from $w_h$, leading to the adjustment of decision boundary and enlargement of the tail class space.}
\label{fig:rationale}
\end{figure}

Fig.~\ref{fig:rationale} geometrically interprets the rationale of H2TF.

As highlighted in Remark~\ref{re:PIF}, PIF provides flexibility by automatically assigning classification margins, which allows the subsequent H2TF to more effectively adjust the decision boundary.
\redtext{Remark~\ref{re:H2TF} further reveals that H2TF generates two mutually restraining forces. 
While the dominant force shifts the boundary toward better tail recognition, the opposing force prevents excessive adjustment, maintaining overall balance.
Section~\ref{sec:ablation} provides experimental visualizations that support these theoretical insights.
Consequently, PI-H2T enhances the performance of existing methods by jointly leveraging the adaptive margins from PIF and the reciprocal regularization of H2TF.}

\section{Experiments}

\subsection{Datasets and Metrics}
Our proposed PI-H2T is evaluated on four widely-used benchmarks: CIFAR100-LT~\cite{Kaidi2019}, ImageNet-LT~\cite{LiuZW19LTOW}, iNaturalist 2018~\cite{Horn_2018_CVPR}, and Places-LT~\cite{LiuZW19LTOW, Liu2022OLTR2}. 

\noindent\textit{CIFAR100-LT.} It is a subsampled version of the balanced CIFAR100 dataset~\cite{krizhevsky2009learning}, containing RGB images from 100 natural image classes with a feature dimension of $32 \times 32$. 
Following~\cite{LiuZW19LTOW}, the sampling rate is modeled by an exponential function, $n_i = N \times \lambda^i$, where $\lambda \in \left(0,1 \right)$, $N$ represents the class size of the original balanced training set, and $n_i$ denotes the sampling quantity for the $i$-th class in the long-tailed version.
We use widely adopted imbalance factor, calculated as $\rho=\dfrac{n_{max}}{n_{min}}$, with values of 200, 100, and 50~\cite{Peng2020Feature}. 

\vspace{0.5em}
\noindent\textit{ImageNet-LT and Places-LT.}
Long-tailed versions of the original balanced ImageNet-2012~\cite{ILSVRC15} and Places 365~\cite{zhou2017places} datasets are utilized.
For imageNet-LT, we adopt the same settings as Liu et al.~\cite{LiuZW19LTOW, Liu2022OLTR2}, where a subset is sampled according to a Pareto distribution with a power value of $\alpha = 6$.
Overall, this long-tailed version comprises 115.8K images spanning 1,000 categories, with each class containing a maximum of 1,280 images and a minimum of 5 images.
Places-LT consists of 184.5K images from 365 categories, with class sizes ranging from a maximum of 4,980 images to a minimum of 5 images.

\vspace{0.5em}
\noindent\textit{iNaturalist 2018.} The iNaturalist datasets are inherently imbalanced due to their global and nature-based collection.
We employ the 2018 version~\cite{Horn_2018_CVPR} for our experimental analysis.
The dataset comprises 8,142 classes with a total of 437.5K images, and class cardinality ranges from 2 to 1,000.

\vspace{0.5em}
\noindent\textit{Evaluation metrics.} In addition to top-1 classification accuracy, following Liu~et~al.~\cite{LiuZW19LTOW}, the accuracy on three partitions: head ($n_i > 100$), medium ($20<n_i\leq 100$) and tail ($n_i\leq 20$) for large-scale datasets are also compared.  

\subsection{Basic Settings}
We validate the effectiveness of the proposed Pi-H2T on two primary backbone architectures.

\vspace{0.5em}
\noindent\textit{CNN-based methods.}
SGD with a momentum of 0.9 is adopted for all datasets. 
Stage II is trained with 10 epochs.
For a fair comparison, we employ MixUp~\cite{Hongyi2018} as the sole augmentation strategy for all methods.
For CIFAR100-LT, we refer to the settings in Cao~et~al.~\cite{Kaidi2019} and Zhong~et~al.~\cite{mislas21}. 
The backbone network is ResNet-32~\cite{he2016deep}. 
The maximum training epoch is 200 in Stage I.
The initial learning rate is 0.1 and is decayed in the $160^{th}$ and $180^{th}$ epochs by 0.1.  
The batch size is 128. 
For imageNet-LT and iNaturalist 2018, we use the commonly used ResNet-50. 
For Places-LT, we utilize the ResNet-152 pre-trained on imageNet.

\vspace{0.5em}
\noindent\textit{MHSA-based method.} 
We mainly follow the settings in Dong et al.~\cite{Dong23LPT}. 
The ViT-B/16 model, pre-trained on ImageNet-21K, serves as the backbone architecture.
For the augmentation strategy, consistent with widely adopted practices among mainstream methods, we adopt the same data augmentation strategies in \cite{RenJW2020Balanced, liJ2022nested, Jin2023shike, Dong23LPT}.  
Specifically, for CIFAR100-LT, AutoAugment and Cutout are adopted. 
RandomAugment is adopted for Places-LT and iNaturalist 2018. 
We employ SGD as an optimizer with a batch size of 128 and an initial learning rate of 0.01, following a cosine annealing schedule.

For comparison methods, we reproduce prior approaches using the hyper-parameters provided to ensure a fair comparison. 
For methods where hyper-parameters or official codes are not available, we report the results as presented in the original papers.

\subsection{Comparisons to Existing Methods} 
\label{sec:com}

\subsubsection{Compared Methods.}\label{sec:com_method}
For single models with CNN as the backbone, we compare the proposed PI-H2T with the following three kinds of methods:
\textit{two-stage methods}, i.e., LDAM-DRW~\cite{Kaidi2019}, decoupling representation (DR)~\cite{decouple20}, MisLAS~\cite{zhang2021bag}, and GCL~\cite{LiMK2022GCL};
\textit{decision boundary adjustment method}, i.e., Adaptive Bias Loss (ABL)~\cite{jin2023optimal};
\textit{data augmentation methods}, i.e., FSA~\cite{chu2020feature}, MBJ~\cite{Liu2022Memory}, CMO~\cite{ParkS2022Majority} and LCReg~\cite{liu2024lcreg}. 
MixUp~\cite{Hongyi2018} can enhance representation learning et al.~\cite{zhang2021bag,mislas21}, therefore, we reproduce DR cooperated with MixUp for comparison. 
LCReg

We report the results of CE loss and balanced softmax cross-entropy loss (BSCE)~\cite{RenJW2020Balanced} with CMO (abbreviated as CE+CMO and BSCE+CMO, respectively). 
The results of H2T accompanied by DR (utilizing CE loss), MisLAS, and GCL are reported.
In the evaluation of multi-expert models, we compare BBN~\cite{bbn20}, RIDE~\cite{WangXD21RIDE}, and ACE~\cite{Cai2021ACE}.
In particular, both RIDE and ACE employ 3-expert architectures. 
For Places-LT, we include the results of H2T integrated with RIDE for comparative analysis. 
Regarding MHSA-based models, our comparison focuses on methods that involve fine-tuning foundational models. 
We compare with \redtext{LIFT+DBM~\cite{son2025difficulty},} LIFT~\cite{shi2024LIFT}, VPT~\cite{jia2022visual}, LPT~\cite{Dong23LPT}, and Decoder~\cite{wang2024exploring}.

\begin{table}[t]
 \centering  
 \caption{Comparison results on CIFAR100-LT with respect to top-1 classification accuracy (\%). 
 $\star$ denotes the results quoted from the corresponding papers.
 The other methods are reproduced using the same setup to maintain consistency in evaluation.
 The best and the second-best results are shown in \underline{\textbf{underline bold}} and \textit{\textbf{italic bold}}, respectively. } %
 \label{tab:cifar_results}
 \vspace{-0.5em}
 \setlength{\tabcolsep}{12pt}
 \renewcommand{\arraystretch}{1.2}
  {
  \begin{tabular}{l| c |c c c}
  \hlinew{1pt} 
  \multicolumn{1}{l|}{Imbalance factor} & Year &200 &100 &50 \\
  \hline 
  \multicolumn{5}{c}{CNN-based method (single model, backbone: ResNet-32.)} \\
  \hline 
  CE loss & - &35.99 & 39.55 &45.40\\
  LDAM-DRW \cite{Kaidi2019} & 2019 &38.91 &42.04 &47.62\\
  FSA~\cite{chu2020feature} & 2020 & 41.46 & 48.51 & 52.17\\  
  DR+MU~\cite{decouple20} & 2020 & 41.73 & 45.68 & 50.86\\
  MisLAS~\cite{mislas21} & 2021 & 43.45 &46.71 &52.31 \\
  MBJ$^\star$~\cite{Liu2022Memory} & 2022& - & 45.80 & 52.60 \\
  GCL~\cite{LiMK2022GCL} &2022 & 44.76 & 48.61 & 53.55 \\
  CE+CMO$^\star$~\cite{ParkS2022Majority} & 2022 & - & 43.90 & 48.30 \\
  BSCE+CMO$^\star$~\cite{ParkS2022Majority}& 2022 & - & 46.60 & 51.40 \\
  ABL$^\star$~\cite{jin2023optimal} & 2023 & - & 46.80 & 52.10 \\
  LCReg~\cite{liu2024lcreg} & 2024 &  43.60 &  47.35 & 53.09 \\
  \hdashline
  {DR+H2T} & 2024 & 43.95 & 47.73&  52.95 \\
  {MisLAS+H2T} & 2024 & 43.84  & 47.62  & 52.73 \\
  {GCL+H2T} & 2024 & 45.24 & 48.88  & 53.76 \\
  \hdashline
  \rowcolor{mygray}
  {DR+PI-H2T} & ours & \textit{\textbf{45.58}} & \textit{\textbf{49.22}}&  \textit{\textbf{54.05}} \\
  \rowcolor{mygray}
  {MisLAS+PI-H2T} & ours & 45.41  & 49.04  & 53.78 \\
  \rowcolor{mygray}
  {GCL+PI-H2T} & ours & \underline{\textbf{46.34}} &  \underline{\textbf{49.88}} & \underline{\textbf{54.13}} \\
  \hline
    \multicolumn{5}{c}{CNN-based method (multi-expert model, backbone: ResNet-32.)} \\
  \hline
  BBN~\cite{bbn20} & 2020  & 37.21 & 42.56 & 47.02 \\
  RIDE~\cite{WangXD21RIDE} & 2021  & 45.84 & 50.37 & 54.99 \\
  ACE$^\star$~\cite{Cai2021ACE}  & 2021 &- & 49.40 & 50.70 \\
  RIDE+CMO$^\star$~\cite{ParkS2022Majority} & 2022 &- & 50.00 & 53.00\\
  \hdashline 
  RIDE+H2T & 2024 & \textit{\textbf{46.64}}  & \textit{\textbf{51.38}} & \textit{\textbf{55.54}} \\
  ResLT+H2T  & 2024 & 46.18 & 49.60  & 54.39 \\
  \hdashline
  \rowcolor{mygray}
  RIDE+PI-H2T & ours & \underline{\textbf{47.81}}  & \underline{\textbf{52.51}} & \underline{\textbf{57.22 }}\\
  \hline 
  \multicolumn{5}{c}{MHSA-based method (backbone: ViT-B/16)} \\
  \hline 
  VPT~\cite{jia2022visual} & 2022 & 72.81 & 81.05 & 84.83 \\
  LPT~\cite{Dong23LPT} & 2023 & \textit{\textbf{87.91}} & \textit{\textbf{89.12}} & \textit{\textbf{90.00}} \\ 
  LIFT~\cite{shi2024LIFT} & 2024 &  79.03 & 81.35 & 82.00 \\ 
  \redtext{LIFT+DBM~\cite{son2025difficulty}} & \redtext{2025} &  \redtext{79.00} & \redtext{81.40} & \redtext{82.50} \\ 
  \rowcolor{mygray}
  LPT+PI-H2T & ours & \underline{\textbf{89.02}} & \underline{\textbf{90.05}} & \underline{\textbf{91.03}}   \\ 
  \rowcolor{mygray}
   LIFT+PI-H2T & ours & 79.74 &  82.40 & 83.29   \\
  \hlinew{1pt}
 \end{tabular}}
 \vspace{-9pt}
\end{table}

\begin{table}[tb]
 \centering  
 \caption{Comparison results on imageNet-LT\protect\footnotemark[2] with respect to top-1 classification accuracy (\%). Others are the same as Table~\ref{tab:cifar_results}.} \label{tab:img} 
 \vspace{-0.5em}
 \setlength{\tabcolsep}{8pt}
 \renewcommand{\arraystretch}{1.2}
  {
  \begin{tabular}{l | c | c c c |c}  
  \hlinew{1pt}
  Method & Year  &Head &Med &Tail & All \\
  \hline 
  \multicolumn{6}{c}{CNN-based method (single model, backbone: ResNet-50.} \\
  \hline 
  CE loss & - & 64.91 & 38.10  &11.28  &44.51 \\
  LDAM-DRW \cite{Kaidi2019} & 2019 & 58.63  &48.95  & 30.37 & 49.96 \\
  FSA$^\star$~\cite{chu2020feature} & 2020 & 47.30 & 31.60 & 14.70 & 35.20\\  
  DR~\cite{decouple20} & 2020 & 62.93 & 49.77 & 33.26 & 52.18 \\ 
  MisLAS~\cite{mislas21} & 2021 & 62.53 & 49.82 & 34.74 & 52.29 \\ 
  MBJ$^\star$~\cite{Liu2022Memory} &2022 & 61.60 & 48.40 & 39.00 & 52.10  \\
  GCL~\cite{LiMK2022GCL} &2022  & 62.24 & 48.62 & 52.12 & 54.51 \\ 
  CE+CMO$^\star$~\cite{ParkS2022Majority} &2022 & 67.00 & 42.30 & 20.50 & 49.10  \\
  BSCE+CMO$^\star$~\cite{ParkS2022Majority} &2022 & 62.00 & 49.10 & 36.70 & 52.30 \\
  ABL$^\star$~\cite{jin2023optimal} &2023 & 62.60 & 50.30 & 36.90 & 53.20 \\ 
  LCReg~\cite{liu2024lcreg} &2024 & 65.19 & 52.48 & 36.41 & 54.79 \\
  \hdashline 
  DR+H2T &2024  & 63.26 & 50.43 & 34.11 & 52.74\\
  MisLAS+H2T  &2024 & 62.42 & 51.07 & 35.36 &  52.90  \\ 
  GCL+H2T  &2024 & 62.36 & 48.75 & 52.15 & \textit{\textbf{54.62}} \\  
  \hdashline 
  \rowcolor{mygray}
  DR+PI-H2T  & ours & 63.64 & 51.50 &  34.13 & 53.37\\
  \rowcolor{mygray}
  MisLAS+PI-H2T & ours & 62.52 & 52.07 & 35.29 &  53.47  \\ 
  \rowcolor{mygray}
  GCL+PI-H2T & ours & 63.95 & 53.92 & 37.25 & \underline{\textbf{55.08}} \\    
  \hline
    \multicolumn{6}{c}{CNN-based method (multi-expert model, backbone: ResNet-50.)} \\
  \hline
  BBN~\cite{bbn20} & 2020 & - & - & - & 48.30  \\
  RIDE~\cite{WangXD21RIDE} & 2021 & 69.59 & 53.06  & 30.09 & 55.72 \\ 
  ACE$^\star$ ~\cite{Cai2021ACE} & 2021  & - & - & - & 54.70 \\
  RIDE+CMO$^\star$~\cite{ParkS2022Majority}  & 2022 & 66.40 & 53.90 & 35.60 & 56.20\\
  ResLT~\cite{Cui2022reslt} & 2023  & 59.39 & 50.97 & 41.29 & 52.66 \\ 
  \hdashline 
  RIDE+H2T & 2024 & 67.55 & 54.95 & 37.08 & \textit{\textbf{56.92}}  \\ 
  ResLT+H2T & 2024  & 62.29 & 52.29 & 35.31 & 53.39 \\  
  \hdashline 
  \rowcolor{mygray}
  RIDE+PI-H2T  & ours  & 66.68 & 55.99 & 37.77 &  \underline{\textbf{57.15}} \\ 
  \hline  
  \multicolumn{5}{c}{MHSA-based method (backbone: ViT-B/16)} \\
  \hline  
  LIFT~\cite{shi2024LIFT} & 2024 & 80.20 & 76.10 & 71.50 & \textit{\textbf{77.00 }} \\
  \redtext{LIFT + DBM \cite{son2025difficulty}} & \redtext{2025} & \redtext{78.90} & \redtext{76.00} & \redtext{73.30} & \redtext{76.80}  \\ 
  \hdashline
  \rowcolor{mygray}
  LIFT+PI-H2T & ours & 81.46 & 76.96 & 73.07 & \underline{\textbf{78.16}}\\   
  \hlinew{1pt}
 \end{tabular}}
 \vspace{-9pt}
\end{table}

\begin{table}[tb]
\renewcommand{\thefootnote}{\fnsymbol{footnote}}
 \centering  
 \caption{Comparison results on iNaturalist 2018 with respect to top-1 classification accuracy (\%).
 Others are the same as Table~\ref{tab:cifar_results}.} \label{tab:iNat} 
 \setlength{\tabcolsep}{8pt}
 \renewcommand{\arraystretch}{1.2}
  {
  \begin{tabular}{ l | c |c c c |c }
  \hlinew{1pt}
  Method & Year &Head &Med &Tail & All \\
  \hline 
  \multicolumn{6}{c}{CNN-based method (single model, backbone: ResNet-50.)} \\
  \hline 
  CE loss & - & 76.10  & 69.05  & 62.44  & 66.86 \\
  FSA$^\star$~\cite{chu2020feature} & 2020 & - & - & - & 65.91 \\
  LDAM-DRW \cite{Kaidi2019} & 2019 & - &- &- & 68.15 \\
  DR~\cite{decouple20} & 2020 & 72.88 & 71.15 & 69.24 & 70.49\\  
  MisLAS~\cite{mislas21} & 2021  & 72.52 & 72.08 & 70.76 & 71.54  \\
  MBJ$^\star$~\cite{Liu2022Memory} &2022 & - & - & - & 70.00  \\
  GCL~\cite{LiMK2022GCL}  & 2022 & 66.43 & 71.66 & 72.47 & 71.47 \\ 
  CE+CMO$^\star$~\cite{ParkS2022Majority} & 2022 & 76.90 & 69.30 & 66.60 & 68.90 \\     
  BSCE+CMO$^\star$~\cite{ParkS2022Majority} & 2022 & 68.80 & 70.00 & 72.30 & 70.90 \\  
  ABL$^\star$~\cite{jin2023optimal} & 2023 & - & - & - & 71.60 \\  
  LCReg~\cite{liu2024lcreg} &2024 & 73.82 & 73.19 & 71.28 & \textit{72.41} \\
  \hdashline 
  DR+H2T  & 2024 & 71.73 & 72.32 & 71.30 & 71.81  \\ 
  MisLAS+H2T  & 2024 & 69.68 & 72.49 & 72.15 & \textit{\textbf{72.05}} \\       
  GCL+H2T & 2024 &67.74 & 71.92 & 72.22 & 71.62 \\
  \hdashline
  \rowcolor{mygray}   
  DR + PI-H2T  & ours&  71.77 & 71.55 & 71.03 & 71.34  \\ 
  \rowcolor{mygray}  
  MisLAS + PI-H2T  & ours & 71.66 & 72.22 & 71.03 & 71.64 \\   \rowcolor{mygray}  
  GCL+PI-H2T & ours & 70.74 & 75.41 & 75.62 & \underline{\textbf{75.02}} \\
  \hline
    \multicolumn{6}{c}{CNN-based method (single model, backbone: ResNet-50.)} \\
  \hline
  BBN~\cite{bbn20} & 2020 & - & - & - & 69.70  \\
  RIDE~\cite{WangXD21RIDE}  & 2021 & 76.52 & 74.23 & 70.45  & 72.80 \\   
  ACE~\cite{Cai2021ACE}$^\star$ & 2021 & - & - & - & 72.90 \\
  RIDE + CMO$^\star$~\cite{ParkS2022Majority} & 2022 & 68.70 & 72.60 & 73.10 & 72.80 \\ 
  ResLT~\cite{Cui2022reslt}$^\star$ & 2023 & 64.85 & 70.64 & 72.11 & 70.69 \\  
  \hdashline 
  RIDE + H2T & 2024 & 75.69 &  74.22 & 71.36  & \textit{\textbf{73.11}} \\ 
  ResLT + H2T & 2024 & 68.41 & 72.31 & 72.09 & 71.88  \\ 
  \hdashline
  \rowcolor{mygray}
  RIDE + PI-H2T & ours & 73.91 & 76.59 &  75.96 &  \underline{\textbf{76.03}}\\  
  \hline 
  \multicolumn{5}{c}{MHSA-based method (backbone: ViT-B/16)} \\
  \hline  
  LiVT$^\star$~\cite{Xu2023Learning}  & 2023  & 78.90 & 76.50 & 74.80 & 76.10 \\ 
  LPT~\cite{Dong23LPT} & 2023 & 61.45 & 77.01 & 79.12& 76.10\\
  Decoder$^\star$~\cite{wang2024exploring} &2024 & - & - & - & 59.20 \\ 
  LIFT~\cite{shi2024LIFT} & 2024 & 72.40 & 79.00 & 81.10 & 79.10 \\
  \redtext{LIFT+DBM \cite{son2025difficulty}} & \redtext{2025} & \redtext{70.80} & \redtext{78.90} & \redtext{83.50} & \redtext{\textit{\textbf{79.90}}}  \\
  \hdashline
  \rowcolor{mygray}
  LPT+PI-H2T & ours & 65.68 & 77.60 & 79.19 &  77.07\\  
  \rowcolor{mygray}
  LIFT+PI-H2T & ours & 74.11 & 80.16 & 81.77 & \underline{\textbf{80.17 }} \\
  \hlinew{1pt}
 \end{tabular}
 }
\end{table}

\begin{table}[tb]
 \centering  
 \caption{Comparison results on Places-LT with respect to top-1 classification accuracy (\%).    
 CC, cosine classifier. LC, linear classifier. 
 Others are the same as Table~\ref{tab:cifar_results}.} 
 \label{tab:pla} 
 \setlength{\tabcolsep}{8pt}
 \renewcommand{\arraystretch}{1.2}
  {
  \begin{tabular}{l | c| c c c |c}
  \hlinew{1pt} 
  Method & Year  &Head &Med &Tail & All \\
  \hline 
  \multicolumn{6}{c}{CNN-based method (single model, backbone: ResNet-152.)} \\
  \hline 
  CE loss &- & 46.48  &25.66  &8.09  & 29.43 \\  
  FSA$^\star$~\cite{chu2020feature} & 2020 & 42.80 & 37.50 & 22.70 & 36.40 \\
  DR~\cite{decouple20} &2020 & 41.66 &37.79 &  32.77 & 37.40\\
  MisLAS~\cite{mislas21}&2021 & 41.95 & 41.88 & 34.65 & 40.38 \\ 
  MBJ$^\star$~\cite{Liu2022Memory} &2022 & 39.50 & 38.20 & 35.50 & 38.10  \\   
  GCL~\cite{LiMK2022GCL}  &2022 & 38.64 &  42.59 & 38.44 &  40.30  \\
  ABL$^\star$~\cite{jin2023optimal} &2023 & 41.50 & 40.80 & 31.40 & 39.40 \\ 
  LCReg~\cite{liu2024lcreg} & 2024 & 43.65 & 41.15 & 31.46 & \textit{40.01} \\
  \hdashline 
  DR+H2T &2024 & 41.96 & 42.87 & 35.33 & \textit{\textbf{40.95}} \\ 
  MisLAS+H2T &2024 & 41.40 & 43.04 & 35.95 & \underline{\textbf{41.03}} \\  
  GCL+H2T  &2024 & 39.34 & 42.50 & 39.46 & 40.73 \\   
  \hdashline 
  \rowcolor{mygray}  
  DR+PI-H2T &ours & 42.36 & 42.60 & 34.23 & 40.75 \\  
  \rowcolor{mygray}  
  MisLAS+PI-H2T &ours & 41.12 & 42.90 & 35.44 & 40.69 \\
  \rowcolor{mygray}  
  GCL+PI-H2T  &ours & 41.29 & 43.31 & 33.74& 40.56 \\    
  \hline
    \multicolumn{6}{c}{CNN-based method (multi-expert model, backbone: ResNet-152.)} \\
  \hline
  RIDE~\cite{WangXD21RIDE} (LC)  &2021 & 44.79 & 40.69 & 31.97  & 40.32 \\ 
  RIDE~\cite{WangXD21RIDE} (CC)  &2021 & 44.38 & 40.59 & 32.99 & 40.35   \\ 
  ResLT$^\star$~\cite{Cui2022reslt} &2023 & 40.30 & 44.40 & 34.70 & 41.00 \\ 
  \hdashline 
  RIDE + H2T (LC)  &2024 & 42.99 & 42.55 & 36.25 & \underline{\textbf{41.38}} \\
  RIDE + H2T (CC)  &2024 & 42.34 & 43.21 & 35.62 & 41.30 \\  
  \hdashline
  \rowcolor{mygray}
  RIDE+PI-H2T (LC) & ours &  42.80 &  41.75 &37.57 & \textit{\textbf{41.31}} \\
  \rowcolor{mygray}
  RIDE+PI-H2T (CC)  & ours & 42.66 & 42.00 & 37.33 & 41.25 \\  
  \hline 
  \multicolumn{5}{c}{MHSA-based method (backbone: ViT-B/16)} \\
  \hline  
  LiVT$^\star$~\cite{Xu2023Learning}  & 2023  & 48.10 & 40.60 & 27.50 & 40.80 \\ 
  LPT~\cite{Dong23LPT} & 2023 & 47.60 & 52.10 &48.40 & 49.70 \\
  Decoder$^\star$~\cite{wang2024exploring} &2024 & - & - & - & 46.80 \\ 
  LIFT~\cite{shi2024LIFT} & 2024 &51.30 & 52.20 & 50.50 & \textit{\textbf{51.50 }} \\
  \redtext{LIFT+DBM \cite{son2025difficulty}} & \redtext{2025}  & \redtext{50.40} & \redtext{52.00} & \redtext{52.50} & \redtext{\textit{\textbf{51.50}}} \\ 
  \hdashline
  \rowcolor{mygray}
  LPT+PI-H2T & ours & 46.93 & 52.61 & 50.57&  50.12\\  
  \rowcolor{mygray}
  LIFT+PI-H2T & ours & 51.24 & 52.06 & 51.37 & \textit{\textbf{51.63 }}\\   
  \hlinew{1pt}

 \end{tabular}
}
\end{table}

\subsubsection{Comparison Results.}
\label{sec:resluts}

\vspace{0.5em}
\noindent\textbf{CIFAR100-LT.} The results are presented in Table~\ref{tab:cifar_results}. 
PI-H2T is capable of further enhancing commonly used backbones, including single-expert, multi-expert, and MHSA-based models, demonstrating superior performance over other methods in handling varying degrees of class imbalance. 
Applying PI-H2T to decoupled representation with CE loss (DR+PI-H2T) significantly enhances performance, surpassing the latest methods as well as H2T. 
The greatest improvement exceeds 3\% over DR+MU and 2\% over H2T.
DR+PI-H2T can even surpass H2T with GCL loss (e.g., 45.58\% vs. 45.24\% with $\rho = 100$), which is one of the state-of-the-art two-stage methods.
GCL with PI-H2T (GCL+PI-H2T) further amplifies this advantage and achieves performance comparable to multi-expert models within a single-model framework.
For example, GCL+PI-H2T attains 46.34\% top-1 accuracy with $\rho = 200$, surpassing RIDE, which achieves 45.84\%.
For multi-expert models, PI-H2T remains highly effective in multi-expert and MHSA-based models.
For example, PI-H2T increases RIDE performance by 1.97\% and LPT performance by 1.11\% with $\rho = 200$, respectively.

\vspace{0.5em}
\noindent\textbf{ImageNet-LT\protect\footnotemark[3] and iNaturalist 2018.}
\footnotetext[3]{Given that the model pre-trained on ImageNet-21K inherently contains information from ImageNet-1K, which is the balanced version of ImageNet-LT, we opt not to implement LPT (or with PI-H2T) on ImageNet-LT.} 
Tables~\ref{tab:img} and \ref{tab:iNat} compare PI-H2T on large-scale datasets with existing methods.
It can be observed that, compared to H2T, PI-H2T achieves more substantial performance improvements in existing methods, particularly when adapted with GCL, demonstrating its effectiveness on large-scale datasets.
The baseline model using CE loss achieves high accuracy on frequent classes, but its performance on rare classes remains unsatisfactory. 
CMO leverages the background of head classes to augment tail classes without reducing the number of training samples for head classes, thereby enhancing model performance across both head and tail classes.
However, CMO combined with CE (CE+CMO) shows less competitive results in improving tail-class performance.
ABL also adjusts the decision boundary, but its effectiveness is not as pronounced as H2T and PI-H2T.
For example, on imageNet-LT, DR+PI-H2T achieves 53.37\%, already surpassing the comparison methods.
GCL+PI-H2T reaches 55.08\%, the highest among all single-model comparison methods, and even exceeds multi-expert models such as BBN, RIDE, and ResLT. 
Additionally, RIDE+PI-H2T significantly outperforms the original RIDE model, achieving 57.45\% compared to 55.72\%.
On iNaturalist 2018, GCL+PI-H2T achieved a top-1 classification accuracy of 75.02\%, significantly improving upon single-model approaches. 
In MHSA-based methods, PI-H2T also demonstrated a notable enhancement, with an increase of nearly 1\% (77.07\% vs. 76.10\%).

\vspace{0.5em}
\noindent\textbf{Places-LT.}
The results for Places-LT are presented in Table~\ref{tab:pla}.
PI-H2T can still enhance the performance of existing methods, although it shows less improvement compared with H2T.
Notably, H2T requires a hyperparameter for the fusion ratio. 
In contrast, PI-H2T not only further improves existing methods over H2T but also eliminates the need for manual hyperparameter selection.
One possible reason is that the backbone of the CNN-based method for Places-LT is pre-trained on ImageNet, with only a subset of parameters fine-tuned. 
This results in a relatively fixed feature space that lacks the flexibility of a model trained from scratch. 
In addition, Eq.~(\ref{eq:PIF_margin}) indicates that for the margin to be effective as a positive value, the model should correctly classify the majority of target classes. 
For example, in the single model case, the training accuracy of Places-LT is 63.16\%, whereas for CIFAR100-LT with an imbalance factor of 100, the training accuracy reaches 84.70\%. 
The improvement on CIFAR100-LT is more noticeable in comparison.
On Places-LT, the automatic margin effect introduced by PIF is minimal, and the class boundary adjusted by H2TF is less pronounced compared to those observed on other datasets.
The enhancement effect of both PI-H2T and H2T is less obvious.
\redtext{
To further understand this behavior, we recorded the margin values after Stage-1 training with PIF, as shown in Fig.~\ref{fig:margin_comp}. 
On Places-LT, the margins are smaller and more concentrated, reflecting limited flexibility to adjust class boundaries. 
In contrast, CIFAR100-LT shows wider margins with higher averages, allowing more effective class separation. 
}
Notably, PI-H2T remains effective for MHSA-based models.
The results in Table~\ref{tab:pla} for LPT+PI-H2T indicate that PI-H2T can effectively improve MHSA-based models, enhancing LPT performance from 49.70\% to 50.12\%.

\begin{figure}[t]
    \centering
    \includegraphics[width=0.95\linewidth]{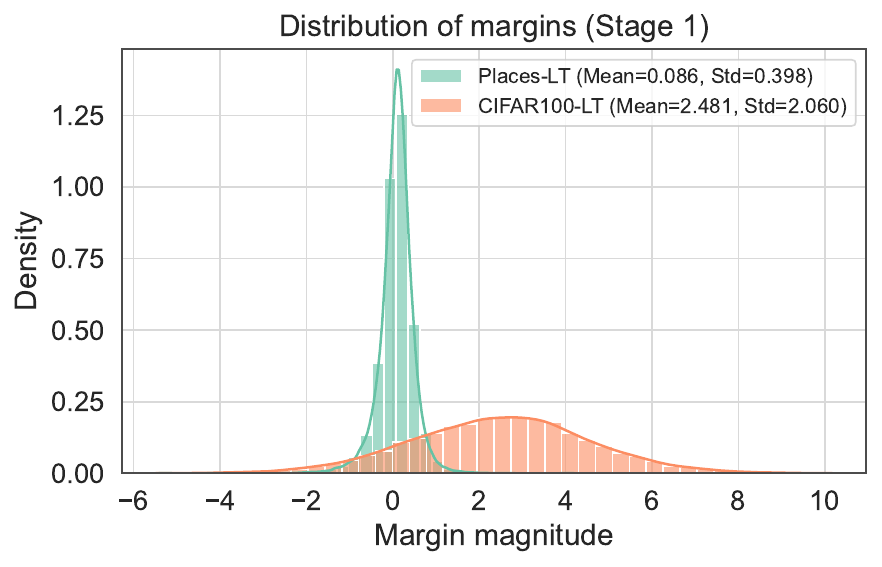}
    \vspace{-1em}
    \caption{\redtext{Distribution comparison of class margins on CIFAR100-LT ($IF$=100) and Places-LT after Stage 1 training of PI-H2T.}}
    \label{fig:margin_comp}
\end{figure}

\begin{table}[t]
\renewcommand{\thefootnote}{\fnsymbol{footnote}}
 \centering  
 \caption{\redtext{Quantitative comparison with H2T (the conference version).
 All numbers denote top-1 accuracy (\%).
 The backbone is based on ResNet and the loss function is CE.}}
 \label{tab:comp_h2T-ce}
 \resizebox{1.\linewidth}{!}
 {\redtext{
 \renewcommand{\arraystretch}{1.2}
  \begin{tabular}{ l|c  c c | c c }
   \hlinew{1pt}
  \multirow{2}{*}{Dataset} &\multicolumn{3}{c|}{H2T} &  \multicolumn{2}{c}{PI-H2T}\\ 
  \cline{2-6}
          & St.1 & St.2 & St.2 (Auto $r$) & St.1 & St. 2 \\
  \hline
  CIFAR100-LT ($IF=100$) & 39.55 & 47.73 & 48.37 & 42.19  & 49.22\\ 
  imageNet-LT   & 44.51 & 52.74 & 52.99  & 47.48 & 53.37\\ 
  Naturalist 2018 & 66.86 & 71.81 & 71.56  & 67.65  & 71.34\\  
  Places-LT & 29.43 & 40.95 &  40.74 & 29.92 & 40.75\\ 
  \hlinew{1pt}
 \end{tabular}}
 }
\end{table}

\begin{table}[t]
\renewcommand{\thefootnote}{\fnsymbol{footnote}}
 \centering  
 \caption{\redtext{Quantitative comparison with H2T (the conference version).
 All numbers denote top-1 accuracy (\%).
 The backbone is based on ResNet and the loss function is GCL.}}
 \label{tab:comp_h2T-gcl}
 \resizebox{1.\linewidth}{!}
 {\redtext{
 \renewcommand{\arraystretch}{1.2}
  \begin{tabular}{ l|c  c c | c c }
   \hlinew{1pt}
  \multirow{2}{*}{Dataset} &\multicolumn{3}{c|}{H2T} &  \multicolumn{2}{c}{PI-H2T}\\ 
  \cline{2-6}
          & St.1 & St.2 & St.2 (Auto $r$) & St.1 & St. 2 \\
  \hline
  CIFAR100-LT ($IF=100$) & 46.57  & 48.88 & 49.34  & 47.54   & 49.88\\ 
  imageNet-LT  &  51.93 & 54.62 &  54.79  & 52.88  & 55.08\\ 
  Naturalist 2018 & 70.33 & 71.62 & 71.87  &  73.39  & 75.02\\  
  Places-LT & 33.93 & 40.73 &  40.79  &  35.00 & 40.56\\ 
  \hlinew{1pt}
 \end{tabular}}
 }
\end{table}

\begin{figure}[t]
    \centering 
    \subfigure[CE]{\label{fig:abl_CE}
    \includegraphics[width=1.\linewidth]{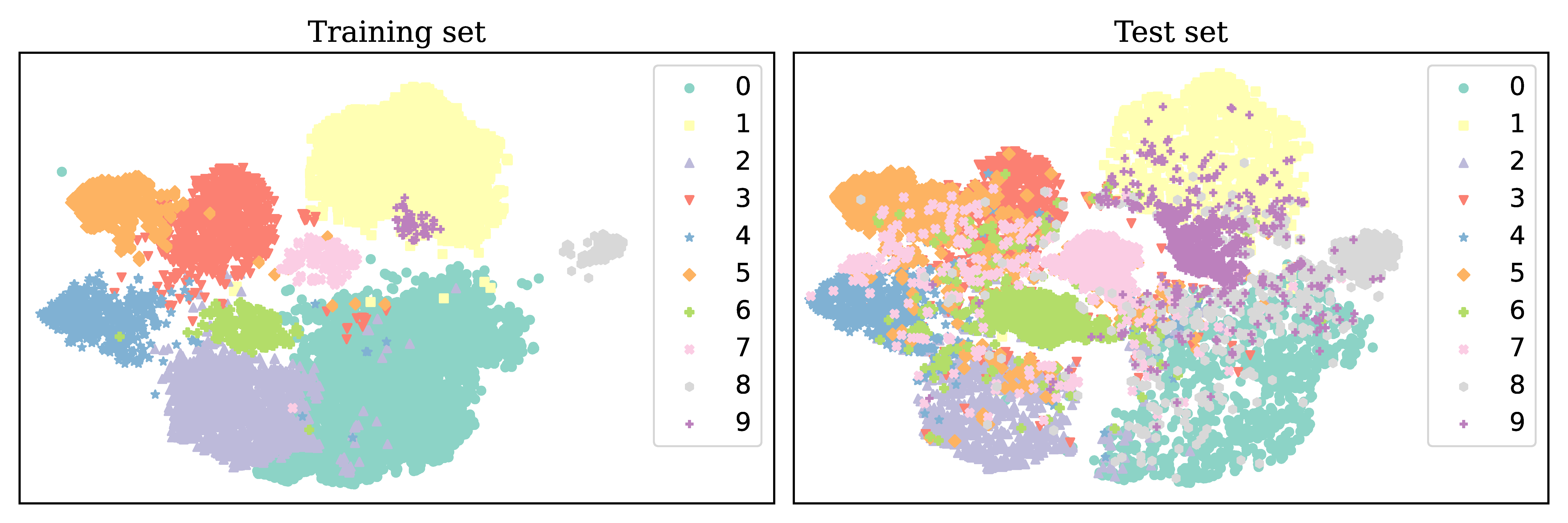} 
    }
    \subfigure[CE+PIF]{\label{fig:abl_H2TF}
    \includegraphics[width=1.\linewidth]{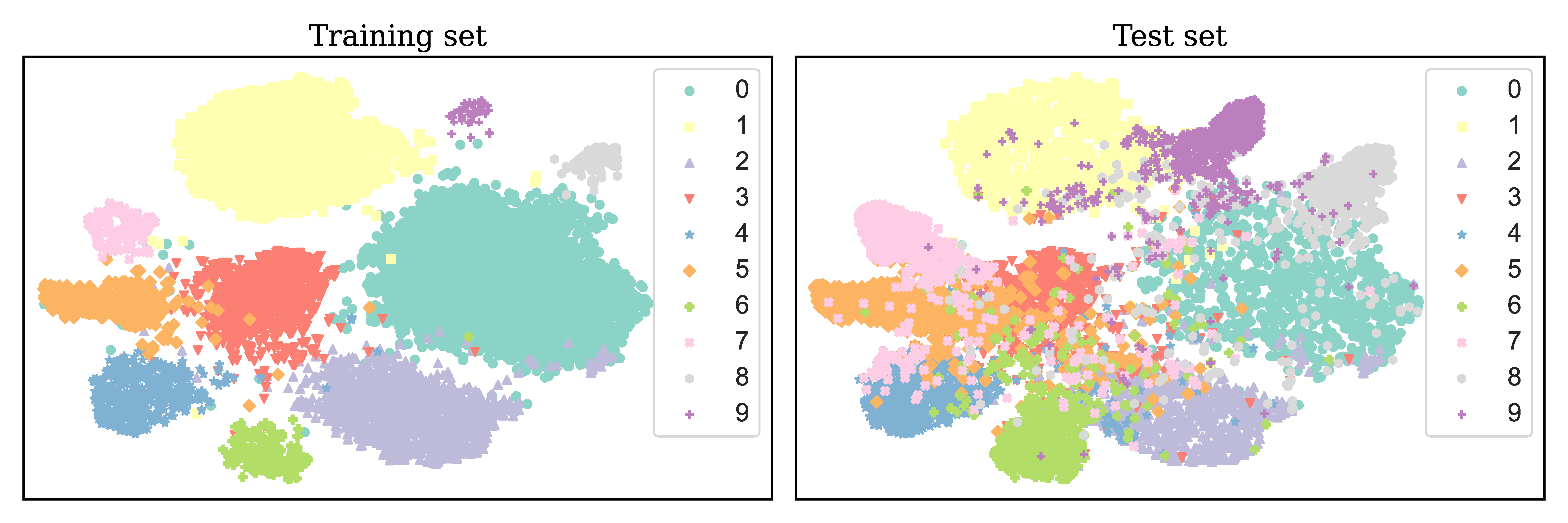} 
    }
\caption{Comparison of feature distributions between CE and CE+PIF via t-SNE visualization. The dataset is CIFAR10-LT. CE loss is adopted. }
\label{fig:abl_tsne}
\vspace{-0.5em}
\end{figure}

\vspace{0.5em}
\noindent\textbf{\redtext{Quantitative Comparison with H2T~\cite{LiMK2024H2T} (the Conference Version).}}
\redtext{
Tables~\ref{tab:comp_h2T-ce} and ~\ref{tab:comp_h2T-gcl} present the quantitative comparison between the proposed PI-H2T and the conference version H2T under two loss settings, i.e., CE loss and GCL loss~\cite{LiMK2022GCL}.
Overall, PI-H2T achieves competitive or superior performance across most datasets and training stages, verifying the effectiveness of the permutation-invariant feature fusion mechanism.
Under the CE loss, PI-H2T yields notable improvements on CIFAR100-LT and ImageNet-LT, while maintaining comparable results on Places-LT and iNaturalist 2018.
When combined with the GCL loss, PI-H2T further enhances performance on large-scale datasets, achieving over 2\% accuracy gain on iNaturalist 2018.
However, slight fluctuations can be observed in a few cases (e.g., Places-LT under CE loss), which stem from the relatively small inter-class variation and limited benefit from feature permutation in scene-centric datasets.
Despite these minor variations, the overall trend consistently supports that PI-H2T inherits the structural merits of H2T while providing improved adaptability and robustness across diverse data distributions and optimization objectives.
}

\begin{figure}
\begin{minipage}[c]{1.\linewidth} 
    \centering  
    \subfigure[Decision boundary by PIF st.1]{\label{fig:PI-st109} 
    \includegraphics[width=.48\linewidth, height=.42\linewidth]{./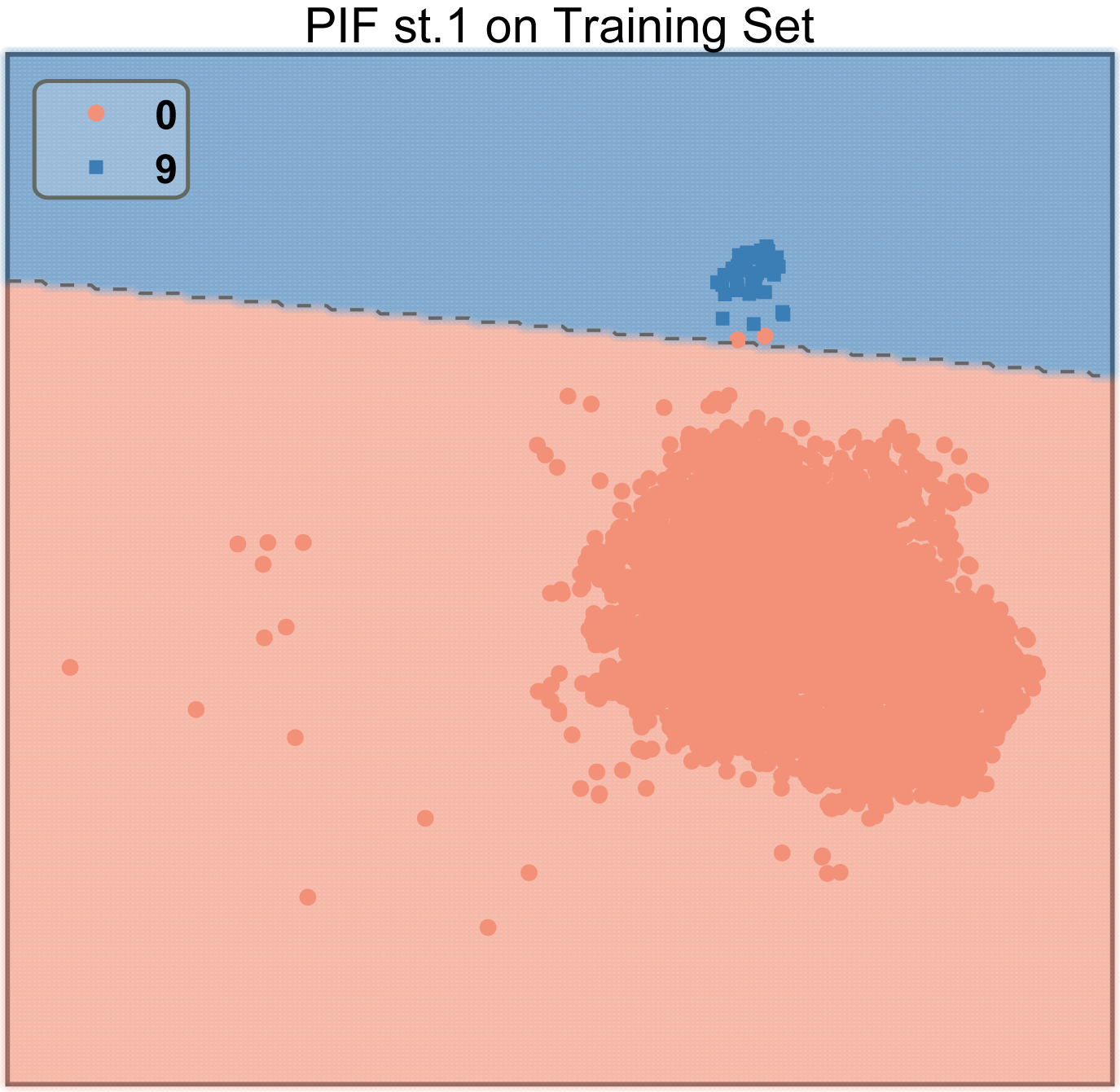}\label{fig:PI-st109-train09} 
    \includegraphics[width=.48\linewidth, height=.42\linewidth]{./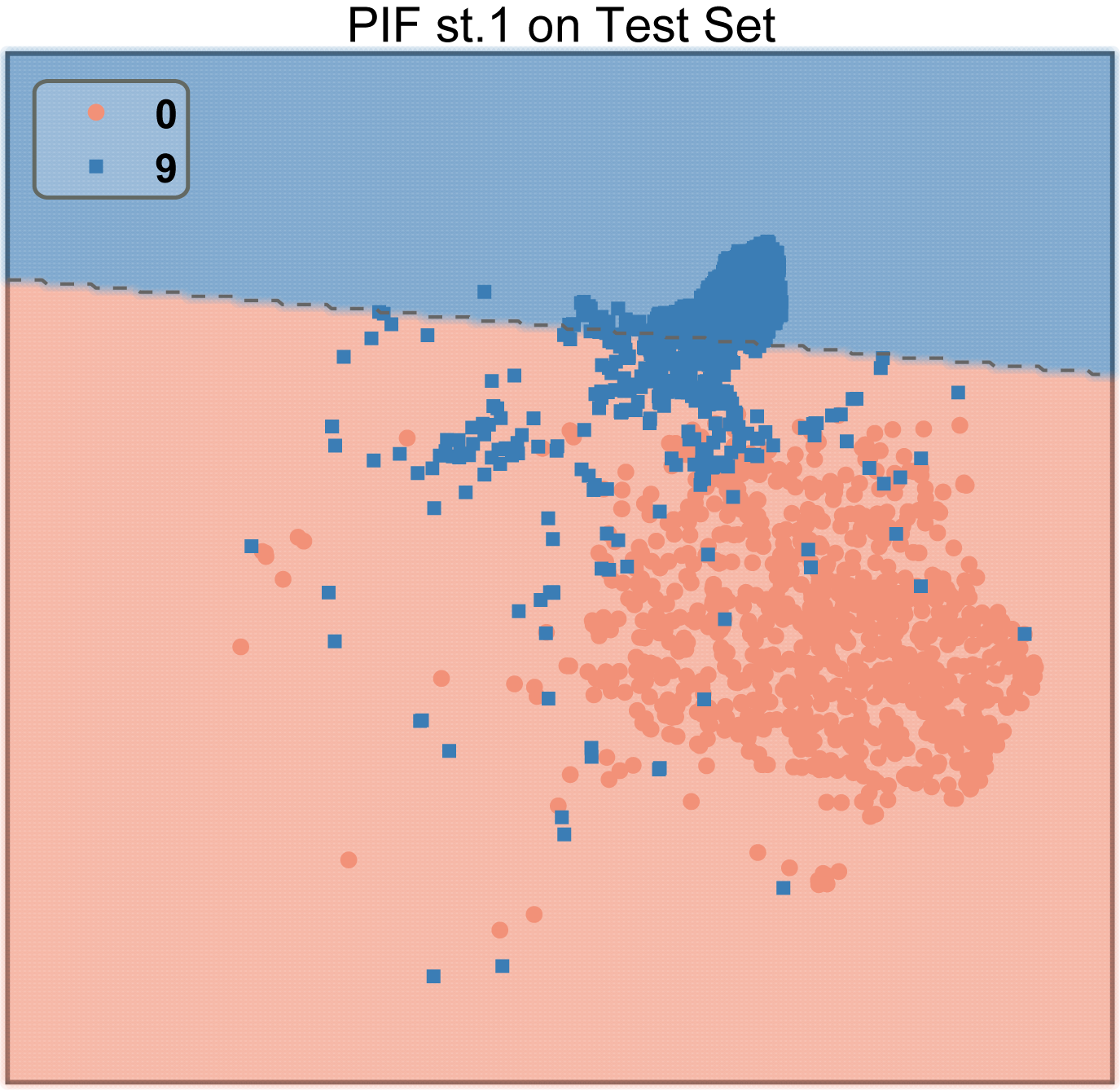}\label{fig:PI-st109-test09}  
    }
    \\
    \subfigure[Decision boundary by PIF+DR]{\label{fig:PI-DR09} 
    \includegraphics[width=.48\linewidth, height=.42\linewidth]{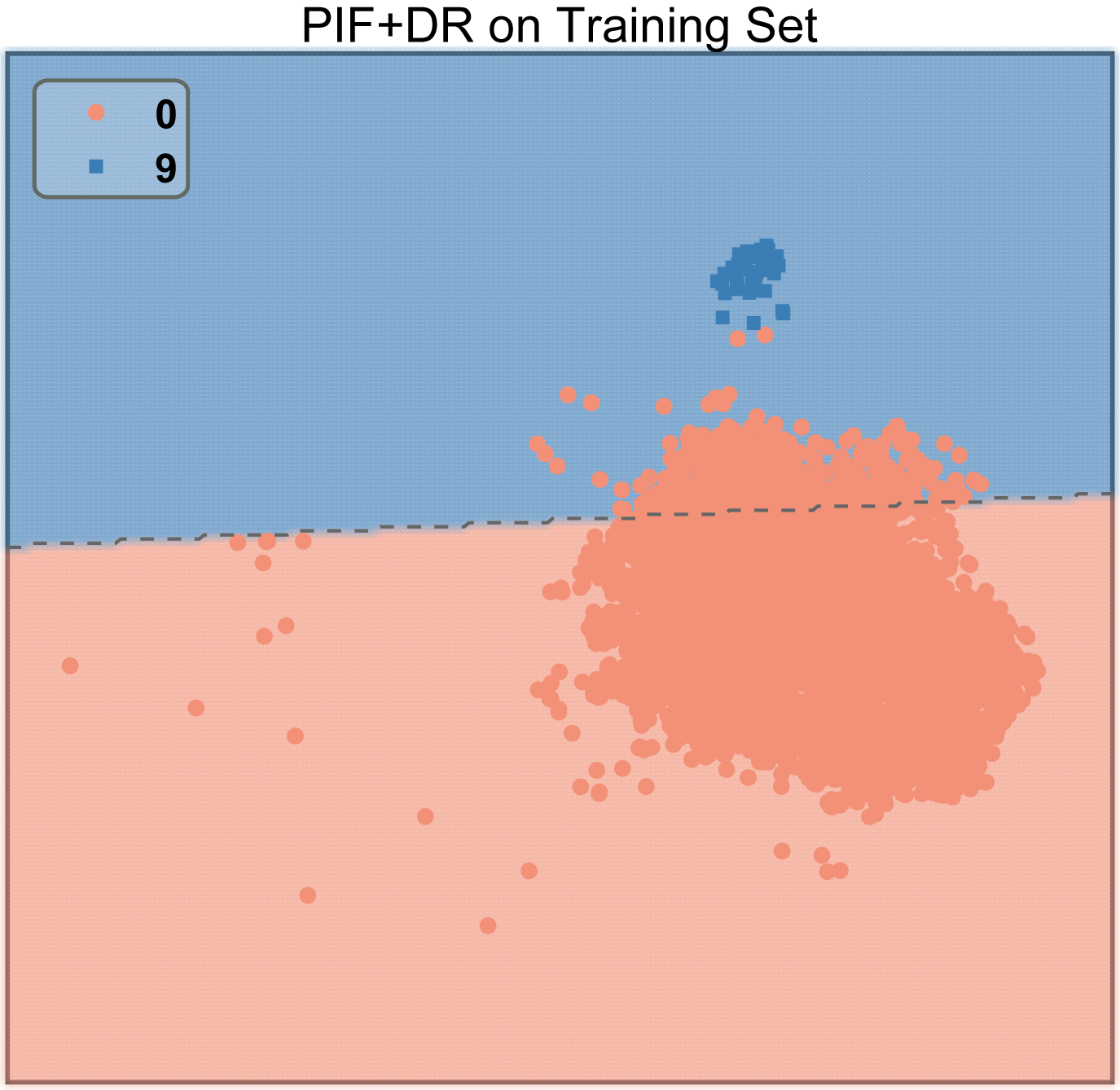}\label{fig:PI-DR-train09} 
    \includegraphics[width=.48\linewidth, height=.42\linewidth]{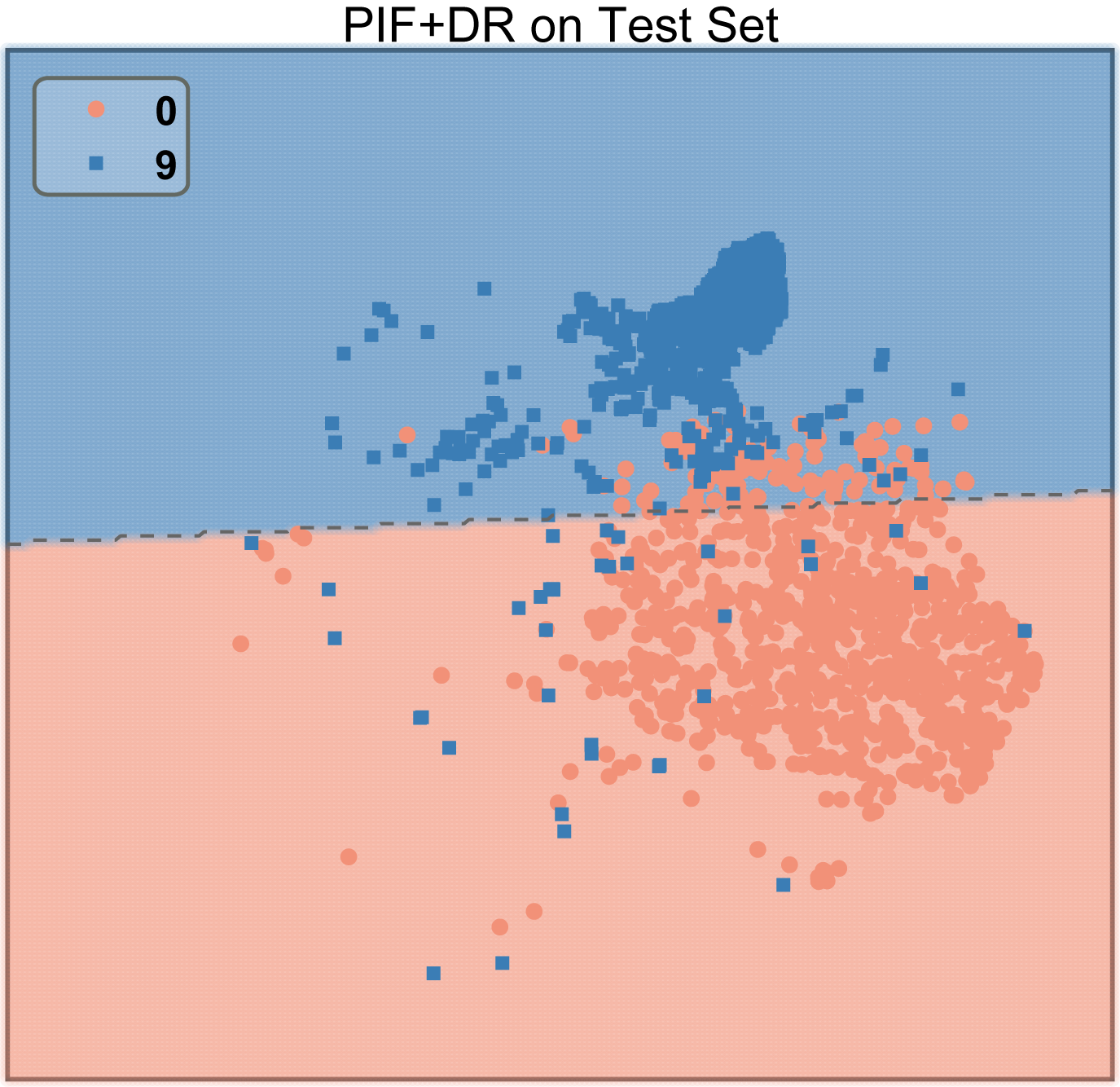}\label{fig:PI-DR-test09}  
    }
    \\
    \subfigure[Decision boundary by PI-H2T]{\label{fig:PI-H2T09} 
    \includegraphics[width=.48\linewidth, height=.42\linewidth]{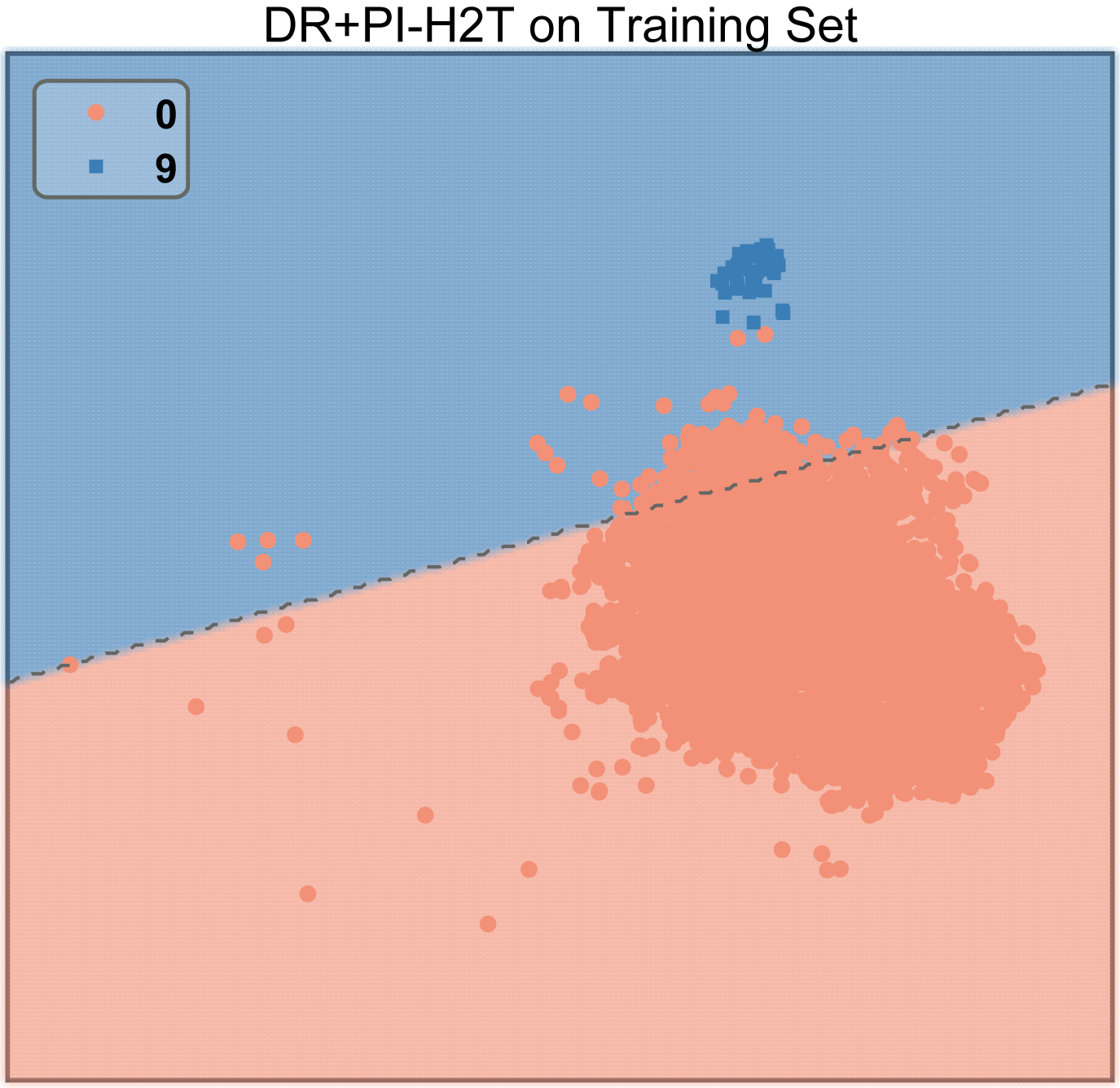}\label{fig:PI-H2T-train09} 
    \includegraphics[width=.48\linewidth, height=.42\linewidth]{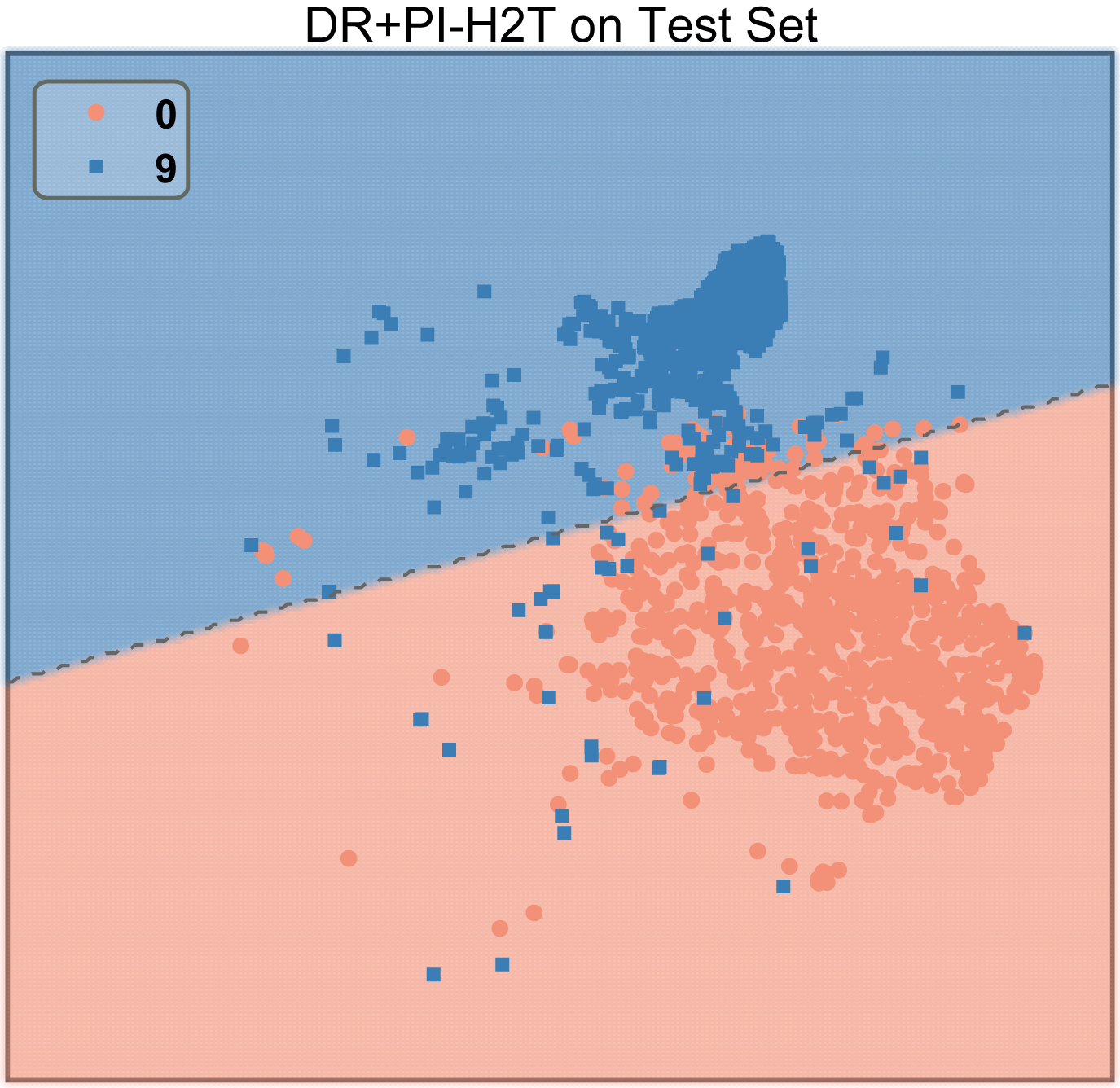}\label{fig:PI-H2T-test09}  
    }
\caption{Decision boundary between Class 0 and Class 9 on CIFAR10-LT with an imbalance factor of 100.}
\label{fig:boundary09}
\end{minipage}
\end{figure}
\begin{figure}[t]
\centering
  \subfigure[w.o. H2TF\label{fig:sub1}]{
    \includegraphics[width=0.48\linewidth]{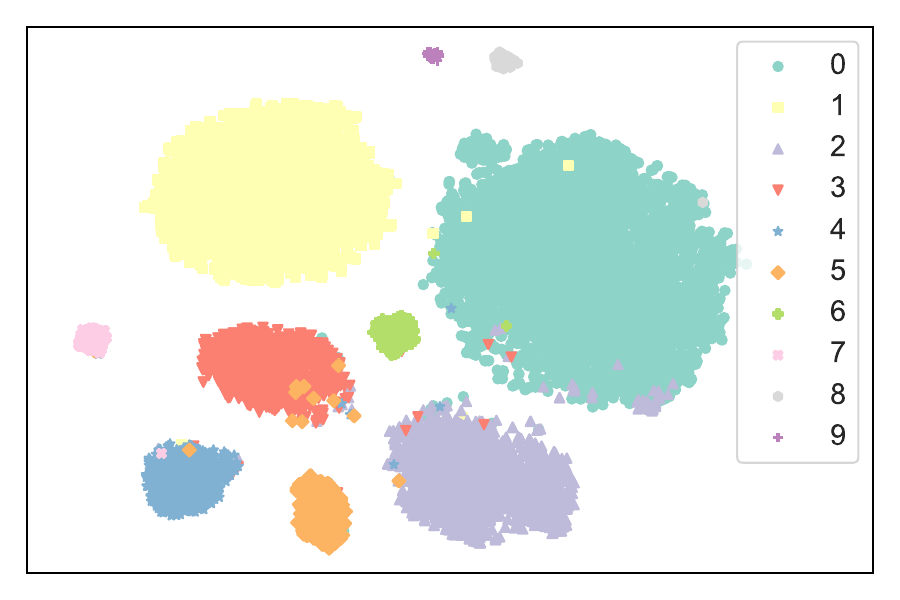}
  } \hspace{-1.3em}
  \subfigure[w. H2TF ($r=1$)\label{fig:sub2}]{
    \includegraphics[width=0.475\linewidth]{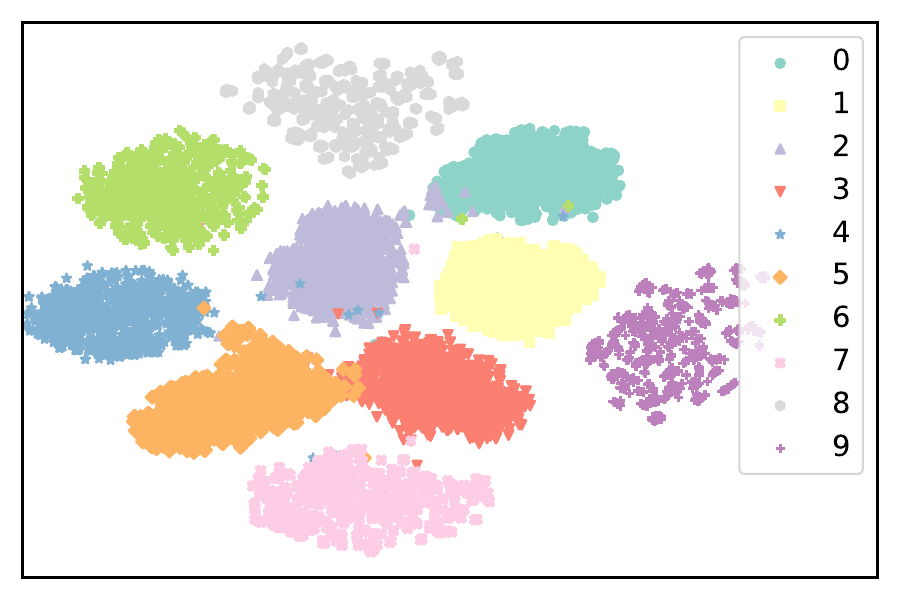}
  } 
  \vspace{-1em}
  \subfigure[w. H2TF ($r=0.8$)\label{fig:sub4}]{
    \includegraphics[width=0.48\linewidth]{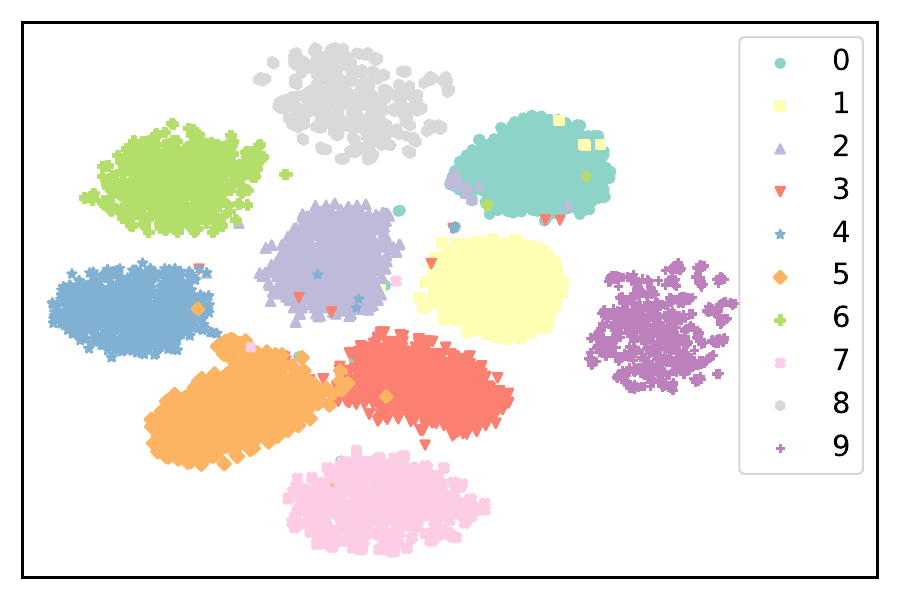}
  } \hspace{-1.3em}
  \subfigure[w. H2TF ($r=0.6$)\label{fig:sub6}]{
    \includegraphics[width=0.48\linewidth]{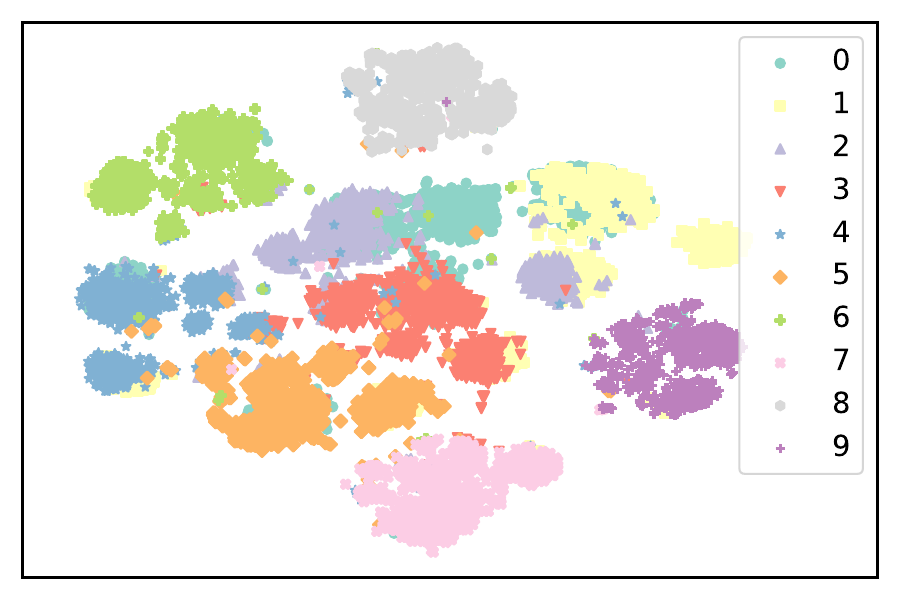}
  }  
  \vspace{-1em}
  \subfigure[w. H2TF ($r \propto d_x$)\label{fig:sub8}]{
    \includegraphics[width=0.48\linewidth]{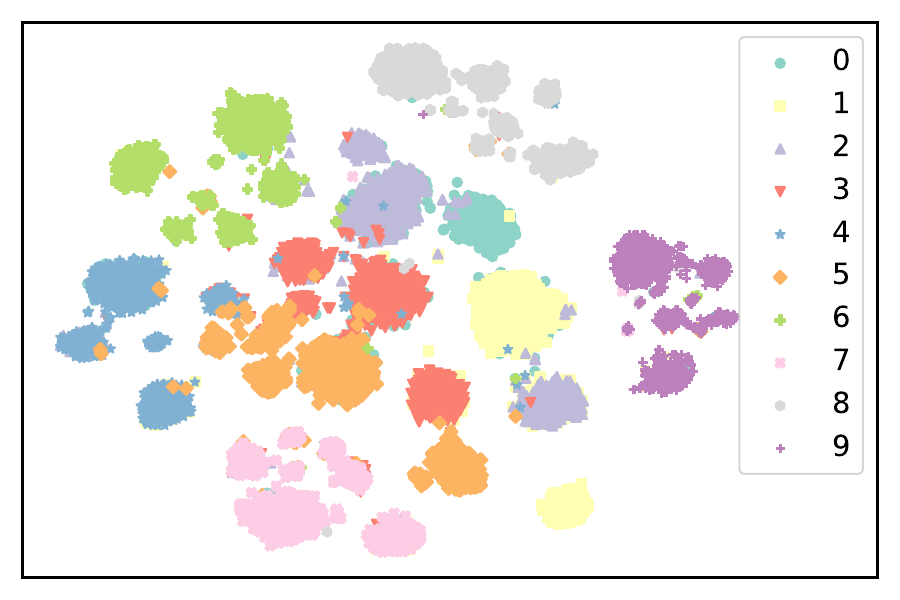} 
    }\hspace{-1.0em}
  \subfigure[w. H2TF ($r=0.3$)\label{fig:sub10}]{
    \includegraphics[width=0.48\linewidth]{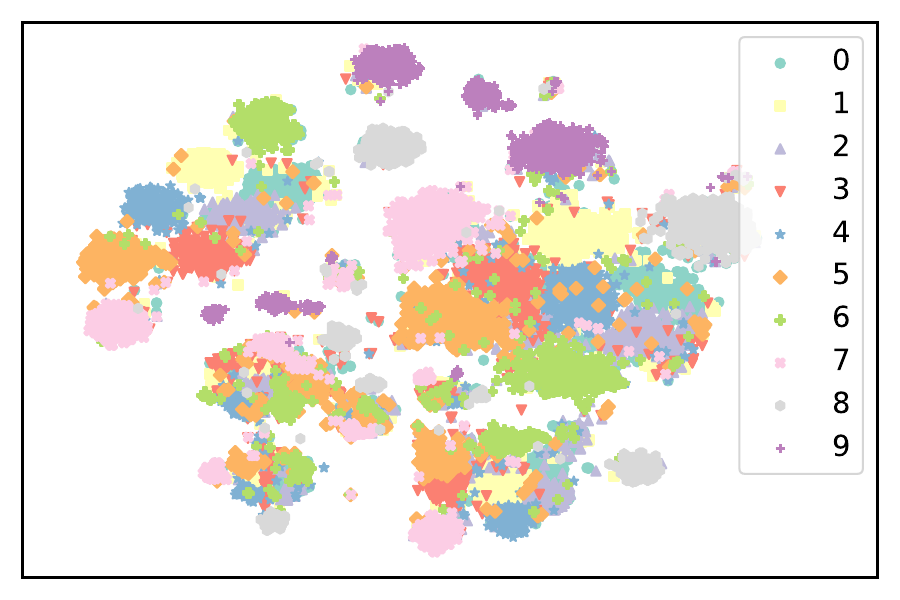}
  } 
  \caption{t-SNE visualization of feature distributions with varying fusion ratios ($r$ in Eq.~\ref{eq:fh2t}) in H2TF strategy} \label{fig:tsne_PIH2T}
  \vspace{-0.5em}
\end{figure}

\subsection{Further Analysis}\label{sec:ablation}
This section visualizes the feature distributions produced by PIF, the decision boundaries learned by H2TF across all classes, and the feature distributions after applying H2TF. 
Unless otherwise specified, all experiments are conducted using the cross-entropy loss on the CIFAR10-LT dataset with an imbalance factor of 100 to facilitate clear visualization. 
The backbone network employed is a single ResNet-32. 
Additionally, we analyze the impact of different permutation-invariant operations and balanced datasets to further validate the effectiveness of PIF.

\vspace{0.5em}
\noindent\textbf{Analysis of PIF Effectiveness via Feature Distribution Visualization.}\label{sec:vis_feat}
Fig.~\ref{fig:abl_tsne} visualizes the embedding space of a single ResNet-32 model trained with both vanilla CE and CE+PIF.
From the results of the training set, compared to CE loss, the feature distribution of each class obtained by CE+PIF exhibits clearer class margins. 
For example, the margins between Classes 8 and 0, as well as between Classes 9 and 0, are more pronounced with CE+PIF.
\redtext{Quantitatively, the intra-class distance of tail classes decreases substantially from 7.13 to 4.50 after applying PIF, indicating reduced feature dispersion for under-represented categories. 
The head and medium classes remain relatively stable (0.85 → 1.11 and 1.54 → 1.72, respectively), showing that PIF improves feature quality without negatively affecting well-represented classes. }
These observations are consistent with the t-SNE results shown in Fig.~\ref{fig:abl_H2TF} and further support the theoretical motivation of ``automatic margins'' discussed in Remark~\ref{re:PIF} of Section~\ref{sec:ana}.

\vspace{0.5em}
\noindent\textbf{Impact of Different Choices in Permutation-Invariant Operations for PIF.}
To investigate how different choices in permutation-invariant (PI) operations affect the performance of the proposed PIF module, we conduct an ablation study on CIFAR100-LT with an imbalance factor of 100.
The results are presented in Table~\ref{tab:PI_abl}.
We compare the top-1 classification accuracy achieved using different PI operations, including mean, maximum (Max), and minimum (Min) operations, as well as a non-PI baseline (Base).
For all variants, H2TF is applied during stage 2.
All three PI operations yield consistent improvements over the base model, validating the effectiveness of permutation-invariant representations in enhancing feature representation.
Notably, the Max operation achieves the highest accuracy in Stage 2 (49.53\%), while Min performs slightly better in Stage 1 (42.37\%).
Overall, these results demonstrate the robustness of PIF to the choice of aggregation function and demonstrate its general efficacy.

\begin{table}[t]
\renewcommand{\thefootnote}{\fnsymbol{footnote}}
 \centering 
 \caption{Top-1 classification accuracy (\%) comparison for PI operation.}
 \label{tab:PI_abl}
 \resizebox{0.75\linewidth}{!}  
 {\setlength{\tabcolsep}{16pt}
 \renewcommand{\arraystretch}{1.2}
  \begin{tabular}{>{\color{black}}c | >{\color{black}}c >{\color{black}}c }
   \hlinew{1pt}
  PI Operation  &Stage 1 & Stage 2 \\
  \hline
  Baseline & 39.55  & 45.68  \\ 
  \hdashline
  Mean & 42.19  & 49.22  \\ 
  Max.  & 42.23  & \underline{\textbf{49.53}} \\ 
  Min.  & \underline{\textbf{42.37}}  & 48.53 \\ 
  \hlinew{1pt}
 \end{tabular}
 }\vspace{-0.5em}
\end{table}
\begin{table}[t]
\renewcommand{\thefootnote}{\fnsymbol{footnote}}
 \centering  
 \caption{Top-1 accuracy (\%) on CIFAR10-LT with an imbalance factor of 100. st.1 represents stage 1 of the corresponding methods.}
 \label{tab:cifar10}
 \resizebox{1.\linewidth}{!}
 {
 \renewcommand{\arraystretch}{1.2}
  \begin{tabular}{ c  c c | c c c}
   \hlinew{1pt}
  DR (st.1) & DR & DR+H2TF & PIF (st.1) & PIF+DR & PI-H2TF\\
  \hline
  72.66 & 79.15 & 82.87 & 73.65 & 82.50 & 83.04  \\ 
  \hlinew{1pt}
 \end{tabular}
 }
\end{table}

\vspace{0.5em}
\noindent\textbf{Visualization of Decision Boundary.}\label{sec:vis_boundary}
Fig.~\ref{fig:boundary09} presents the t-SNE visualizations of the distribution in embedding space and the decision boundary between the head and tail classes.
The results validate our primary motivation: PIF enhances representations and automatically provides sufficient and appropriate margins. 
Meanwhile, H2TF further calibrates the decision boundary.
For clarity and ease of presentation, the experiment is conducted on the CIFAR10-LT dataset with an imbalance factor of 100, focusing on the most frequently misclassified classes, specifically Class 0 versus Class 9.

It can be observed that the feature distribution of the tail classes is sparser compared to that of the head class.
The limitation of H2T is that the improvement potential is constrained by the unchanged feature distribution.
In contrast, PI-H2T requires PIF to train the model to achieve a better feature distribution. 
As shown in Fig.~\ref{fig:PI-H2T09}, the intra-class features of PI-H2T are more tightly clustered. 
Notably, even without explicitly adding a class margin to the logit, i.e., by using only the basic CE loss, the feature distribution with class margin is automatically obtained. 
On this improved feature space, H2TF can more effectively adjust the class boundaries. 
The accompanying drawback, however, is that the classifier adjustment must be trained after the representation learning stage.
In addition, some failure cases still remain. 
\redtext{A small portion of head-class features can be misassigned to tail-class areas, because H2TF slightly sacrifices head-class feature space during training to enrich tail classes. 
Additionally, some tail-class samples in the test set are inherently difficult to distinguish due to feature similarity or minor variations, leading to occasional overlap with head-class regions.
}
\redtext{The decision boundary for the baselines (DR and DR+H2T) are present in Appendix~C.} 

Moreover, we present the class boundary achieved without using H2TF, relying solely on PIF (PIF st.1), as illustrated in Fig~\ref{fig:PI-st109}.
It can be observed that the uncalibrated classification boundaries fit the training set well but lack generalization to the test set.
Using PIF to enhance representation while employing the calibration strategy in DR (DR+PIF) can improve overall performance; however, it often sacrifices the performance of the head class, as shown in Fig~\ref{fig:PI-DR09}.
In contrast, H2TF effectively balances the performance of both the head and tail classes, as illustrated in Fig~\ref{fig:PI-H2T09}.
Table~\ref{tab:cifar10} provides the average accuracy across all classes for CIFAR10-LT for reference.
PIF achieves a superior representation space over DR (73.65\% vs. 72.96\% in stage 1), enabling more effective classifier correction and greater performance gains in stage 2.

\begin{figure}[htpb]  
    \centering
    \vspace{-1em}
    \subfigure[\redtext{Distribution of $r$ on CIFAR10-LT at best epoch (Acc.: 83.04\%).}]{
        \includegraphics[width=0.95\linewidth]{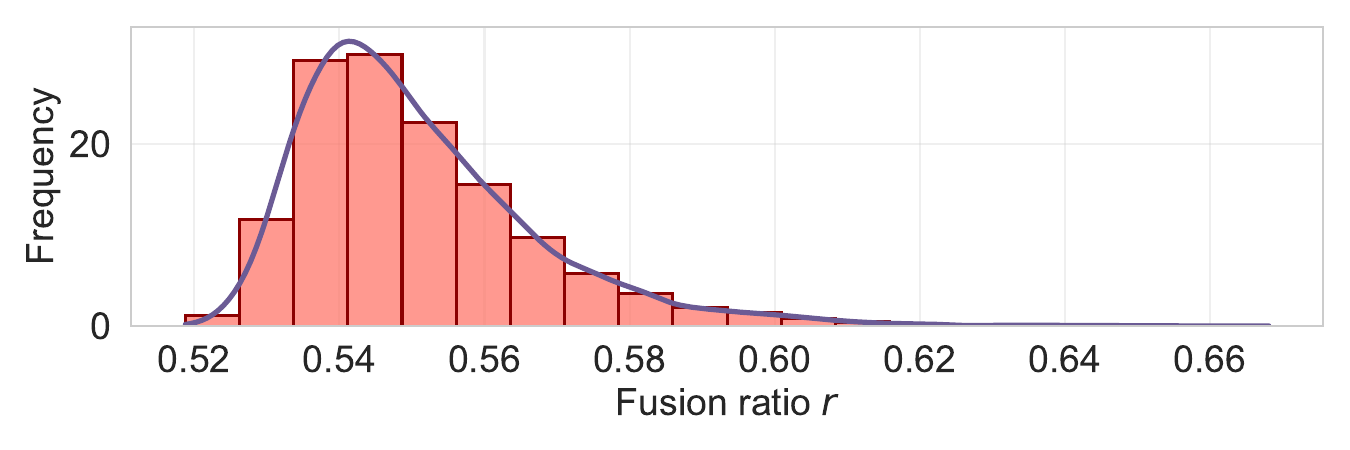}
        \label{fig:sub_hist_best-cifar10}
    }
    \subfigure[\redtext{Distribution of $r$ on CIFAR10-LT at last epoch (Acc.: 82.94\%).}]{
        \includegraphics[width=0.95\linewidth]{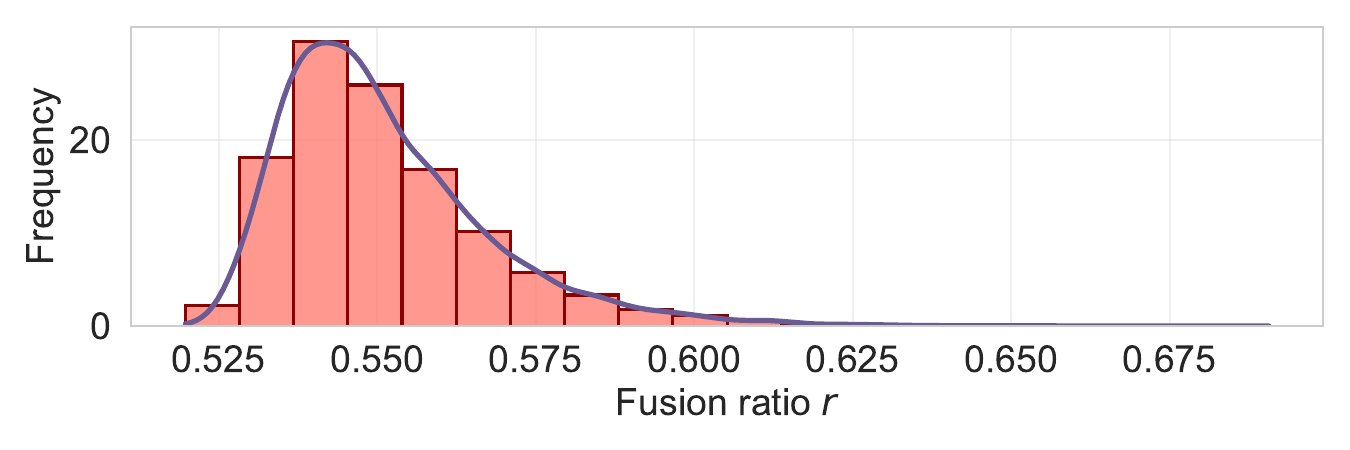}
        \label{fig:sub_hist_last-cifar10}
    }    
    \begin{tikzpicture}
        \draw[dashed, gray, very thick] (0,0) -- (9,0);
    \end{tikzpicture}
    \subfigure[\redtext{Distribution of $r$ on CIFAR100-LT at best epoch (Acc.: 49.22\%).}]{
        \includegraphics[width=0.95\linewidth]{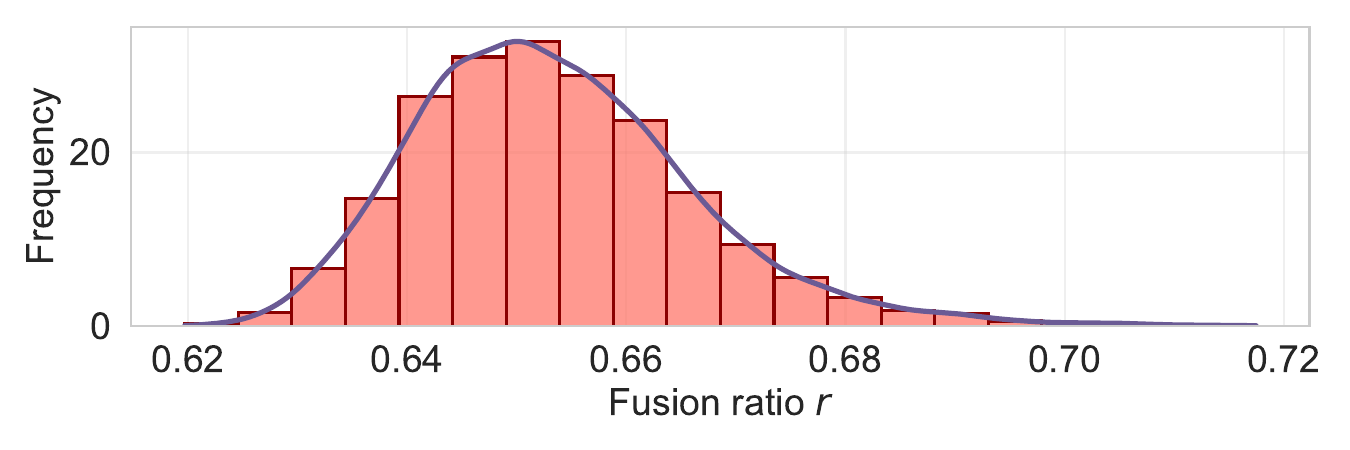}
        \label{fig:sub_hist_best}
    }
    \subfigure[\redtext{Distribution of $r$ on CIFAR100-LT at last epoch (Acc.: 49.04\%).}]{
        \includegraphics[width=0.95\linewidth]{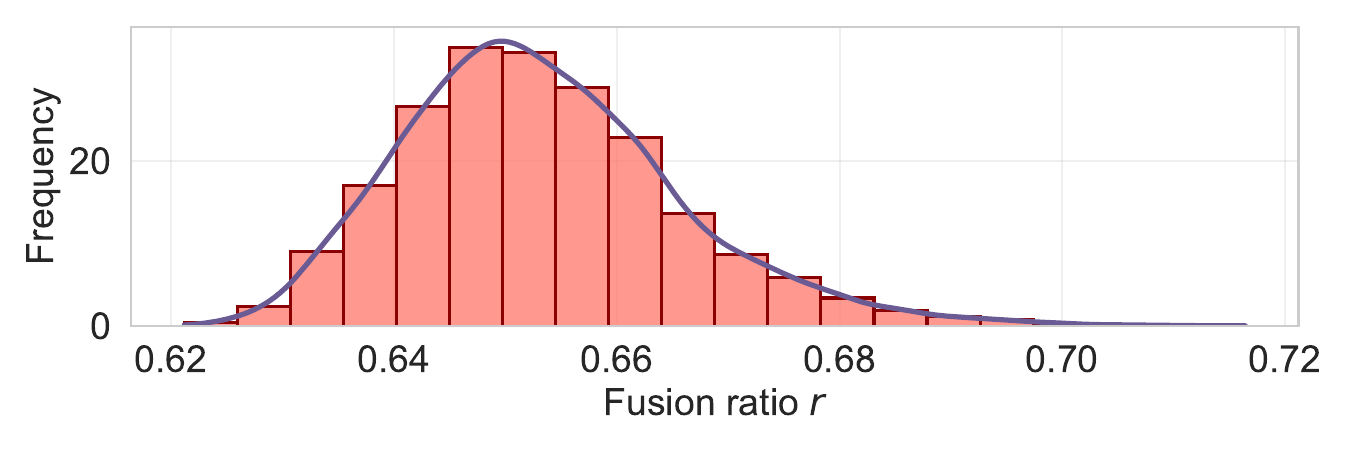}
        \label{fig:sub_hist_last}
    }
    \caption{\redtext{Distribution of automatically determined fusion ratios ($r$) in the proposed PI-H2T framework (IF = 100).}}
    \label{fig:fusion_hist}
    \vspace{-1em}
\end{figure}

\begin{figure}[t]
    \centering
    \subfigure[\redtext{Average $r$ on CIFAR10-LT (IF=100)}]{
        \includegraphics[width=0.95\linewidth, height=0.38\linewidth]{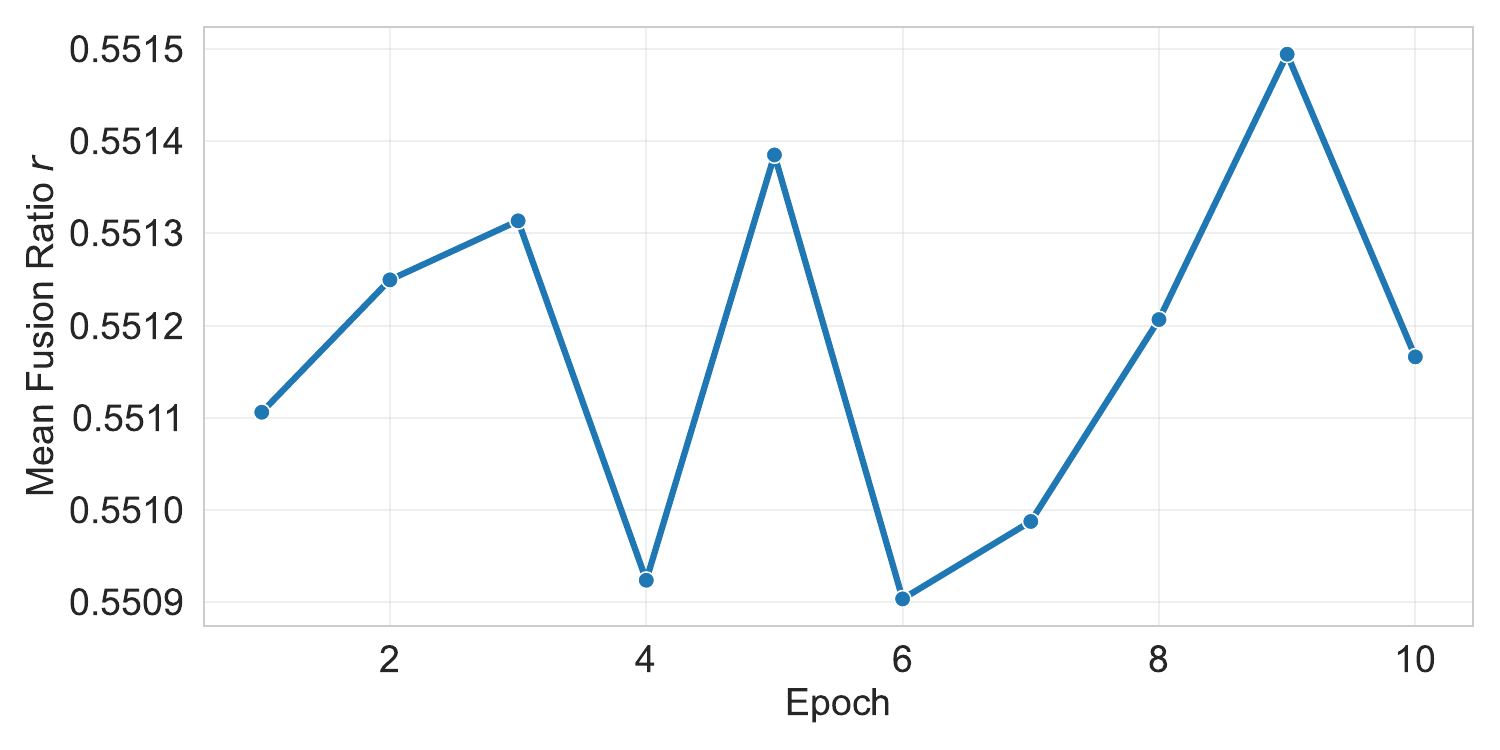}
        \label{fig:fusion_line-cifar10}
    }
    \subfigure[\redtext{Average $r$ on CIFAR100-LT (IF=100)}]{
        \includegraphics[width=0.95\linewidth, height=0.38\linewidth]{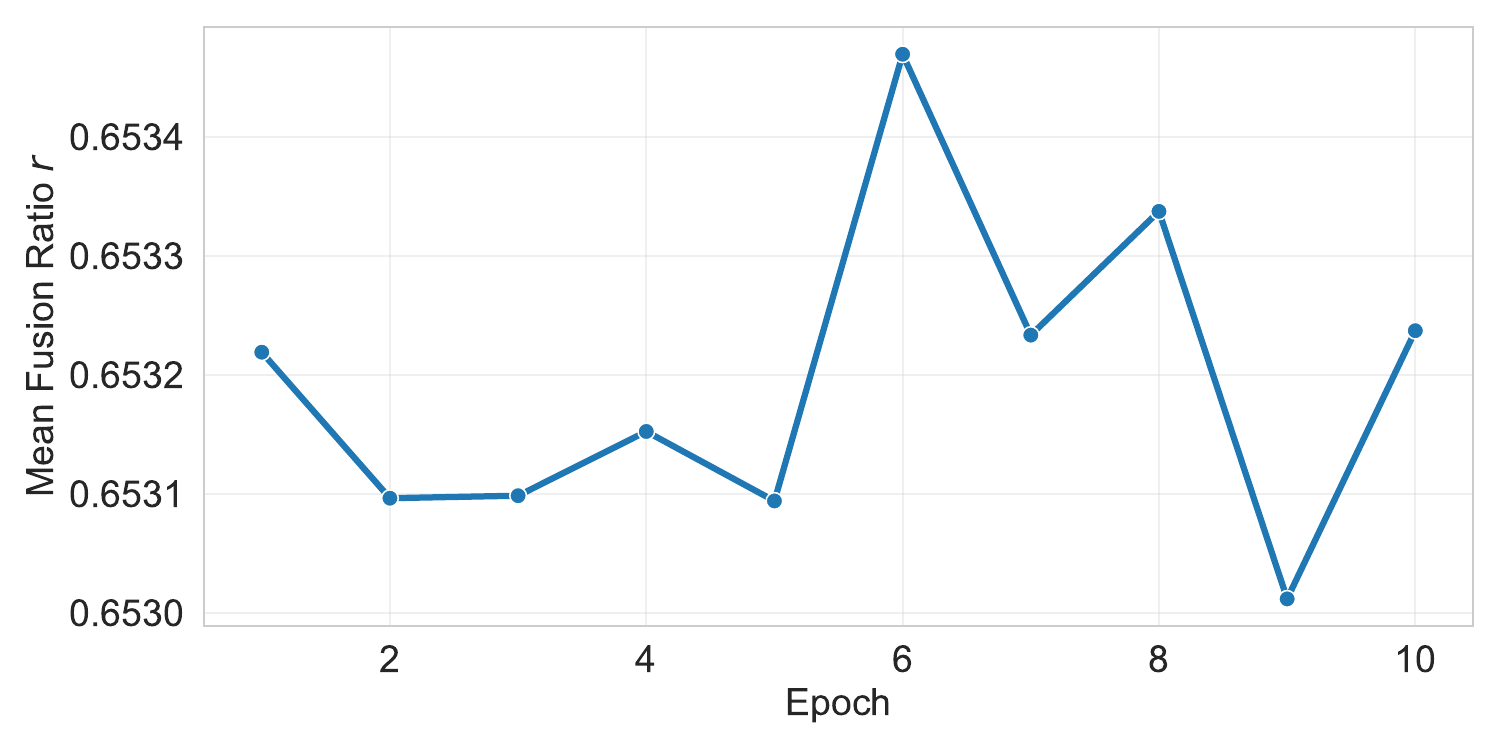}
        \label{fig:fusion_line-cifar100}
    }
    \caption{\redtext{Evolution of the average fusion ratio ($r$) across training epochs.}}
    \vspace{-0.5em}
    \label{fig:fusion_line}
\end{figure}

\vspace{0.5em}
\noindent\textbf{Feature Distribution for H2TF.}\label{sec:vis_H2TF}
To further examine the effect of fusion ratio in the H2TF strategy, we visualize the feature distributions under varying degrees of sample integration.
We compare different fusion strategies, including fixed fusion weights and the adaptive fusion weight proportional to the similarity-based distance, as formulated in Eq.~\ref{eq:r_proto_d}.
The results are shown in Fig.~\ref{fig:tsne_PIH2T}.
In Fig.~\ref{fig:sub2}, we apply balanced sampling without incorporating feature fusion from other samples, namely, H2TF with $r=1$. 
Compared to the original instance-wise sampling in long-tailed data (Fig.~\ref{fig:sub1}), this strategy partially alleviates class imbalance. 
However, the feature distribution of tail classes remains sparse and fragmented.
As $r$ decreases (Figs.~\ref{fig:sub4} and \ref{fig:sub6}), the samples progressively fill the representation space, effectively mitigating the dominance of head class features and addressing the motivation described in Fig.~\ref{fig:dec_bnd} of Section~\ref{sec:intro}.
In particular, Fig.~\ref{fig:sub8} demonstrates that an appropriately chosen fusion ratio leads to a more uniform distribution, with features being better aligned within their corresponding clusters.
Conversely, excessively reducing the self-feature component and overly relying on fused features introduces semantic noise, leading to confusion in the feature space, as shown in Fig.~\ref{fig:sub10}.
It is important to note that H2TF is applied exclusively during the classifier rectification phase. 
Specifically, the decision boundary is adjusted based on features perturbed by H2TF, while the original backbone features remain unchanged. 
During inference, H2TF is not used.
Therefore, the feature representations are unaffected.
The full visualization results can be found in Appendix~D. 

\redtext{Fig.~\ref{fig:fusion_hist} and Fig.~\ref{fig:fusion_line} illustrate the distribution of the automatically determined fusion ratio $r$ and the evolution of its average value during training.
Both CIFAR10-LT and CIFAR100-LT reach their best performance at epoch 6, while the results at the final epoch remain comparable, suggesting that the proposed adaptive fusion mechanism is robust and converges reliably.
For CIFAR10-LT, $r$ is primarily concentrated around 0.55, whereas for CIFAR100-LT, it centers around 0.65.
The mean value of $r$ remains relatively stable throughout training, indicating stability in the adaptive fusion process.
The difference in the optimal fusion ratios across datasets further demonstrates that distinct datasets require different trade-offs between balanced sampling and instance-wise sampling, underscoring the necessity of automatically determining $r$.
}

\begin{table}[t]
\renewcommand{\thefootnote}{\fnsymbol{footnote}}
\centering
\redtext{
\caption{Ablation study of PIF and related baselines.}
\label{tab:pif_ablation}
\resizebox{1.0\linewidth}{!}{
\renewcommand{\arraystretch}{1.2}
\begin{tabular}{l|l|c|c|c}
\hlinew{1pt}
Variant & Discription & Params & FLOPs & Acc. (\%) \\
\hline
Baseline  & Original ResNet-32 & -- & -- & 39.55 \\
+ Conv1D & Without PI feature  & \textbf{+2} & \textbf{$\approx$ 0} & 39.78 \\
+ SE (r=16) & Squeeze-and-excitation block  & +3.36K & + 0.146M &  40.55  \\
+ LP  & Learned pooling block & +0.56K & +0.143M  & 40.39  \\
\textbf{+ PIF} & PI feature fusion & \textbf{+2} & \textbf{$\approx$ 0} & 42.19 \\
\textbf{+ SE, + PIF}  & - & +3.36K & + 0.146M  & 42.31 \\
\textbf{+ LP, + PIF}  & - & +0.56K & + 0.143M  & 42.29 \\
\hlinew{1pt}
\end{tabular}}
}
\end{table}

\vspace{0.5em}
\redtext{\noindent\textbf{Ablation Study of PIF and Related Baselines.}  
To clarify that the observed performance gain is primarily driven by the PIF module rather than additional mixing capacity, we conducted controlled ablations, summarized in Table~\ref{tab:pif_ablation}.  
Adding a standard Conv1D without the PI term (+Conv1D) only slightly improves accuracy (39.78\% vs. 39.55\% baseline), despite having the same minimal parameter increase as PIF. 
Standard attention/pooling variants (SE, LP) achieve moderate gains (40.55\% and 40.39\%, respectively), but require substantially more parameters and FLOPs. 
When combining these blocks with PIF, the gains are additive, further confirming that PIF provides complementary improvement.  
These results demonstrate that PIF enhances performance with minimal computational and parameter overhead (+2 parameters and negligible floating-point operations, FLOPs).
Appendix~E further illustrates the effectiveness of PIF on balanced datasets.
} 

\section{\redtext{Concluding Remarks}}
Severe imbalance in long-tailed data leads to two major challenges in visual recognition: a deformed representation space and a biased classifier. 
In this paper, we propose a general approach to enhance existing methods and address these issues, introducing permutation-invariant and head-to-tail feature fusion (PI-H2T).
First, PI-H2T re-extracts representative features through permutation-invariant representation fusion (PIF) to improve representation quality. 
Next, it re-trains the classifier by grafting partial semantics from head to tail classes, calibrating the biased classifier through head-to-tail fusion (H2TF). 
This method introduces \redtext{only two additional learnable parameters, resulting in negligible computational and memory overhead}, while maintaining the backbone structure.
As a result, PI-H2T is adaptable and compatible with a wide range of existing techniques, making integration seamless.
Extensive experiments demonstrate that PI-H2T improves upon state-of-the-art methods.
Moreover, we apply it to a multi-head self-attention backbone, which also yields significant performance gains.

\redtext{
Despite these advantages, PI-H2T occasionally exhibits a slightly higher Expected Calibration Error (ECE), suggesting that confidence calibration could be further improved.
In addition, the theoretical interaction between representation fusion and classifier calibration remains an open question that deserves deeper study.
Future work will focus on calibration-aware extensions of H2TF to reduce ECE without sacrificing accuracy, and on extending PI-H2T to multi-modal, continual, and open-world recognition scenarios.
}

\normalem
\bibliographystyle{IEEEbib}
\bibliography{H2T_ref}

@String(PAMI  = {IEEE TPAMI})

@String(IJCV  = {IJCV})

@String(CVPR  = {CVPR})

@String(ICCV  = {ICCV})

@String(ECCV  = {ECCV})

@String(NIPS  = {NeurIPS})

@String(ACMMM = {ACM MM})

@String(ICLR  = {ICLR})

@String(AAAI = {AAAI})

@String(CVPRW= {CVPRW})

@String(CSVT = {IEEE TCSVT})

@String(ICML = {ICML})

@String(PAMI = {IEEE Trans. Pattern Anal. Mach. Intell.})

@String(IJCV = {Int. J. Comput. Vis.})

@String(CVPR= {IEEE Conf. Comput. Vis. Pattern Recog.})

@String(ICCV= {Int. Conf. Comput. Vis.})

@String(ECCV= {Eur. Conf. Comput. Vis.})

@String(NIPS= {Adv. Neural Inform. Process. Syst.})

@String(ACMMM= {ACM Int. Conf. Multimedia})

@String(ICLR = {Int. Conf. Learn. Represent.})

@String(CVPRW= {IEEE Conf. Comput. Vis. Pattern Recog. Worksh.})

@String(CSVT = {IEEE Trans. Circuit Syst. Video Technol.})

@article{reed2001pareto,
  title={The Pareto, Zipf and other power laws},
  author={Reed, William J},
  journal={Economics letters},
  volume={74},
  number={1},
  pages={15-19},
  year={2001},
  publisher={Elsevier}
}

@inproceedings{shin2017deep,
  author    = {Ando, Shin and Huang, Chun{-}Yuan},
  title     = {Deep Over-sampling Framework for Classifying Imbalanced Data},
  booktitle = {ECML/PKDD},
  pages     = {770-785},
  year      = {2017}
}

@article{Nitesh2002SMOTE,
  author    = {Nitesh V. Chawla and Kevin W. Bowyer and Lawrence O. Hall and
               W. Philip Kegelmeyer},
  title     = {{SMOTE:} Synthetic Minority Over-sampling Technique},
  journal   = {Journal of artificial intelligence research},
  volume    = {16},
  pages     = {321-357},
  year      = {2002},
  doi       = {10.1613/jair.953}
}

@InProceedings{Huang2016CVPR,
author = {Huang, Chen and Li, Yining and Loy, Chen Change and Tang, Xiaoou},
title = {Learning Deep Representation for Imbalanced Classification},
booktitle = CVPR,
month = {June},
year = {2016}
}

@article{Hongyi2018,
  title={mixup: Beyond empirical risk minimization},
  author={Zhang, Hongyi and Cisse, Moustapha and Dauphin, Yann N and Lopez-Paz, David},
  journal={ICLR},
  year={2018}
}

@inproceedings{YunSD2019cutmix,
  title={Cutmix: Regularization strategy to train strong classifiers with localizable features},
  author={Yun, Sangdoo and Han, Dongyoon and Oh, Seong Joon and Chun, Sanghyuk and Choe, Junsuk and Yoo, Youngjoon},
  booktitle=ICCV,
  pages={6023-6032},
  year={2019}
}

@inproceedings{cui2019class,
  title={Class-balanced loss based on effective number of samples},
  author={Cui, Yin and Jia, Menglin and Lin, Tsung-Yi and Song, Yang and Belongie, Serge},
  booktitle=CVPR,
  pages={9268-9277},
  year={2019}
}

@inproceedings{du2023global,
title={Global and Local Mixture Consistency Cumulative Learning for Long-tailed Visual Recognitions},
author={Du, Fei and Yang, Peng and Jia, Qi and Nan, Fengtao and Chen, Xiaoting and Yang, Yun},
booktitle=CVPR,
pages={15814--15823},
year={2023},
}

@inproceedings{Kaidi2019,
  title={Learning imbalanced datasets with label-distribution-aware margin loss},
  author={Cao, Kaidi and Wei, Colin and Gaidon, Adrien and Arechiga, Nikos and Ma, Tengyu},
  booktitle=NIPS,
  pages={1567-1578},
  year={2019}
}

@InProceedings{Wang2021Seesaw,
    author    = {Wang, Jiaqi and Zhang, Wenwei and Zang, Yuhang and Cao, Yuhang and Pang, Jiangmiao and Gong, Tao and Chen, Kai and Liu, Ziwei and Loy, Chen Change and Lin, Dahua},
    title     = {Seesaw Loss for Long-Tailed Instance Segmentation},
    booktitle = CVPR,
    month     = {June},
    year      = {2021},
    pages     = {9695-9704}
}

@InProceedings{Hong2021CVPR,
    author    = {Hong, Youngkyu and Han, Seungju and Choi, Kwanghee and Seo, Seokjun and Kim, Beomsu and Chang, Buru},
    title     = {Disentangling Label Distribution for Long-Tailed Visual Recognition},
    booktitle = CVPR,
    month     = {June},
    year      = {2021},
    pages     = {6626-6636}
}

@article{Tsung2020Focal,
  author    = {Tsung{-}Yi Lin and Priya Goyal and Ross B. Girshick and
               Kaiming He and Piotr Doll{\'{a}}r},
  title     = {Focal Loss for Dense Object Detection},
  journal   = PAMI,
  volume    = {42},
  number    = {2},
  pages     = {318-327},
  year      = {2020},
  doi       = {10.1109/TPAMI.2018.2858826},
}

@inproceedings{LiuZW19LTOW,
  author    = {Liu, Ziwei and Miao, Zhongqi and Zhan, Xiaohang and Wang, Jiayun and Gong, Boqing and Yu, Stella X.},
  title     = {Large-Scale Long-Tailed Recognition in an Open World},
  booktitle = CVPR,
  pages     = {2537-2546},
  year      = {2019},
}

@inproceedings{CubukED19AutoAugment,
  author    = {Cubuk, Ekin D. and Zoph, Barret and Man{\'{e}}, Dandelion and Vasudevan, Vijay and Le, Quoc V.},
  title     = {AutoAugment: Learning Augmentation Strategies From Data},
  booktitle = CVPR,
  pages     = {113-123},
  year      = {2019},
}

@inproceedings{CubukED20Randaugment,
  author    = {Cubuk, Ekin D. and Zoph, Barret and Shlens, Jonathon and Le, Quoc V. },
  title     = {Randaugment: Practical automated data augmentation with a reduced
               search space},
  booktitle = CVPRW,
  pages     = {3008-3017},
  year      = {2020},
}

@inproceedings{Wang2021RSG,
  author    = {Wang, Jianfeng and  Lukasiewicz, Thomas  and Hu, Xiaolin  and Cai, Jianfei and Xu Zhenghua},
  title     = {{RSG:} {A} Simple but Effective Module for Learning Imbalanced Datasets},
  booktitle = CVPR,
  pages     = {3784-3793},
  year      = {2021}
}

@inproceedings{Peng2020Feature,
  author    = {Chu, Peng and Bian, Xiao and Liu, Shaopeng and Ling, Haibin},
  title     = {Feature Space Augmentation for Long-Tailed Data},
  booktitle = ECCV,
  volume    = {12374},
  pages     = {694-710},
  year      = {2020},
  doi       = {10.1007/978-3-030-58526-6\_41},
}

@inproceedings{LiY2020Overcoming,
  author    = {Li, Yu  and Wang, Tao and Kang, Bingyi and Tang, Sheng  and Wang, Chunfeng  and Li, Jintao  and Feng, Jiashi },
  title     = {Overcoming Classifier Imbalance for Long-Tail Object Detection With Balanced Group Softmax},
  booktitle = CVPR,
  pages     = {10988-10997},
  year      = {2020},
}

@InProceedings{Li2021MetaSAug,
    author    = {Li, Shuang and Gong, Kaixiong and Liu, Chi Harold and Wang, Yulin and Qiao, Feng and Cheng, Xinjing},
    title     = {{MetaSAug}: Meta Semantic Augmentation for Long-Tailed Visual Recognition},
    booktitle = CVPR,
    year      = {2021},
    pages     = {5212-5221}
}

@inproceedings{he2016deep,
  title={Deep residual learning for image recognition},
  author={He, Kaiming and Zhang, Xiangyu and Ren, Shaoqing and Sun, Jian},
  booktitle=CVPR,
  pages={770-778},
  year={2016}
}

@inproceedings{bbn20,
  title={{BBN}: Bilateral-Branch Network with Cumulative Learning for Long-Tailed Visual Recognition},
  author={Zhou, Boyan and Cui, Quan and Wei, Xiu-Shen and Chen, Zhao-Min},
  booktitle={CVPR},
  pages={9719-9728},
  year={2020}
}

@inproceedings{decouple20,
  title={Decoupling representation and classifier for long-tailed recognition},
  author={Kang, Bingyi and Xie, Saining and Rohrbach, Marcus and Yan, Zhicheng and Gordo, Albert and Feng, Jiashi and Kalantidis, Yannis},
  booktitle=ICLR,
  year={2020}
}

@inproceedings{Wang2020the,
  author    = {Wang, Tao and Li, Yu  and Kang, Bingyi  and Li, Junnan  and Liew, Jun Hao  and Tang, Sheng  and Hoi, Steven C. H. and Feng, Jiashi },
  title     = {The Devil Is in Classification: {A} Simple Framework for Long-Tail Instance Segmentation},
  booktitle = ECCV,
  volume    = {12359},
  pages     = {728-744},
  year      = {2020},
  doi       = {10.1007/978-3-030-58568-6\_43},
}

@inproceedings{adjustment21,
  author    = {Menon, Aditya Krishna and Jayasumana, Sadeep and Rawat, Ankit Singh  and Jain, Himanshu  and  Veit, Andreas  and Kumar, Sanjiv },
  title     = {Long-tail learning via logit adjustment},
  booktitle = ICLR,
  year      = {2021},
}

@inproceedings{RenJW2020Balanced,
 author = {Ren, Jiawei and Yu, Cunjun and sheng, shunan and Ma, Xiao and Zhao, Haiyu and Yi, Shuai and Li, hongsheng},
 booktitle = NIPS,
 pages = {4175-4186},
 title = {Balanced Meta-Softmax for Long-Tailed Visual Recognition},
 volume = {33},
 year = {2020}
}

@inproceedings{WangXD21RIDE,
  author    = {Wang, Xudong and Lian, Long and Miao, Zhongqi and Liu, Ziwei and Yu, Stella X.},
  title     = {Long-tailed Recognition by Routing Diverse Distribution-Aware Experts},
  booktitle = ICLR,
  year      = {2021},
}

@inproceedings{mislas21,
  author    = {Zhong, Zhisheng and Cui, Jiequan and Liu, Shu and Jia, Jiaya},
  title     = {Improving Calibration for Long-Tailed Recognition},
  booktitle = CVPR,
  pages     = {16489-16498},
  year      = {2021}
}

@inproceedings{DisAli21,
  author    = {Zhang, Songyang and Li, Zeming and Yan, Shipeng and He, Xuming and Sun, Jian},
  title     = {Distribution Alignment: {A} Unified Framework for Long-Tail Visual Recognition},
  booktitle = CVPR,
  pages     = {2361-2370},
  year      = {2021},
}

@inproceedings{kim2020m2m,
  title={M2m: Imbalanced classification via major-to-minor translation},
  author={Kim, Jaehyung and Jeong, Jongheon and Shin, Jinwoo},
  booktitle=CVPR,
  pages={13896-13905},
  year={2020}
}

@inproceedings{zhang2021bag,
  title={Bag of Tricks for Long-Tailed Visual Recognition with Deep Convolutional Neural Networks},
  author={Zhang, Yongshun and Wei, Xiu-Shen and Zhou, Boyan and Wu, Jianxin},
  booktitle=AAAI,
  pages={3447-3455},
  year={2021}
}

@inproceedings{Liu2022Memory,
  author    = {Liu, Jialun  and Li, Wenhui and Sun, Yifan },
  title     = {Memory-Based Jitter: Improving Visual Recognition on Long-Tailed Data with Diversity in Memory},
  booktitle = AAAI,
  pages     = {1720-1728},
  publisher = {{AAAI} Press},
  year      = {2022}
}

@article{jin2023optimal,
  title={An Optimal Transport View of Class-Imbalanced Visual Recognition},
  author={Jin, Lianbao and Lang, Dayu and Lei, Na},
  journal={International Journal of Computer Vision},
  pages={1-19},
  year={2023},
  publisher={Springer}
}

@InProceedings{Jin2023shike,
    author    = {Jin, Yan and Li, Mengke and Lu, Yang and Cheung, Yiu-ming and Wang, Hanzi},
    title     = {Long-Tailed Visual Recognition via Self-Heterogeneous Integration With Knowledge Excavation},
    booktitle = {Proceedings of the IEEE/CVF Conference on Computer Vision and Pattern Recognition (CVPR)},
    month     = {June},
    year      = {2023},
    pages     = {23695-23704}
}

@InProceedings{Aimar2023BalPoE,
    author    = {Sanchez Aimar, Emanuel and Jonnarth, Arvi and Felsberg, Michael and Kuhlmann, Marco},
    title     = {Balanced Product of Calibrated Experts for Long-Tailed Recognition},
    booktitle = CVPR,
    year      = {2023},
    pages     = {19967-19977}
}

@article{krizhevsky2009learning,
  title={Learning multiple layers of features from tiny images},
  author={Krizhevsky, Alex and Hinton, Geoffrey and others},
  journal={Tech Report},
  publisher={Citeseer},
  year={2009}
}

@article{ILSVRC15,
  author    = {Olga Russakovsky and Jia Deng and
               Hao Su and Jonathan Krause and
               Sanjeev Satheesh and Sean Ma and
               Zhiheng Huang and Andrej Karpathy and
               Aditya Khosla and Michael Bernstein and
               Alexander C. Berg and Li Fei-Fei},
  title     = {ImageNet Large Scale Visual Recognition Challenge},
  journal   = IJCV,
  volume    = {115},
  number    = {3},
  pages     = {211-252},
  year      = {2015},
  doi       = {10.1007/s11263-015-0816-y},
}

@article{zhou2017places,
  title={Places: A 10 million image database for scene recognition},
  author={Zhou, Bolei and Lapedriza, Agata and Khosla, Aditya and Oliva, Aude and Torralba, Antonio},
  journal=PAMI,
  volume={40},
  number={6},
  pages={1452-1464},
  year={2017},
  publisher={IEEE}
}

@inproceedings{Horn_2018_CVPR,
  author    = {Van Horn, Grant and Mac Aodha, Oisin and
               Song, Yang and Cui, Yin and Sun, Chen and
               Shepard, Alex and Adam, Hartwig and Perona, Pietro and
               Belongie, Serge J.},
  title     = {The INaturalist Species Classification and Detection Dataset},
  booktitle = CVPR,
  pages     = {8769-8778},
  year      = {2018},
  doi       = {10.1109/CVPR.2018.00914}
}

@inproceedings{Wang2017NormFace,
  author    = {Wang, Feng and Xiang, Xiang and Cheng, Jian and Yuille, Alan Loddon},
  title     = {NormFace: L\({}_{\mbox{2}}\) Hypersphere Embedding for Face Verification},
  booktitle = ACMMM,
  pages     = {1041-1049},
  year      = {2017},
  doi       = {10.1145/3123266.3123359},
}

@inproceedings{wang2018cosface,
  title={Cosface: Large margin cosine loss for deep face recognition},
  author={Wang, Hao and Wang, Yitong and Zhou, Zheng and Ji, Xing and Gong, Dihong and Zhou, Jingchao and Li, Zhifeng and Liu, Wei},
  booktitle=CVPR,
  pages={5265-5274},
  year={2018}
}

@inproceedings{Deng2019ArcFace,
  author    = {Deng, Jiankang and Guo, Jia  and Xue, Niannan and Zafeiriou, Stefanos},
  title     = {ArcFace: Additive Angular Margin Loss for Deep Face Recognition},
  booktitle = CVPR,
  pages     = {4690-4699},
  year      = {2019},
  doi       = {10.1109/CVPR.2019.00482},
}

@article{xiao2021does,
  title={Does head label help for long-tailed multi-label text classification},
  author={Xiao, Lin and Zhang, Xiangliang and Jing, Liping and Huang, Chi and Song, Mingyang},
  journal=AAAI,
  volume={35},
  number={16},
  pages={14103-14111},
  year={2021}
}

@ARTICLE{zhang2021survey,
  author={Zhang, Yifan and Kang, Bingyi and Hooi, Bryan and Yan, Shuicheng and Feng, Jiashi},
  journal=PAMI, 
  title={Deep Long-Tailed Learning: A Survey}, 
  year={2023},
  volume={45},
  number={9},
  pages={10795-10816},
  doi={10.1109/TPAMI.2023.3268118}}

@article{yang2022survey,
  title={A survey on long-tailed visual recognition},
  author={Yang, Lu and Jiang, He and Song, Qing and Guo, Jun},
  journal=IJCV,
  volume={130},
  number={7},
  pages={1837-1872},
  year={2022},
  publisher={Springer}
}

@InProceedings{XiangLY2020LFME,
  author={Xiang, Liuyu and Ding, Guiguang and Han, Jungong},
  title={Learning From Multiple Experts: Self-paced Knowledge Distillation for Long-Tailed Classification},
  booktitle=ECCV,
  year={2020},
  pages={247-263}
}

@inproceedings{CuiJQ2021parametric,
  title={Parametric contrastive learning},
  author={Cui, Jiequan and Zhong, Zhisheng and Liu, Shu and Yu, Bei and Jia, Jiaya},
  booktitle=CVPR,
  pages={715-724},
  year={2021}
}

@InProceedings{Cai2021ACE,
    author    = {Cai, Jiarui and Wang, Yizhou and Hwang, Jenq-Neng},
    title     = {ACE: Ally Complementary Experts for Solving Long-Tailed Recognition in One-Shot},
    booktitle = ICCV,
    year      = {2021},
    pages     = {112-121}
}

@InProceedings{LiBL2022Trustworthy,
    author    = {Li, Bolian and Han, Zongbo and Li, Haining and Fu, Huazhu and Zhang, Changqing},
    title     = {Trustworthy Long-Tailed Classification},
    booktitle = CVPR,
    year      = {2022},
    pages     = {6970-6979}
}

@inproceedings{liJ2022nested,
  title={Nested Collaborative Learning for Long-Tailed Visual Recognition},
  author={Li, Jun and Tan, Zichang and Wan, Jun and Lei, Zhen and Guo, Guodong},
  booktitle=CVPR,
  pages={6949-6958},
  year={2022}
}

@ARTICLE{Liu2022OLTR2,
  author={Liu, Ziwei and Miao, Zhongqi and Zhan, Xiaohang and Wang, Jiayun and Gong, Boqing and Yu, Stella X.},
  journal={IEEE Transactions on Pattern Analysis and Machine Intelligence}, 
  title={Open Long-Tailed Recognition in a Dynamic World}, 
  year={2024},
  volume={46},
  number={3},
  pages={1836-1851},
  keywords={Tail;Visualization;Head;Training;Task analysis;Measurement;Magnetic heads;Long-tailed recognition;few-shot learning;active learning},
  doi={10.1109/TPAMI.2022.3200091}}

@ARTICLE{Cui2022reslt,
  author={Cui, Jiequan and Liu, Shu and Tian, Zhuotao and Zhong, Zhisheng and Jia, Jiaya},
  journal=PAMI,
  title={ResLT: Residual Learning for Long-Tailed Recognition}, 
  year={2023},
  volume={45},
  number={3},
  pages={3695-3706},
  doi={10.1109/TPAMI.2022.3174892}}

@InProceedings{LiMK2022GCL,
    author    = {Li, Mengke and Cheung, Yiu-ming and Lu, Yang},
    title     = {Long-Tailed Visual Recognition via Gaussian Clouded Logit Adjustment},
    booktitle = CVPR,
    month     = {June},
    year      = {2022},
    pages     = {6929-6938}
}

@article{LiMK2022KPS,
  author={Li, Mengke and Cheung, Yiu-ming and Hu, Zhikai},
  journal=PAMI, 
  title={Key Point Sensitive Loss for Long-tailed Visual Recognition}, 
  year= {2023},
  volume={45},
  number={4},
  pages={4812-4825},
  doi={10.1109/TPAMI.2022.3196044}
  }

@inproceedings{Li2024GNM,
 author = {Li, Mengke and Liu, Ye and Lu, Yang and Zhang, Yiqun and Cheung, Yiu-ming and Huang, Hui},
 booktitle = NIPS,
 pages = {103985--104009},
 title = {Improving Visual Prompt Tuning by Gaussian Neighborhood Minimization for Long-Tailed Visual Recognition},
 volume = {37},
 year = {2024}
}

@inproceedings{ParkS2022Majority,
  title={The Majority Can Help The Minority: Context-rich Minority Oversampling for Long-tailed Classification},
  author={Park, Seulki and Hong, Youngkyu and Heo, Byeongho and Yun, Sangdoo and Choi, Jin Young},
  booktitle=CVPR,
  pages={6887-6896},
  year={2022}
}

@inproceedings{jia2022visual,
  title={Visual prompt tuning},
  author={Jia, Menglin and Tang, Luming and Chen, Bor-Chun and Cardie, Claire and Belongie, Serge and Hariharan, Bharath and Lim, Ser-Nam},
  booktitle=ECCV,
  pages={709-727},
  year={2022},
  organization={Springer}
}

@inproceedings{radford2021clip,
  title={Learning transferable visual models from natural language supervision},
  author={Radford, Alec and Kim, Jong Wook and Hallacy, Chris and Ramesh, Aditya and Goh, Gabriel and Agarwal, Sandhini and Sastry, Girish and Askell, Amanda and Mishkin, Pamela and Clark, Jack and others},
  booktitle=ICML,
  pages={8748-8763},
  year={2021},
}

@article{ma2021BALLAD,
  title={A simple long-tailed recognition baseline via vision-language model},
  author={Ma, Teli and Geng, Shijie and Wang, Mengmeng and Shao, Jing and Lu, Jiasen and Li, Hongsheng and Gao, Peng and Qiao, Yu},
  journal={arXiv preprint arXiv:2111.14745},
  year={2021}
}

@article{xu2021towards,
  title={Towards calibrated model for long-tailed visual recognition from prior perspective},
  author={Xu, Zhengzhuo and Chai, Zenghao and Yuan, Chun},
  journal=NIPS,
  volume={34},
  pages={7139--7152},
  year={2021}
}

@inproceedings{tian2022vlltr,
  title={{VL-LTR}: Learning class-wise visual-linguistic representation for long-tailed visual recognition},
  author={Tian, Changyao and Wang, Wenhai and Zhu, Xizhou and Dai, Jifeng and Qiao, Yu},
  booktitle=ECCV,
  pages={73-91},
  year={2022},
  organization={Springer}
}

@InProceedings{Long2022RAC,
    author    = {Long, Alexander and Yin, Wei and Ajanthan, Thalaiyasingam and Nguyen, Vu and Purkait, Pulak and Garg, Ravi and Blair, Alan and Shen, Chunhua and van den Hengel, Anton},
    title     = {Retrieval Augmented Classification for Long-Tail Visual Recognition},
    booktitle = CVPR,
    month     = {June},
    year      = {2022},
    pages     = {6959-6969}
}

@inproceedings{ahn2023cuda,
  title={Cuda: Curriculum of data augmentation for long-tailed recognition},
  author={Ahn, Sumyeong and Ko, Jongwoo and Yun, Se-Young},
  booktitle = ICLR,
  year={2023}
}

@article{wang2024exploring,
  title={Exploring vision-language models for imbalanced learning},
  author={Wang, Yidong and Yu, Zhuohao and Wang, Jindong and Heng, Qiang and Chen, Hao and Ye, Wei and Xie, Rui and Xie, Xing and Zhang, Shikun},
  journal=IJCV,
  volume={132},
  number={1},
  pages={224-237},
  year={2024},
  publisher={Springer}
}

@article{liu2024lcreg,
  title={LCReg: Long-tailed image classification with latent categories based recognition},
  author={Liu, Weide and Wu, Zhonghua and Wang, Yiming and Ding, Henghui and Liu, Fayao and Lin, Jie and Lin, Guosheng},
  journal={Pattern Recognition},
  volume={145},
  pages={109971},
  year={2024},
  publisher={Elsevier}
}

@inproceedings{wang2024kill,
title={Kill Two Birds with One Stone: Rethinking Data Augmentation for Deep Long-tailed Learning},
author={Binwu Wang and Pengkun Wang and Wei Xu and Xu Wang and Yudong Zhang and Kun Wang and Yang Wang},
booktitle=ICLR,
year={2024},
}

@inproceedings{son2025difficulty,
  title={Difficulty-aware Balancing Margin Loss for Long-tailed Recognition},
  author={Son, Minseok and Koo, Inyong and Park, Jinyoung and Kim, Changick},
  booktitle=AAAI,
  volume={39},
  pages={20522--20530},
  year={2025}
}

@inproceedings{zhao2025learning,
  title={Learning from Neighbors: Category Extrapolation for Long-Tail Learning},
  author={Zhao, Shizhen and Wen, Xin and Liu, Jiahui and Ma, Chuofan and Yuan, Chunfeng and Qi, Xiaojuan},
  booktitle=CVPR,
  pages={30483--30492},
  year={2025}
}

@InProceedings{Xu2023Learning,
    author    = {Xu, Zhengzhuo and Liu, Ruikang and Yang, Shuo and Chai, Zenghao and Yuan, Chun},
    title     = {Learning Imbalanced Data With Vision Transformers},
    booktitle = CVPR,
    month     = {June},
    year      = {2023},
    pages     = {15793-15803}
}

@InProceedings{Perrett2023CVPR,
    author    = {Perrett, Toby and Sinha, Saptarshi and Burghardt, Tilo and Mirmehdi, Majid and Damen, Dima},
    title     = {Use Your Head: Improving Long-Tail Video Recognition},
    booktitle = CVPR,
    month     = {June},
    year      = {2023},
    pages     = {2415-2425}
}

@inproceedings{Dong23LPT,
  author       = {Dong, Bowen  and Zhou, Pan  and Yan, Shuicheng  and Zuo, Wangmeng },
  title        = {{LPT:} Long-tailed Prompt Tuning for Image Classification},
  booktitle    = ICLR,
  year         = {2023}
}

@inproceedings{shi2024LIFT,
  title={Long-Tail Learning with Foundation Model: Heavy Fine-Tuning Hurts},
  author={Shi, Jiang-Xin and Wei, Tong and Zhou, Zhi and Shao, Jie-Jing and Han, Xin-Yan and Li, Yu-Feng},
  booktitle=ICML,
  year={2024}
}

@article{Zhikai2021MTFH,
  author={Liu, Xin and Hu, Zhikai and Ling, Haibin and Cheung, Yiu-Ming},
  title={MTFH: A Matrix Tri-Factorization Hashing Framework for Efficient Cross-Modal Retrieval}, 
  journal=PAMI, 
  year={2021},
  volume={43},
  number={3},
  pages={964-981},
  doi={10.1109/TPAMI.2019.2940446}}

@inproceedings{chu2020feature,
  title={Feature space augmentation for long-tailed data},
  author={Chu, Peng and Bian, Xiao and Liu, Shaopeng and Ling, Haibin},
  booktitle=ECCV,
  pages={694--710},
  year={2020},
  organization={Springer}
}

@ARTICLE{HuZK23Joint,
  author={Hu, Zhikai and Cheung, Yiu-ming and Li, Mengke and Lan, Weichao and Zhang, Donglin and Liu, Qiang},
  journal=CSVT, 
  title={Joint Semantic Preserving Sparse Hashing for Cross-Modal Retrieval}, 
  year={2024},
  volume={34},
  number={4},
  pages={2989-3002},
  doi={10.1109/TCSVT.2023.3307608}}

@article{LiMK2024H2T, 
    title={Feature Fusion from Head to Tail for Long-Tailed Visual Recognition}, 
    author={Li, Mengke and HU, Zhikai and Lu, Yang and Lan, Weichao and Cheung, Yiu-ming and Huang, Hui}, 
    year={2024}, 
    journal=AAAI,
    volume={38}, 
    number={12}, 
    pages={13581-13589} 
}

@phdthesis{li2022advances,
  title={Advances in Long-Tailed Visual Recognition},
  author={Li, Mengke},
  year={2022},
  school={Hong Kong Baptist University}
}

@inproceedings{imagenet21k,
  author       = {Ridnik, Tal and Baruch, Emanuel Ben and Noy, Asaf and Zelnik, Lihi },
  title        = {{ImageNet-21K} Pretraining for the Masses},
  booktitle    = NIPS,
  year         = {2021}
}

@inproceedings{sun2017revisiting,
  title={Revisiting unreasonable effectiveness of data in deep learning era},
  author={Sun, Chen and Shrivastava, Abhinav and Singh, Saurabh and Gupta, Abhinav},
  booktitle=ICCV,
  pages={843-852},
  year={2017}
}

@inproceedings{radford2021learning,
  title={Learning transferable visual models from natural language supervision},
  author={Radford, Alec and Kim, Jong Wook and Hallacy, Chris and Ramesh, Aditya and Goh, Gabriel and Agarwal, Sandhini and Sastry, Girish and Askell, Amanda and Mishkin, Pamela and Clark, Jack and others},
  booktitle=ICML,
  pages={8748-8763},
  year={2021}
}

@article{chen2022adaptformer,
  title={Adaptformer: Adapting vision transformers for scalable visual recognition},
  author={Chen, Shoufa and Ge, Chongjian and Tong, Zhan and Wang, Jiangliu and Song, Yibing and Wang, Jue and Luo, Ping},
  journal=NIPS,
  volume={35},
  pages={16664-16678},
  year={2022}
}

@inproceedings{hu2021lora,
  title={Lora: Low-rank adaptation of large language models},
  author={Hu, Edward J and Shen, Yelong and Wallis, Phillip and Allen-Zhu, Zeyuan and Li, Yuanzhi and Wang, Shean and Wang, Lu and Chen, Weizhu},
  booktitle=ICLR,
  year={2022}
}

@inproceedings{tian2022vl,
  title={Vl-ltr: Learning class-wise visual-linguistic representation for long-tailed visual recognition},
  author={Tian, Changyao and Wang, Wenhai and Zhu, Xizhou and Dai, Jifeng and Qiao, Yu},
  booktitle=ECCV,
  pages={73-91},
  year={2022},
  organization={Springer}
}

@article{vaswani2017attention,
  title={Attention is all you need},
  author={Vaswani, Ashish and Shazeer, Noam and Parmar, Niki and Uszkoreit, Jakob and Jones, Llion and Gomez, Aidan N and Kaiser, {\L}ukasz and Polosukhin, Illia},
  journal=NIPS,
  volume={30},
  year={2017}
}

@inproceedings{Dosovitskiy21vit,
  author={Dosovitskiy, Alexey and Beyer, Lucas and Kolesnikov, Alexander and Weissenborn, Dirk and Zhai, Xiaohua and Unterthiner, Thomas and Dehghani, Mostafa and Minderer, Matthias and Heigold, Georg and Gelly, Sylvain and others},
  title        = {An Image is Worth 16x16 Words: Transformers for Image Recognition at Scale},
  booktitle    = ICLR,
  year         = {2021}
}

@inproceedings{NIPS2017DeepSets,
 author = {Zaheer, Manzil and Kottur, Satwik and Ravanbakhsh, Siamak and Poczos, Barnabas and Salakhutdinov, Russ R and Smola, Alexander J},
 booktitle = NIPS,
 pages = {},
 title = {Deep Sets},
 volume = {30},
 year = {2017}
}

@inproceedings{NIPS2019DeepSetNet,
 author = {Zhang, Yan and Hare, Jonathon and Prugel-Bennett, Adam},
 booktitle = NIPS,
 pages = {},
 title = {Deep Set Prediction Networks},
 volume = {32},
 year = {2019}
}

\begin{IEEEbiography}[{\includegraphics[width=1in,height=1.25in,clip,keepaspectratio]{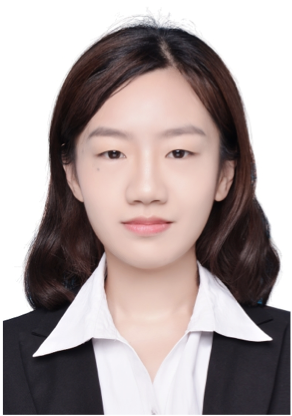}}]{Mengke Li}
received the B.S. degree in Communication Engineering from Southwest University, Chongqing, China, in 2015, the M.S. degree in electronic engineering from Xidian University, Xi’an, China, in 2018, and the Ph.D. degree from the Department of Computer Science, Hong Kong Baptist University, Hong Kong SAR, China, under the supervision of Prof. Yiu-ming Cheung, in 2022. She is currently an Assistant Professor with College of Computer Science and Software Engineering, Shenzhen University, Shenzhen, China. Her current research interests include imbalanced data learning, long-tail learning, and pattern recognition.
\end{IEEEbiography}

\begin{IEEEbiography}[{\includegraphics[width=1in,height=1.25in,clip,keepaspectratio]{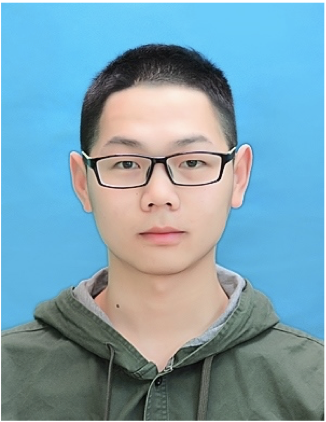}}]{Zhikai Hu}
received his Ph.D. degree from the Department of Computer Science, Hong Kong Baptist University, Hong Kong SAR, China, under the supervision of Prof. Yiu-ming Cheung, in 2025. He is the awardee of RGC Junior Research Fellow Scheme (2025/26). He is currently a post-doctoral research fellow with the Department of Computer Science, Hong Kong Baptist University, Hong Kong SAR, China. He has published over 20 articles in high-quality conferences and journals, including TPAMI, TNNLS, TCSVT, AAAI, and so on. His present research interests include information retrieval and AI for Art \& Culture.
\end{IEEEbiography}

\vspace{-0.5in}
\begin{IEEEbiography}[{\includegraphics[width=1in,height=1.25in,clip,keepaspectratio]{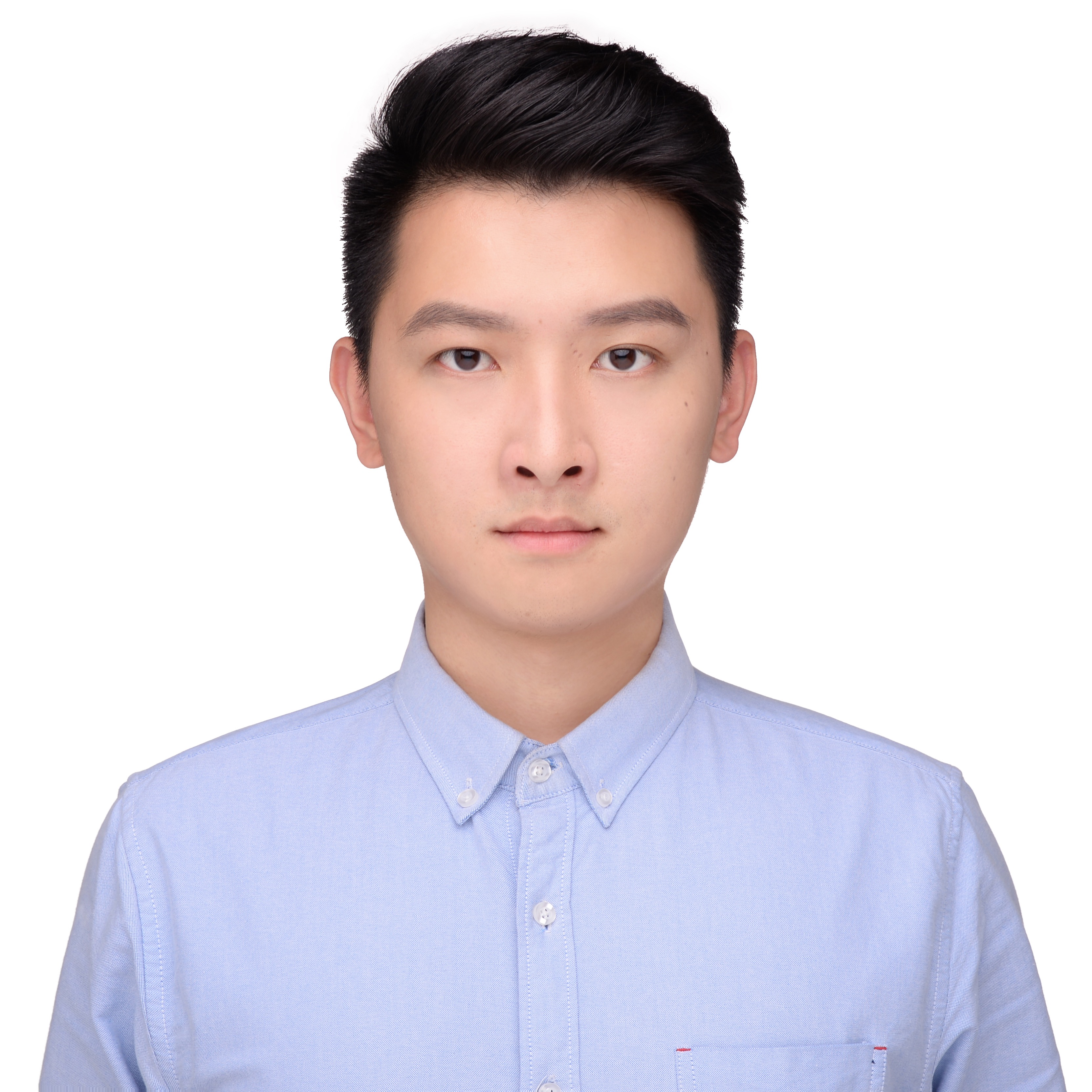}}]{Yang Lu}
received the B.Sc. and M.Sc. degrees in Software Engineering from the University of Macau, Macau, China, in 2012 and 2014, respectively, and the Ph.D. degree in computer science from Hong Kong Baptist University, Hong Kong, China, in 2019. He is currently an Assistant Professor with the Department of Computer Science, School of Informatics, Xiamen University, Xiamen, China. His current research interests include deep learning, federated learning, long-tail learning, and meta-learning.
\end{IEEEbiography}

\vspace{-0.5in}
\begin{IEEEbiography}[{\includegraphics[width=1in,height=1.25in,clip,keepaspectratio]{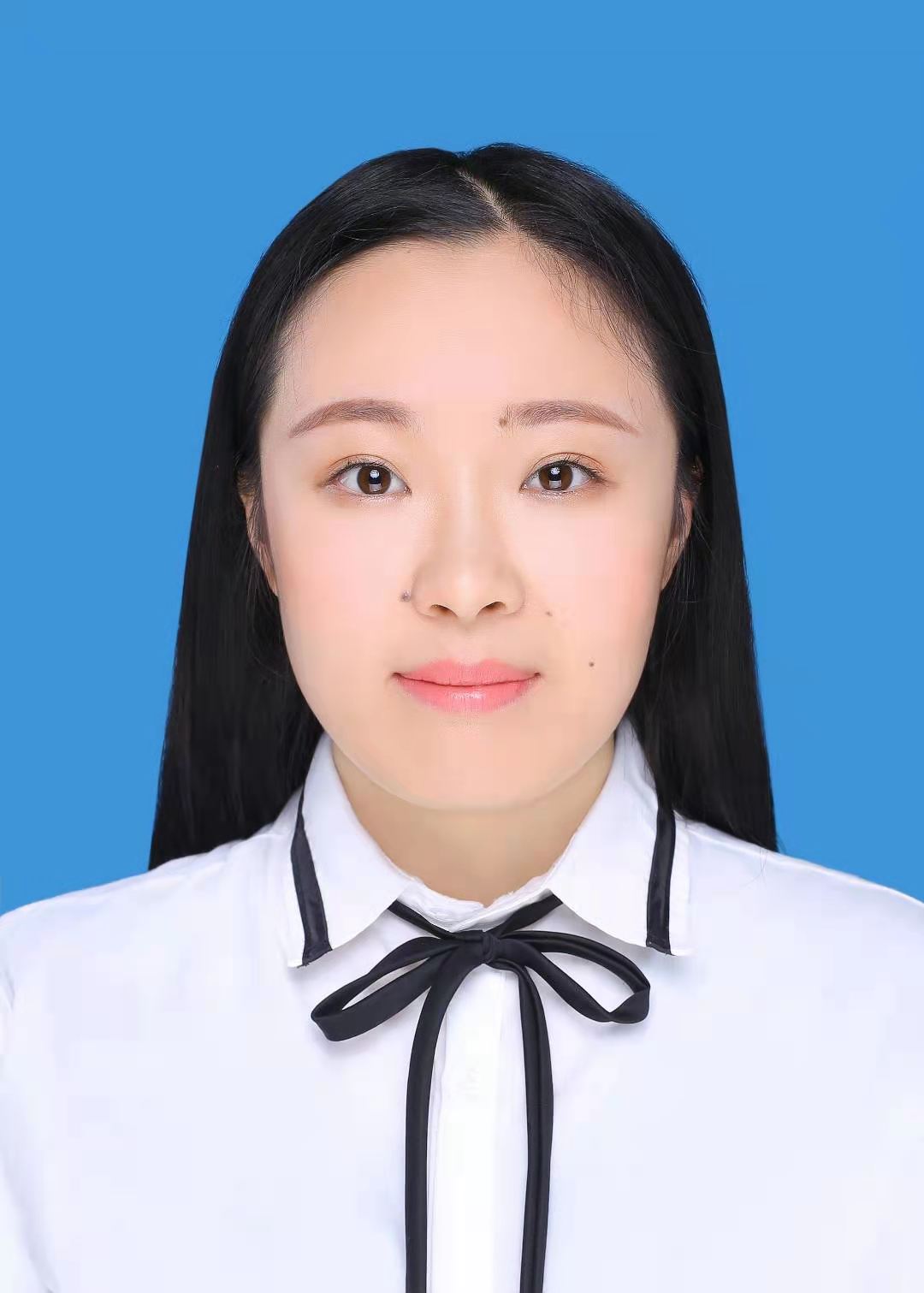}}]{Weichao Lan}
received her B.S. degree in Electronics and Information Engineering from Sichuan University in 2019, and the Ph.D degree in Computer Science from Hong Kong Baptist University, under the supervision of Prof. Yiu-ming Cheung, in 2024. Her present research interests include network compression and acceleration, and lightweight models.
\end{IEEEbiography}

\vspace{-0.5in}
\begin{IEEEbiography}[{\includegraphics[width=1in,height=1.25in,clip,keepaspectratio]{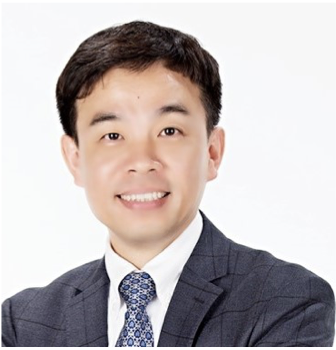}}]{Yiu-ming Cheung}
(Fellow'18, IEEE) received the Ph.D. degree in 2000 from the Department of Computer Science and Engineering, The Chinese University of Hong Kong, Hong Kong. He is currently a Chair Professor with the Department of Computer
Science, Hong Kong Baptist University, Hong Kong. His research interests include Machine Learning and Visual Computing, as well as their applications in Data Science, Pattern Recognition, and Information Security. Prof. Cheung is a Member of the European Academy of Sciences and Arts, and a Fellow of the American Association for the Advancement of Science (AAAS), the International Association for Pattern Recognition (IAPR), the Institution of Engineering and Technology (IET), and the British Computer Society (BCS). Also, he is a recipient of RGC Senior Research Fellow
Award. He is the Editor-in-Chief of the IEEE Transactions on Emerging Topics in Computational Intelligence, and is an Associate Editor for several prestigious journals, including IEEE Transactions on Cybernetics, IEEE Transactions on Cognitive and Developmental Systems, Pattern Recognition, Neurocomputing, to name a few. For more details, please refer to: \url{https://www.comp.hkbu.edu.hk/~ymc}.
\end{IEEEbiography}

\vspace{-0.5in}
\begin{IEEEbiography}[{\includegraphics[width=1in,height=1.25in,clip,keepaspectratio]{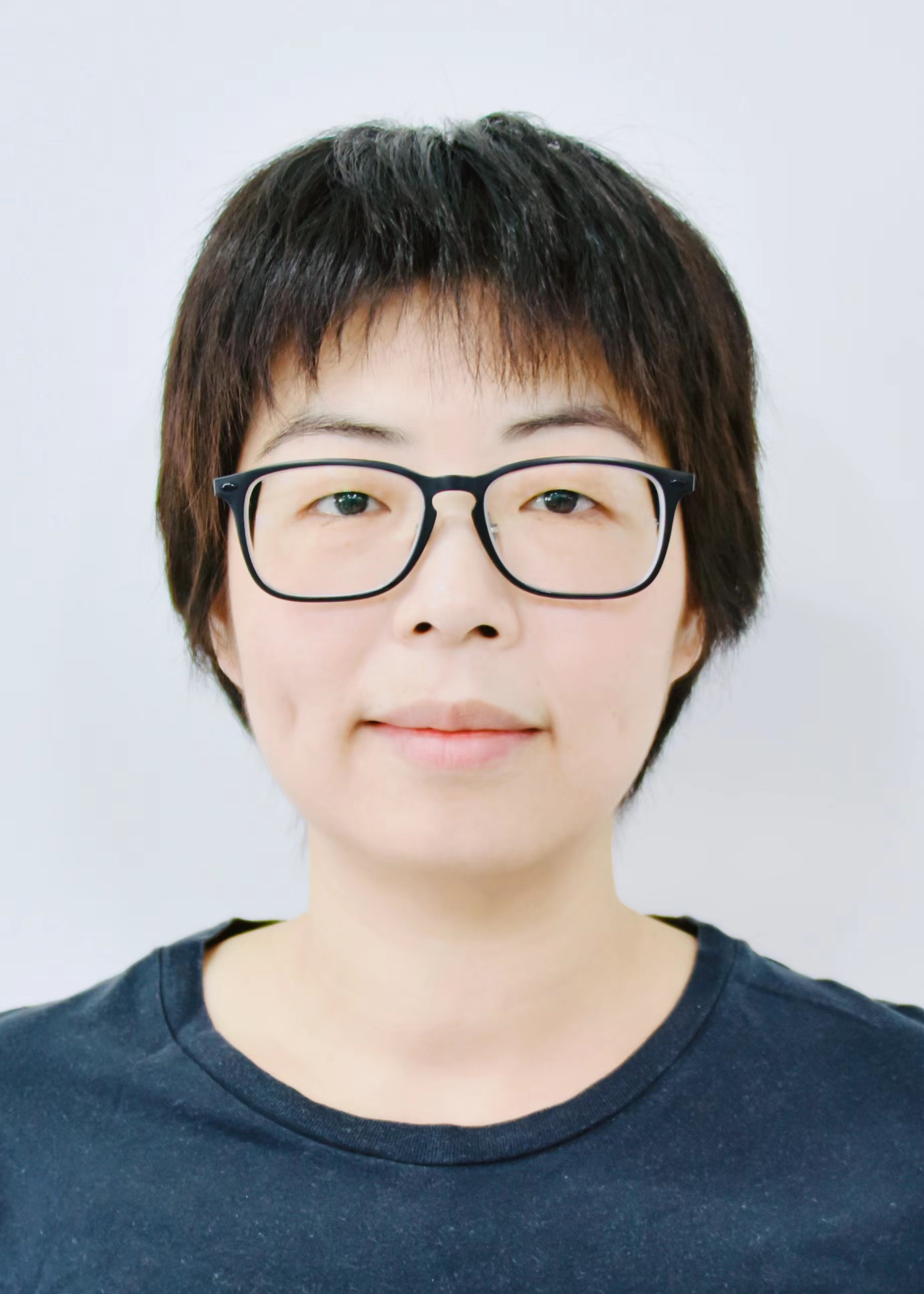}}]{Hui Huang}
received the Ph.D. in Math from the University of British Columbia. She is currently a Chair Professor at Shenzhen University, serving as the Dean of College of Computer Science and Software Engineering while also directing the Visual Computing Research Center. Her research encompasses computer graphics, computer vision and visual analytics, focusing on geometry, points, shapes and images. She was on the editorial board of ACM TOG and IEEE TVCG.
\end{IEEEbiography}


\clearpage
\appendices

\section{Detailed Analysis of the Problem Discussed in Sec.~\ref{sec:motivation}}
\label{app:motivation}

The long-tailed problem stems from two complementary issues.
Firstly, the \textbf{representation issue} is caused by the sparsity of tail class samples, which results in an inadequate representation that renders the model susceptible to overfitting on irrelevant patterns. 
While methods such as multi-expert models~\cite{WangXD21RIDE,liJ2022nested,Jin2023shike} or pretrained priors~\cite{Dong23LPT,shi2024LIFT} can enhance diversity, they are structure-specific and cannot be generalized to different frameworks.
Moreover, the limited sampling of tail classes makes their features more sensitive to spurious cues, such as the order of elements in the input. This order dependency introduces hidden confounders that cause the model to overfit to position-specific or irrelevant semantic patterns.
To address this, we propose incorporating the permutation invariance property into the feature representation, which inherently eliminates reliance on input order and mitigates order-induced biases.
Based on this, we propose a \textbf{Permutation-Invariant Feature Fusion (PIF)} strategy that integrates these features with the original model features to improve representation quality.
This fusion not only improves feature robustness but also provides additional flexibility for subsequent classifier calibration.
The proposed strategy is architecture-agnostic and can be seamlessly integrated into a wide range of backbones with merely two additional trainable parameters.

Secondly, \textbf{classifier bias} arises from the disproportionate number of head and tail class samples in long-tailed data, leading to a tendency to favor head classes during prediction.
The straightforward solution involves increasing the importance or frequencies of tail classes~\cite{Kaidi2019, bbn20,shin2017deep}.
They can increase the performance of tail classes, nevertheless, they also entail an elevated risk of overfitting. 
To mitigate overfitting while refining the decision boundary, it is crucial to enhance the diversity of tail class samples~\cite{zhang2021survey, yang2022survey}. 
Unfortunately, directly obtaining more samples is generally infeasible.
An alternative is to enrich tail classes by fully exploiting the existing feature space.
Typically, misclassified samples are unseen instances of the training set, and these instances tend to be interspersed in the vicinity of the decision boundary. 
This pattern offers an opportunity to enhance tail class diversity by populating class margins with semantically meaningful samples, thereby facilitating the formation of more optimal decision boundaries.
To this end, we propose \textbf{Head-to-Tail Fusion (H2TF)}, which simulates potential unseen samples by directly borrowing semantic information from head classes to augment tail classes. 
This strategy is employed exclusively during the stage-2 classifier adjustment phase, enabling the filling of inter-class margins with more diverse and abundant tail samples.
This, in turn, facilitates classifier calibration within a more balanced and well-structured feature space.

\section{Algorithm of PI-H2T}
\label{app:alg}

The proposed training algorithm is presented in Algorithm~\ref{alg:h2t}, where the key procedures for PIF and H2TF are outlined in lines 11 and 13–14, respectively.

\begin{algorithm}[htpb]
\renewcommand{\algorithmicrequire}{\textbf{Input:}}
\renewcommand{\algorithmicensure}{\textbf{Output:}}
\caption{PI-H2T}\label{alg:h2t}
\begin{algorithmic}[1]
\REQUIRE {Training set, fusion ratio $p$ \;}
\ENSURE {Trained model\;}
\STATE Initialize the model $\phi$\ randomly 
\FOR {$iter=1$ to $E_1$}
\STATE Obtain the feature map $\mathcal{F} = \phi_{\theta}(x)$ 
\STATE Fuse feature maps by Eq.~(\ref{eq:PIF}) to obtain PI fused feature maps $\mathcal{F}_{fuse}$ \; 
\STATE Input $\mathcal{F}$ to pooling layer to obtain $\mathbf{f}$, then calculate the logits by $z = \mathbf{W}^T \mathbf{f}$ and the loss by $\mathcal{L}_1(x,y)$ \;
\STATE $\phi = \phi - \alpha \nabla_{\phi} \mathcal{L}_1((x,y);\phi)$
\ENDFOR
\FOR {$iter=E_1+1$ to $E_2$}
\STATE Sample batches of data $(x^B,y^B)\sim \mathcal{T}^B$ from the class-balanced sampling data and $(x,y)\sim \mathcal{T}^I$ from the instance-wise sampling data\;
\STATE Obtain feature maps $\mathcal{F}^B = \phi_{\theta}(x^B)$ and $\mathcal{F}^I = \phi_{\theta}(x)$ \;
\STATE Fuse feature maps by Eq.~(\ref{eq:PIF}) to obtain PI fused feature maps $\mathcal{F}^B_{fuse}$ and $\mathcal{F}^I_{fuse}$ \; 
\STATE Input $\mathcal{F}^B_{fuse}$ and $\mathcal{F}^I_{fuse}$ to pooling layer to obtain $\mathbf{f}^B$ and $\mathbf{f}^I$ \;
\STATE Fuse features by Eq~(\ref{eq:fh2t}) to obtain H2T fused feature $\tilde{\mathbf{f}}$ \;
\STATE Calculate the logits by $\tilde{z} = \mathbf{W}^T \tilde{\mathbf{f}}$ and the loss by $\mathcal{L}_2(x^B,y^B)$ \;
\STATE Froze the parameters of representation learning $\phi^r$, and finetune the classifier parameters $\phi^c$:
$\phi^c = \phi^c- \alpha \nabla_{\phi^c} \mathcal{L}_2((x^B,y^B);\phi^c)$.
\ENDFOR
\end{algorithmic}  
\end{algorithm}

\begin{figure*}[!htpb]
\centering
  \hspace*{-2em}
  \subfigure[w.o. H2TF\label{appfig:sub1}]{
    \includegraphics[width=0.2\linewidth]{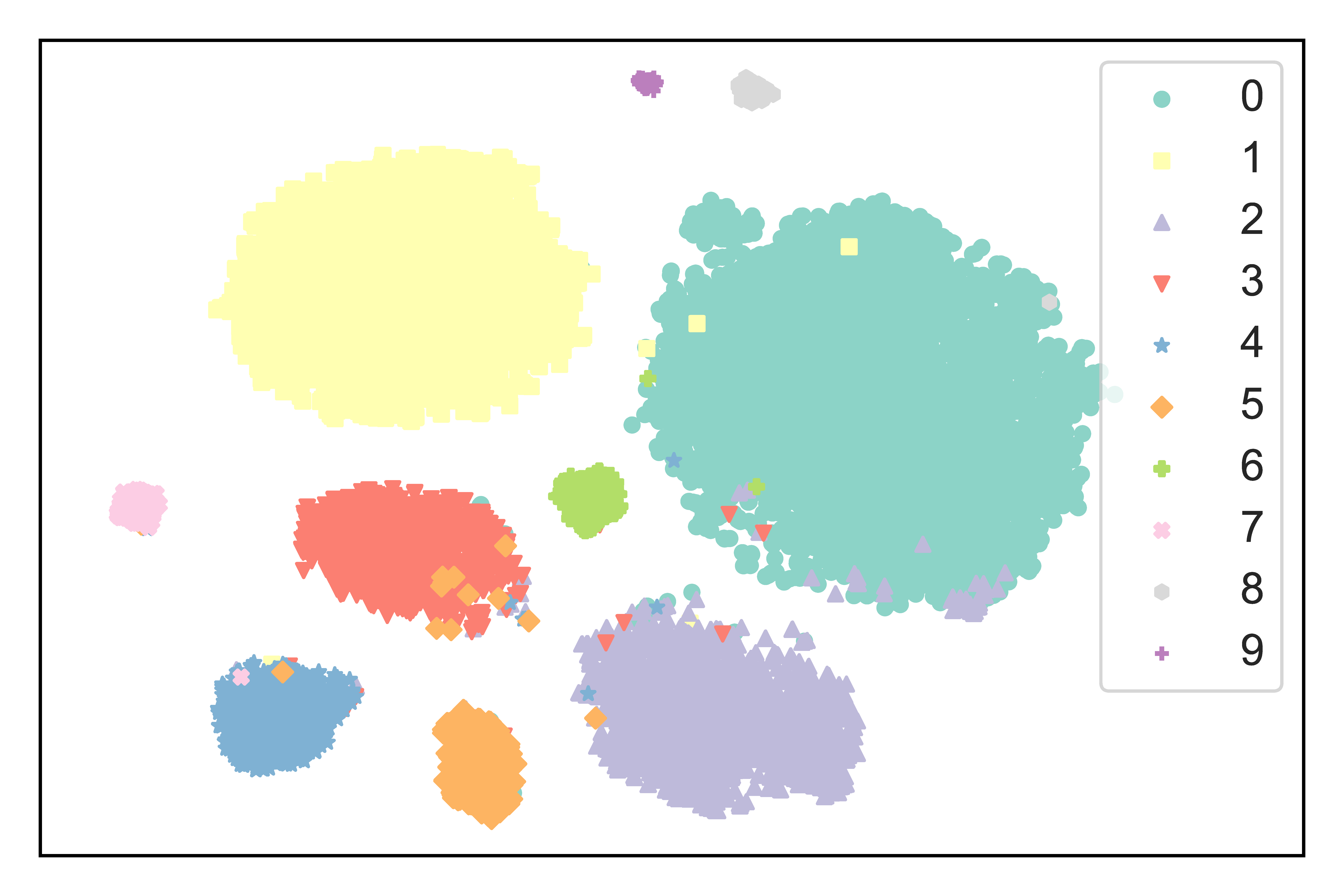}
  } \hspace{-1.2em}
  \subfigure[w. H2TF ($r=1$)\label{appfig:sub2}]{
    \includegraphics[width=0.2\linewidth]{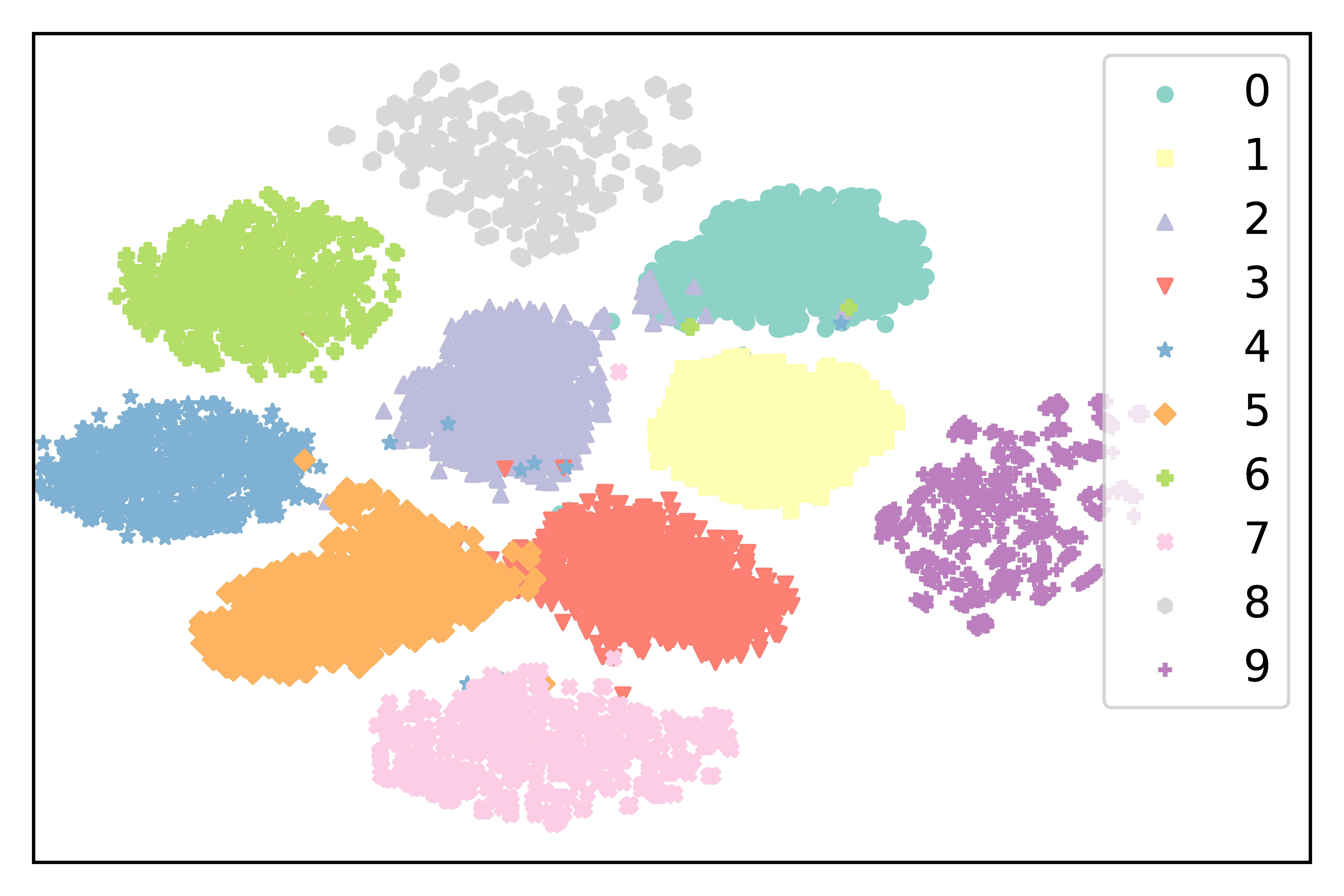}
  } \hspace{-1.2em}
  \subfigure[w. H2TF ($r=0.9$)\label{appfig:sub3}]{
    \includegraphics[width=0.2\linewidth]{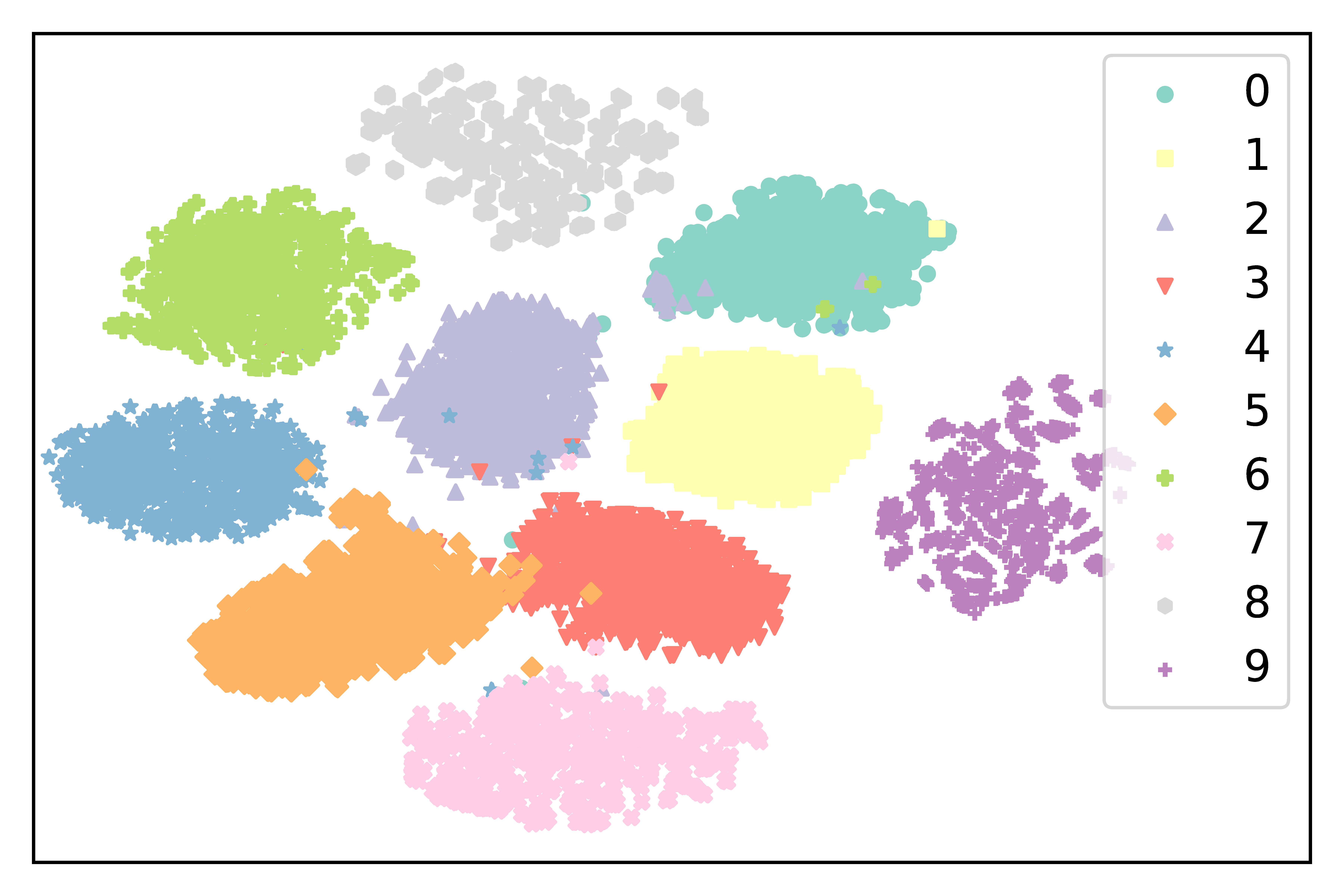}
  } \hspace{-1.2em}
  \subfigure[w. H2TF ($r=0.8$)\label{appfig:sub4}]{
    \includegraphics[width=0.2\linewidth]{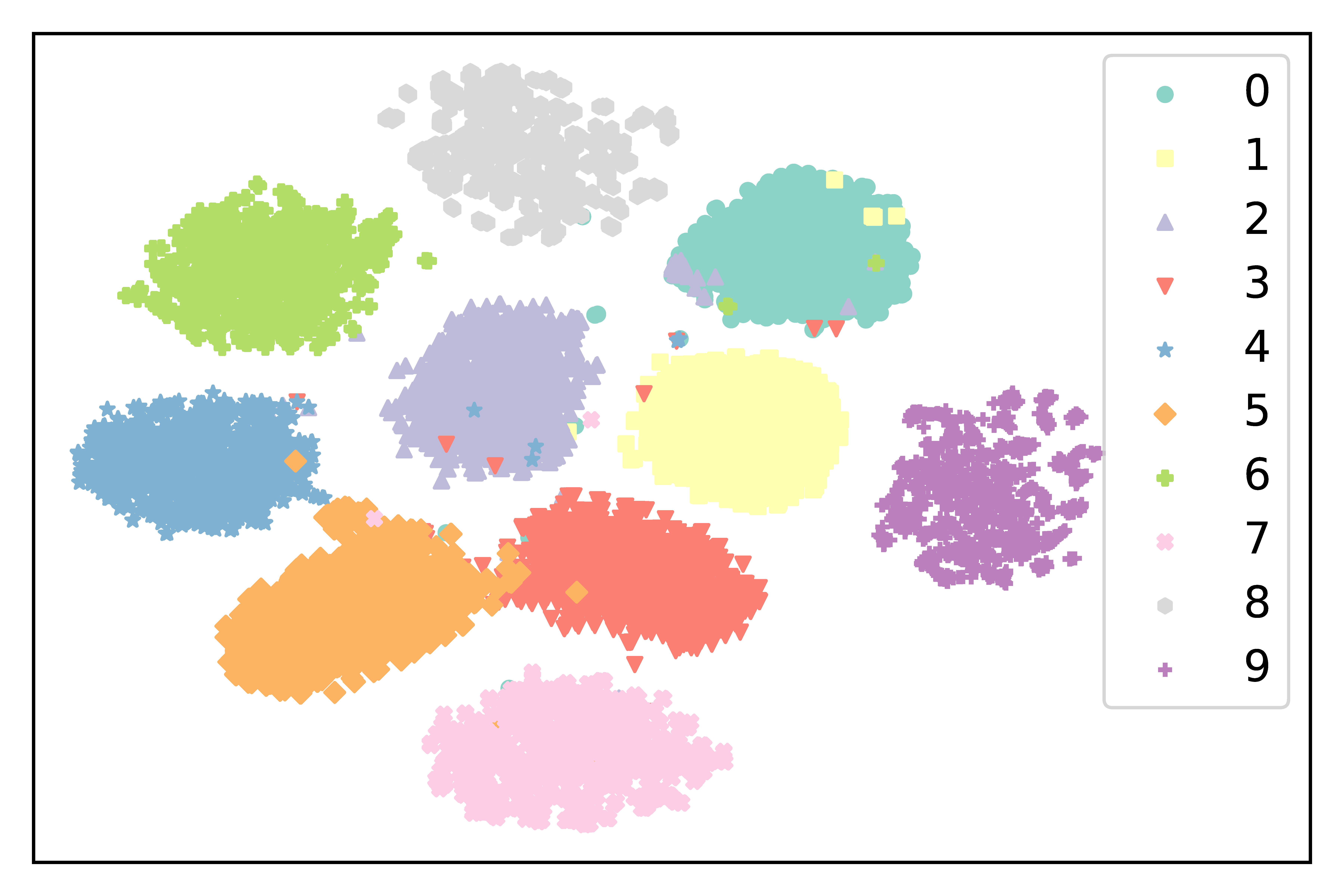}
  } \hspace{-1.2em}
  \subfigure[w. H2TF ($r=0.7$)\label{appfig:sub5}]{
    \includegraphics[width=0.2\linewidth]{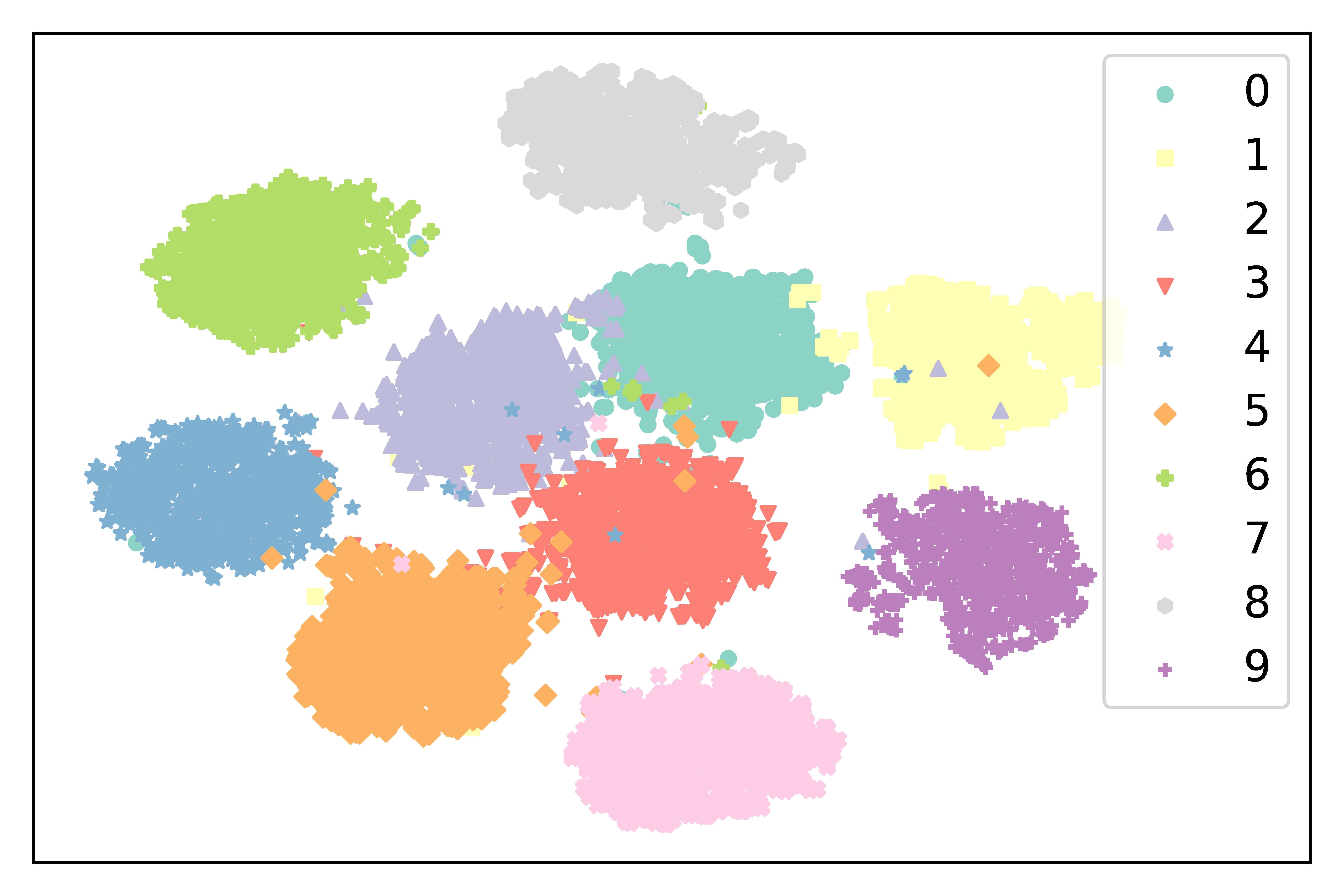}
  }
  \vspace{0.5em}  
  \hspace*{-2.4em}
  \subfigure[w. H2TF ($r=0.6$)\label{appfig:sub6}]{
    \includegraphics[width=0.2\linewidth]{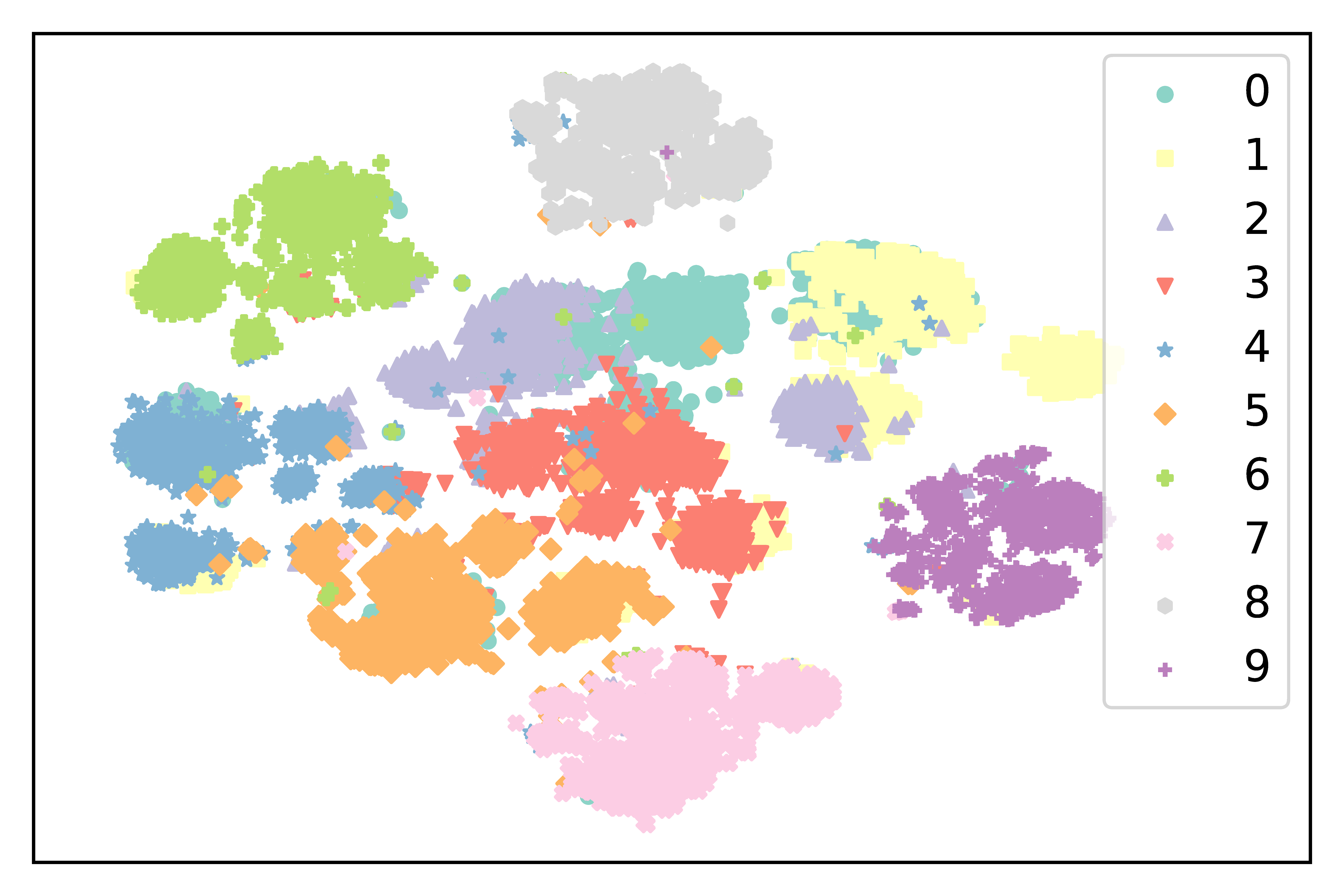}
  } \hspace{-1.2em}
  \subfigure[w. H2TF ($r=0.5$)\label{appfig:sub7}]{
    \includegraphics[width=0.2\linewidth]{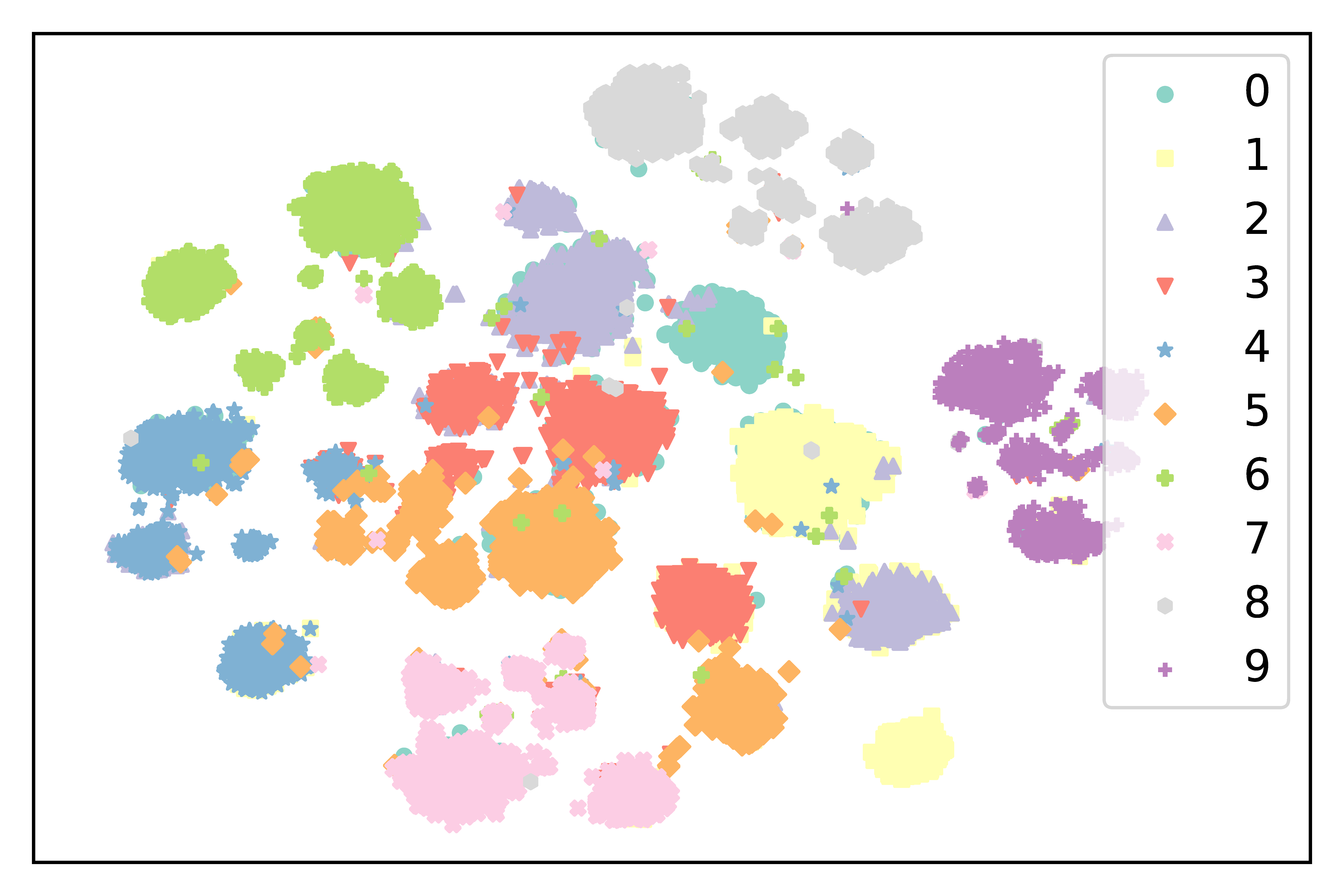}
  } \hspace{-1.2em}
  \subfigure[w. H2TF ($r \propto d_x$)\label{appfig:sub8}]{
    \includegraphics[width=0.2\linewidth]{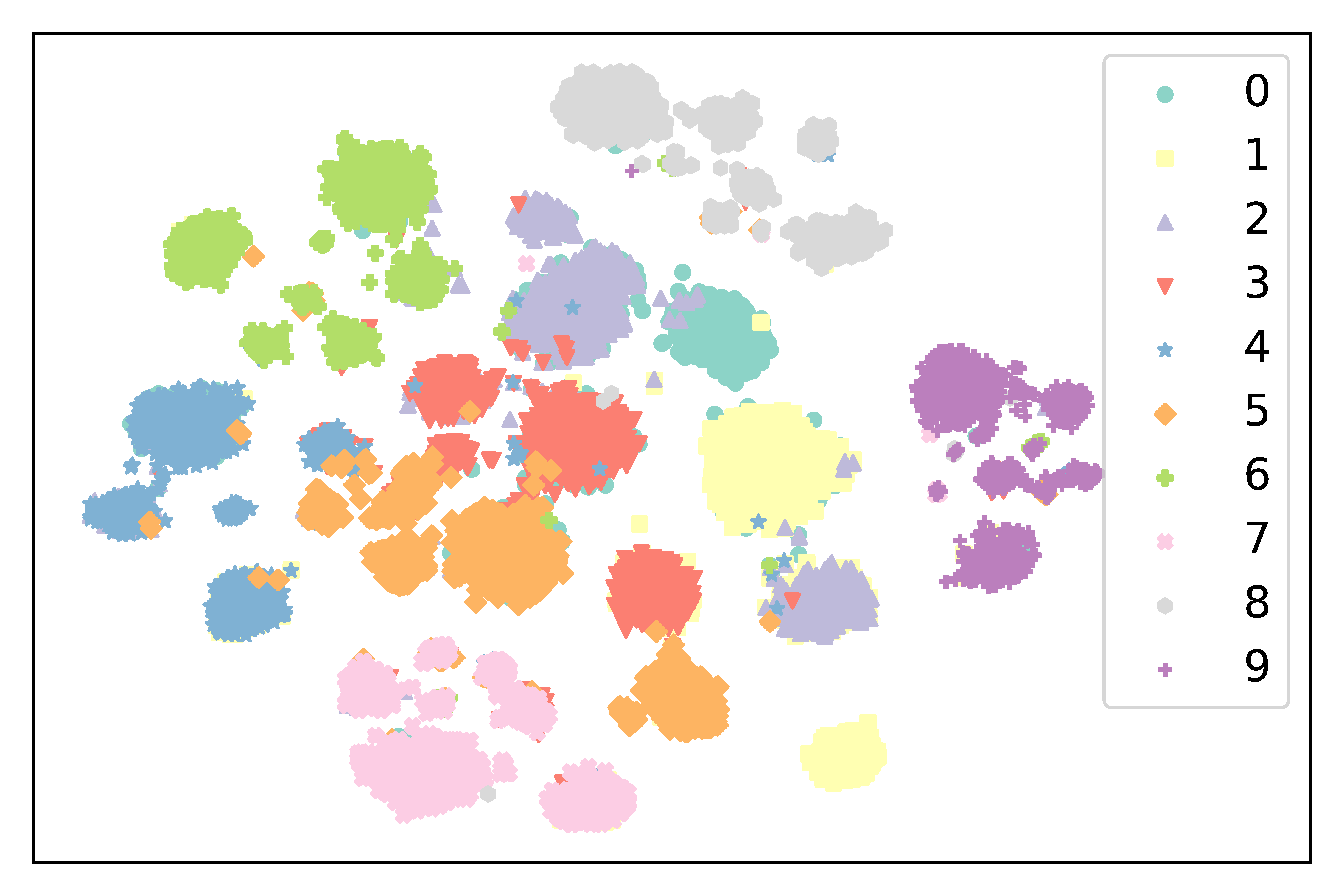} 
    }\hspace{-1.2em}
  \subfigure[w. H2TF ($r=0.4$)\label{appfig:sub9}]{
    \includegraphics[width=0.2\linewidth]{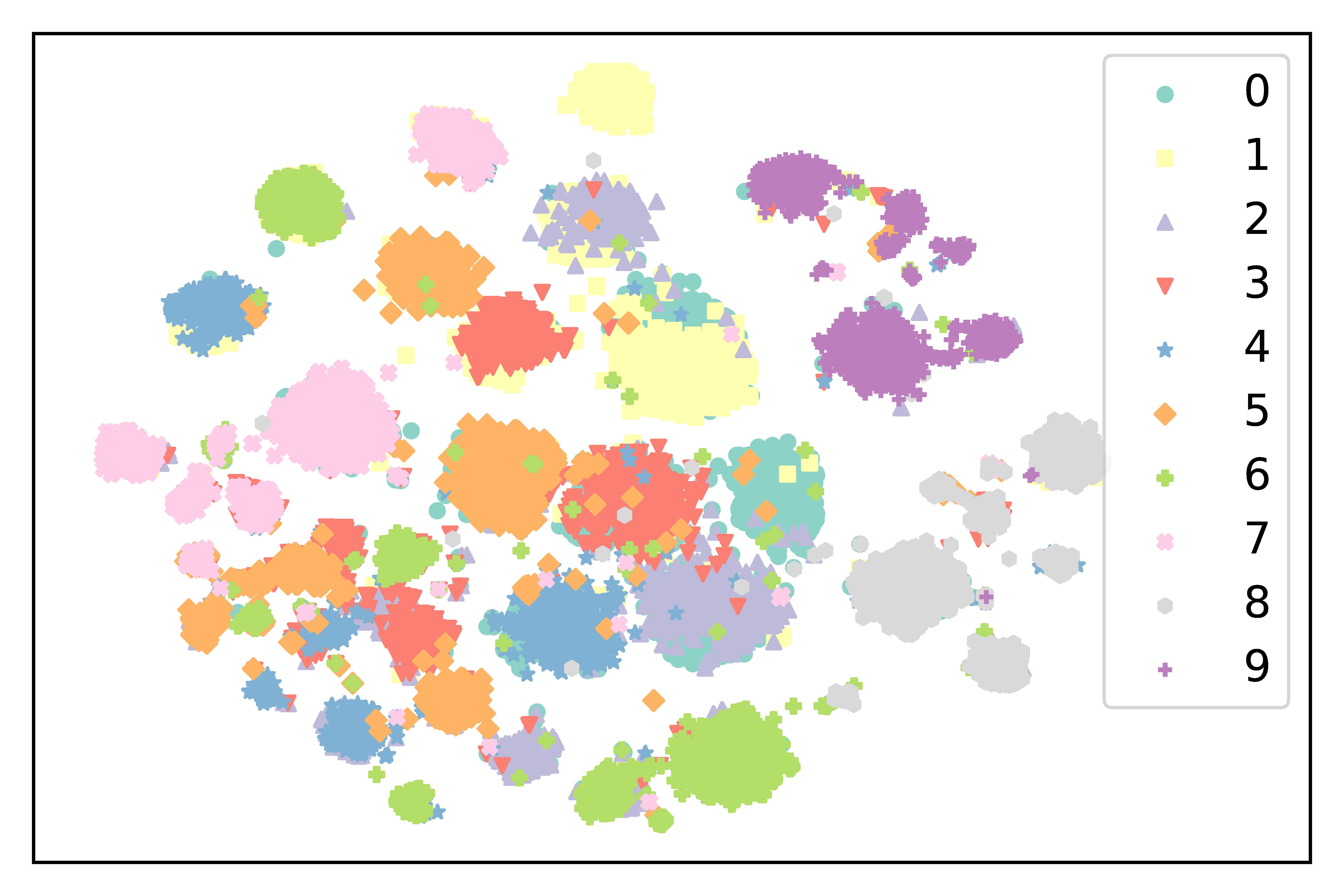}
  } \hspace{-1.2em}
  \subfigure[w. H2TF ($r=0.3$)\label{appfig:sub10}]{
    \includegraphics[width=0.2\linewidth]{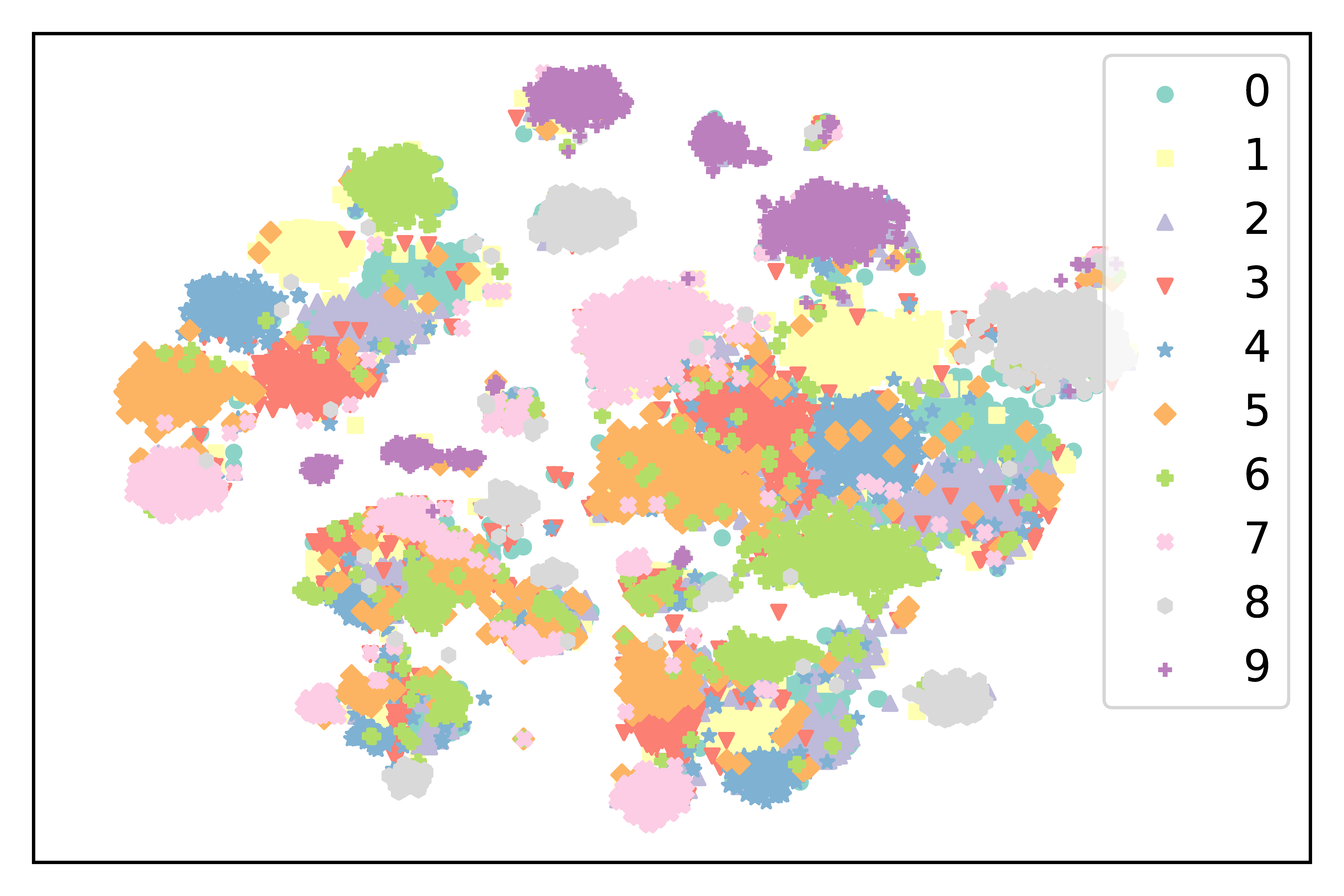}
  } 
  \caption{More t-SNE visualization Results of feature distributions with varying fusion ratios ($r$ in Eq.~\ref{eq:fh2t}) in H2TF strategy} \label{appfig:tsne_PIH2T}
  \vspace{-0.5em}
\end{figure*}

\section{Additional Visualizations of Decision Boundaries for Baseline Methods}
\label{app:boundary}

\begin{figure}[!htpb]
\begin{minipage}[c]{1.\linewidth} 
    \centering  
    \subfigure[Decision boundary by DR]{\label{fig:CE09} 
    \includegraphics[width=.48\linewidth, height=.42\linewidth]{./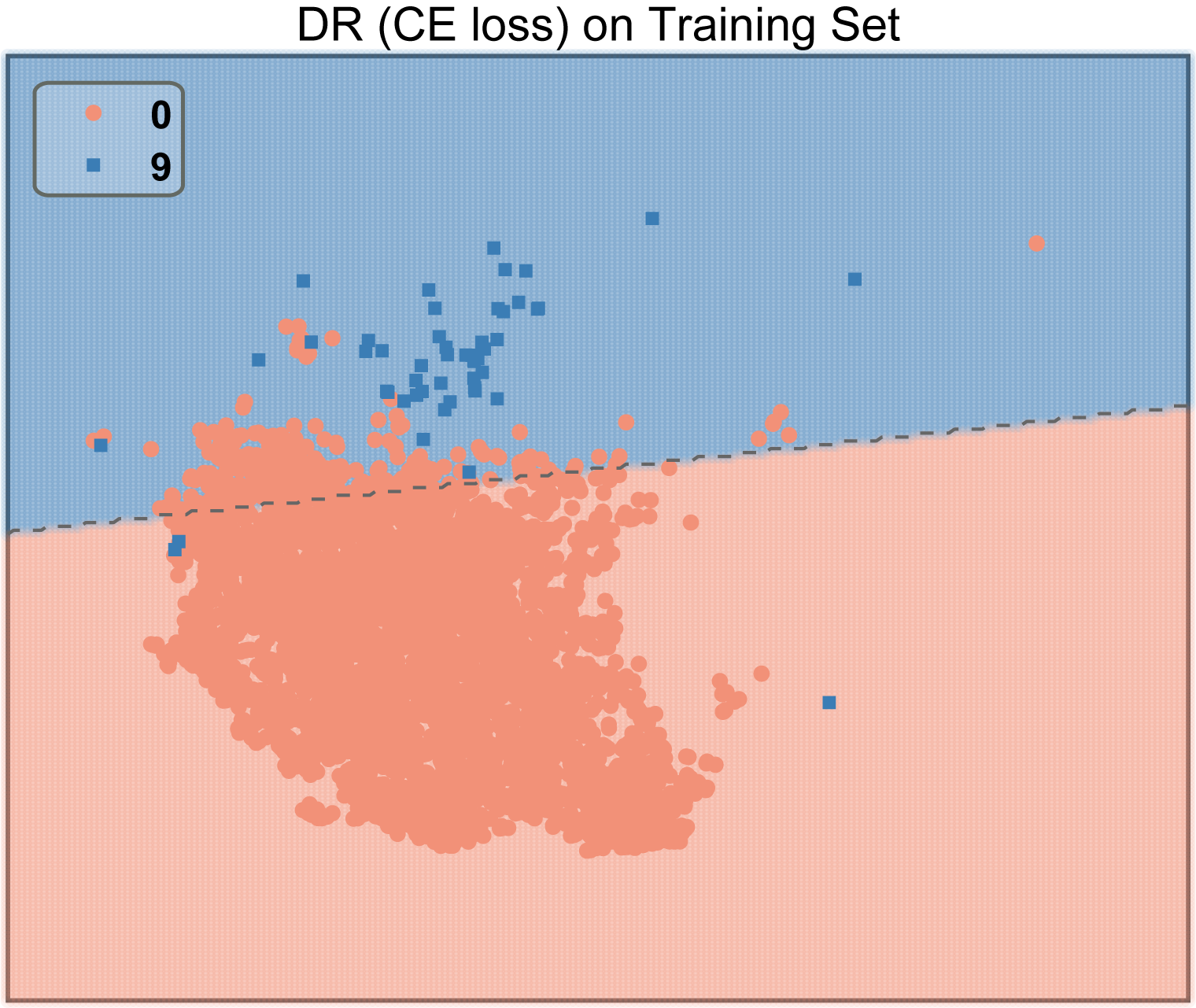} \label{fig:CE-train09} 
    \includegraphics[width=.48\linewidth, height=.42\linewidth]{./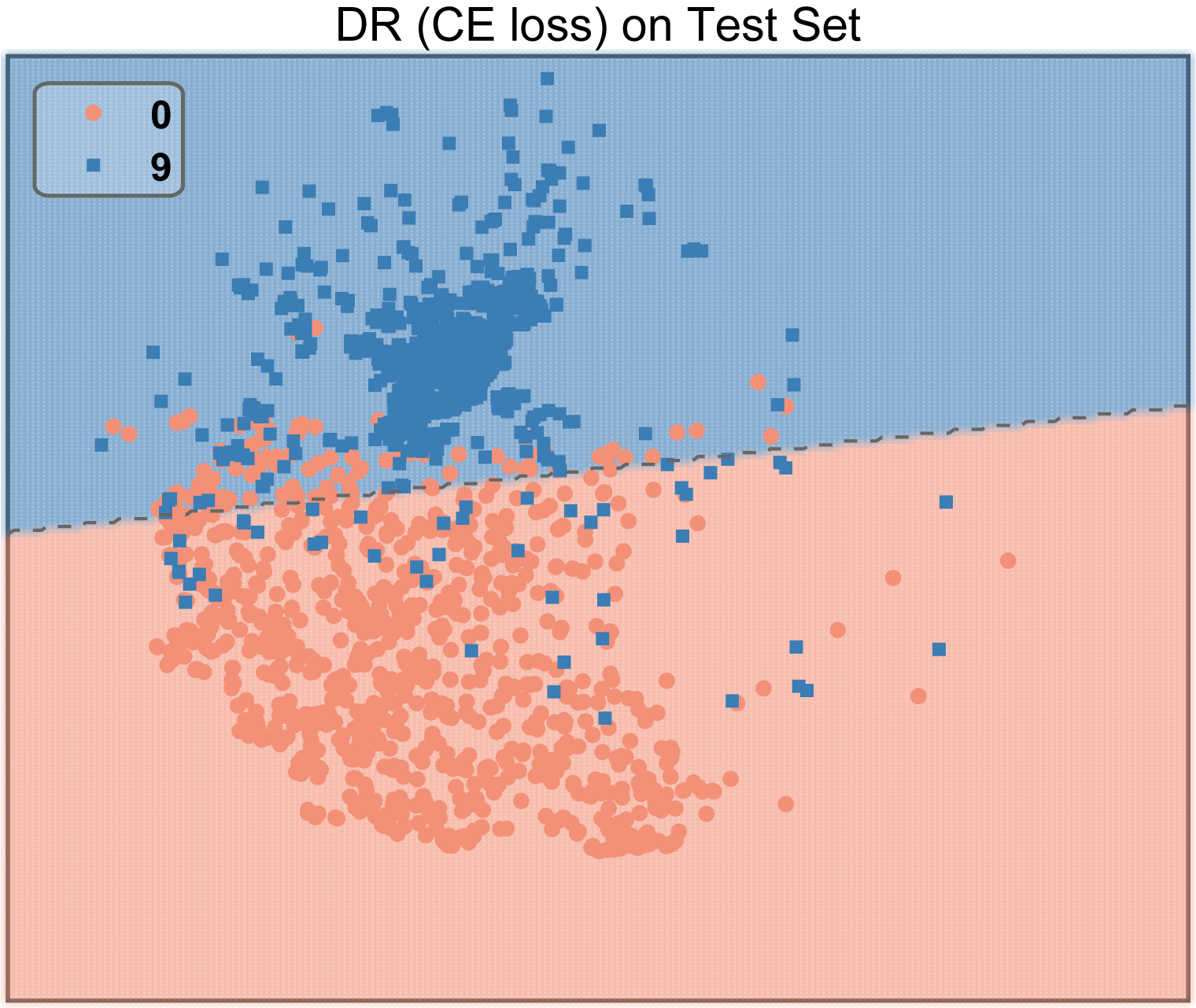} \label{fig:CE-test09}
    }
    \\
    \subfigure[Decision boundary by DR+H2T]{\label{fig:H2T09} 
    \includegraphics[width=.48\linewidth, height=.42\linewidth]{./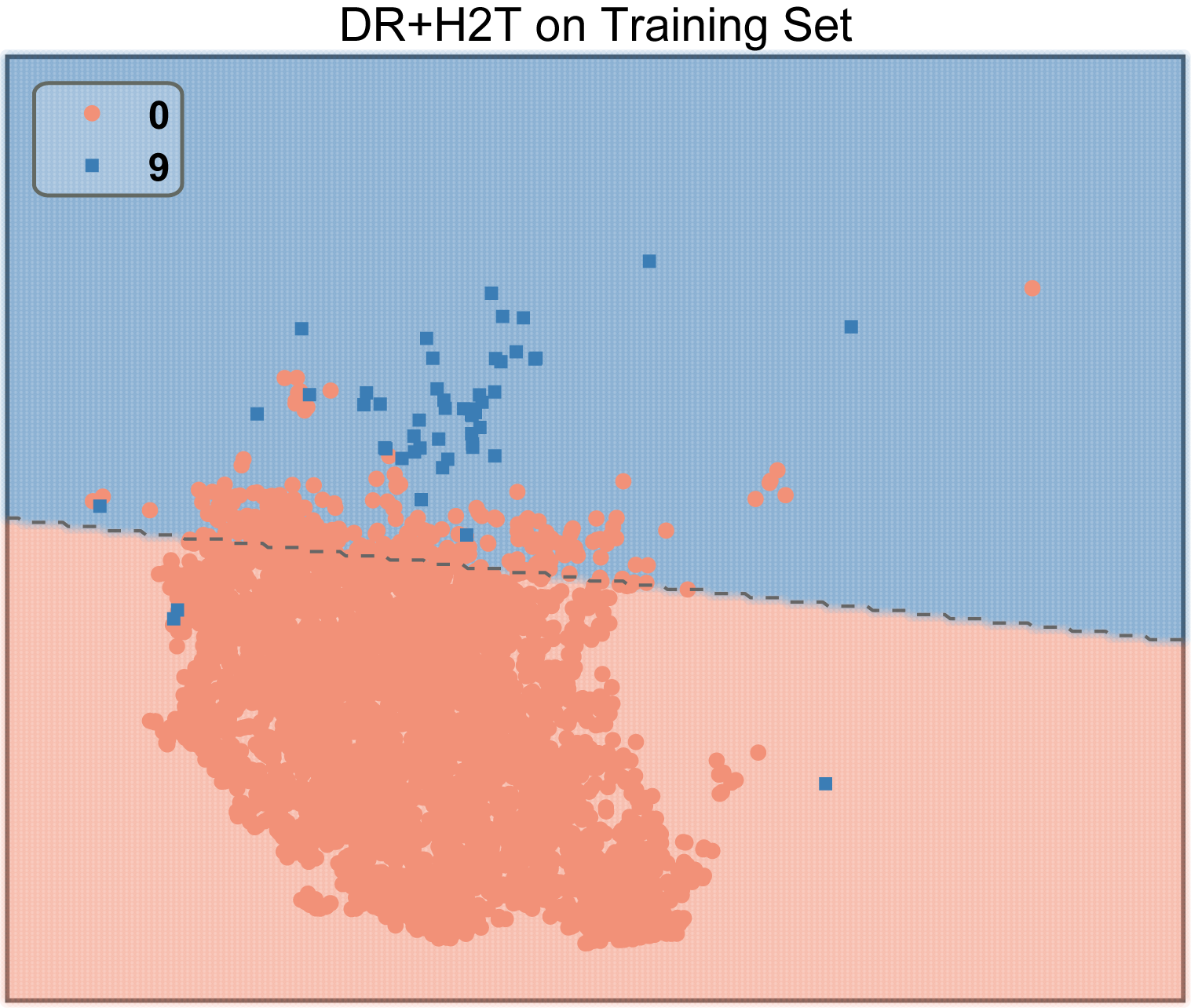} \label{fig:H2T-train09}  
    \includegraphics[width=.48\linewidth, height=.42\linewidth]{./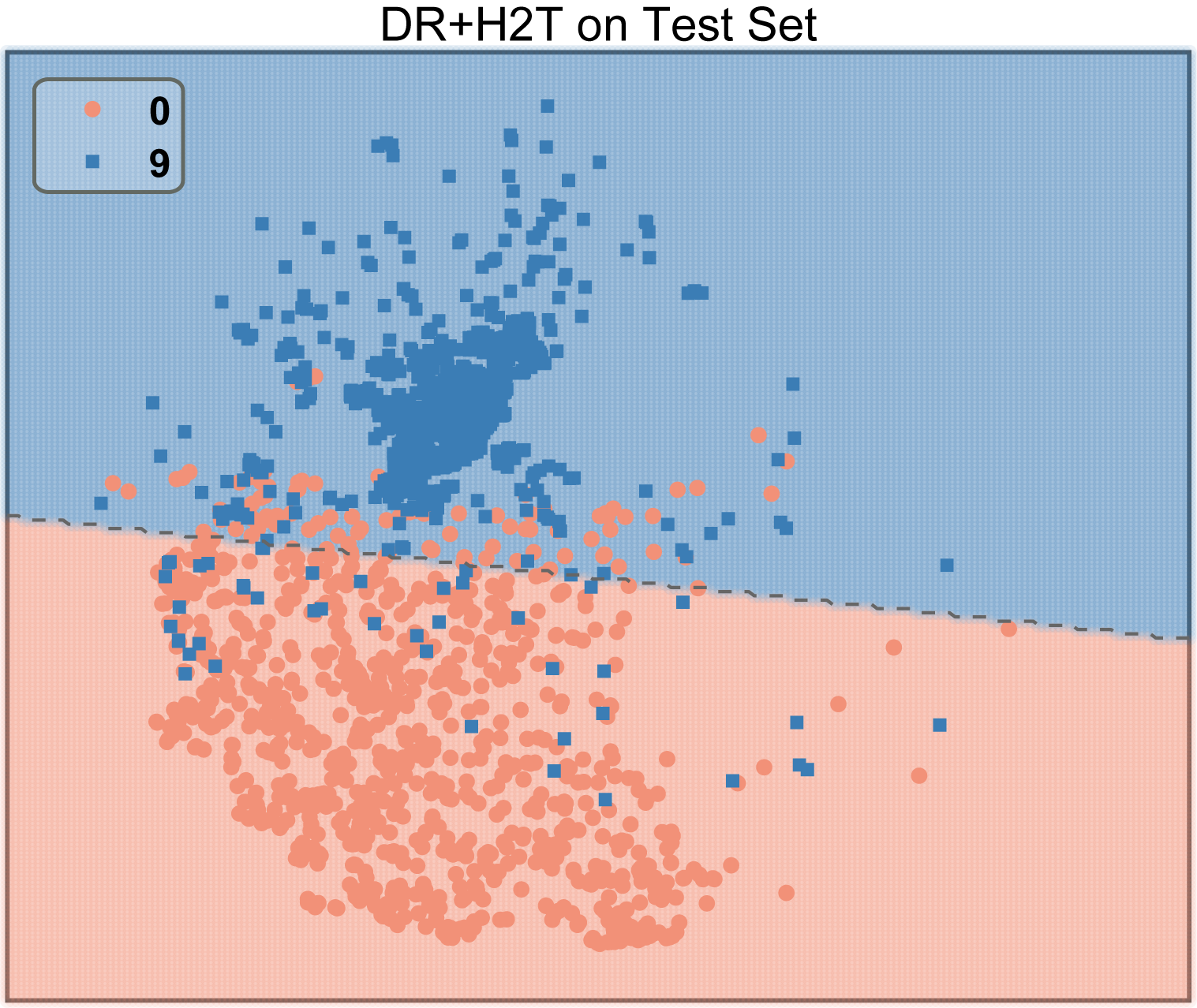} \label{fig:H2T-test09}
    }
\caption{Supplementary visualization of decision boundaries between Class 0 and Class 9 on CIFAR10-LT (imbalance factor 100), corresponding to the discussion in Sec.~\ref{sec:motivation}.}
\label{appfig:boundary09}
\end{minipage}
\end{figure}

H2T does not change the feature distribution, and its strengths and limitations are equally apparent. 
Its primary advantage lies in its simplicity, as it only requires further calibration of the classifier. 
H2T re-optimizes the class boundary distribution within the fixed feature space.
For example, when comparing Fig.~\ref{fig:CE09} and Fig.~\ref{fig:H2T09}, the feature space remains the same, but it is evident that H2T corrects the classification boundary, shifting it in a more optimal direction.

\section{More Feature Visualization for H2TF}
\label{app:vis_tsne_PIH2T}

Fig.~\ref{appfig:tsne_PIH2T} presents additional visualizations of feature distributions under different H2TF fusion configurations. 
We illustrate how varying the fusion ratio affects the representations of head, medium, and tail classes. 
As the fusion ratio decreases, tail-class features become more dispersed, reducing the dominance of head-class features. 
However, overly reducing the self-feature component can introduce semantic noise and local confusion. 
It is important to emphasize that H2TF is applied exclusively during the classifier rectification phase.
During inference, the backbone features are kept fixed, and no feature fusion occurs, thus preserving the learned representations.

\section{Effectiveness of PIF on Balanced Datasets}

\begin{table}[htpb]
\renewcommand{\thefootnote}{\fnsymbol{footnote}}
 \centering  
 \caption{Top-1 classification accuracy (\%) comparison on balanced datasets.}
 \label{tab:cifar_bal}
 \setlength{\tabcolsep}{15pt}
 \renewcommand{\arraystretch}{1.2}
  {
  \begin{tabular}{l| c c }
   \hlinew{1pt}
  Dataset  &CIFAR10 & CIFAR100 \\
  \hline
    \multicolumn{3}{c}{Single model, backbone: ResNet-32} \\
  \hline
  CE & 93.33 & 71.05  \\ 
  CE+PIF  & 93.87  ({\color{purple}{$\uparrow$ 0.21\%}}) & 72.06   ({\color{purple}{$\uparrow$ 1.01\%}})\\ 
  \hlinew{1pt}
 \end{tabular}
 }\vspace{-0.5em}
\end{table}

\label{app:balance_data}
To further validate the effect of PIF, we also conducted experiments on balanced datasets (Table~\ref{tab:cifar_bal}). 
It can be seen that PIF also improves the performance of the model on balanced datasets, demonstrating the effect of the automatic margin introduced by PIF.

\end{document}